\newcommand{\tgat}{TGAT}
\newcommand{\dgf}{DyGFormer}
\newcommand{\yw}[1]{{\color{blue}{YW: #1}}}
\renewcommand{\yw}[1]{\ignorespaces}
\theoremstyle{plain}
\newtheorem{theorem}{Theorem}[section]
\newtheorem{proposition}[theorem]{Proposition}
\theoremstyle{definition}
\newtheorem{definition}[theorem]{Definition}
\theoremstyle{remark}
\title{Between Linear and Sinusoidal: \\Rethinking the Time Encoder in Dynamic Graph Learning}
\author{\name Hsing-Huan Chung \email hhchung@utexas.edu \\
      \addr Department of Electrical and Computer Engineering \\
      University of Texas at Austin
      \AND
      \name Shravan Chaudhari \email schaud35@jh.edu \\
      \addr Department of Computer Science \\
      Johns Hopkins University
      \AND
      \name Xing Han \email xhan56@jhu.edu \\
      \addr Department of Computer Science \\
      Johns Hopkins University
      \AND
      \name Yoav Wald \email yoav.wald@nyu.edu \\
      \addr Center for Data Science \\
      New York University
      \AND
      \name Suchi Saria \email ssaria@cs.jhu.edu \\
      \addr Department of Computer Science \\
      Johns Hopkins University
      \AND
      \name Joydeep Ghosh \email jghosh@utexas.edu \\
      \addr Department of Electrical and Computer Engineering \\
      University of Texas at Austin
      }
\begin{document}

\maketitle

\begin{abstract}
Dynamic graph learning is essential for applications involving temporal networks and requires effective modeling of temporal relationships.
Seminal attention-based models like TGAT and DyGFormer rely on sinusoidal time encoders to capture temporal dependencies between edge events.
Prior work justified sinusoidal encodings because their inner products depend on the time spans between events, which are crucial features for modeling inter-event relations.
However, sinusoidal encodings inherently lose temporal information due to their many-to-one nature and therefore require high dimensions.
In this paper, we rigorously study a simpler alternative: the linear time encoder, which avoids temporal information loss caused by sinusoidal functions and reduces the need for high-dimensional time encoders.
We show that the self-attention mechanism can effectively learn to compute time spans between events from linear time encodings and extract relevant temporal patterns. Through extensive experiments on six dynamic graph datasets, we demonstrate that the linear time encoder improves the performance of TGAT and DyGFormer in most cases. Moreover, the linear time encoder can lead to significant savings in model parameters with minimal performance loss. For example, compared to a 100-dimensional sinusoidal time encoder, TGAT with a 2-dimensional linear time encoder saves \textbf{43\%} of parameters and achieves higher average precision on five datasets.
While both encoders can be used simultaneously, our study highlights the often-overlooked advantages of linear time features in modern dynamic graph models. 
These findings can positively impact the design choices of various dynamic graph learning architectures and eventually benefit temporal network applications such as recommender systems, communication networks, and traffic forecasting. The experimental code is available at: \href{https://github.com/hsinghuan/dg-linear-time.git}{https://github.com/hsinghuan/dg-linear-time.git}.
\end{abstract}

\section{Introduction}
Dynamic graph learning \citep{longa2023graph,yu2023towards,huang2024temporal} has gained significant attention in recent years due to its wide range of applications, including recommender systems \citep{zhang2022dynamic}, traffic forecasting \citep{yu2017spatio}, and anomaly detection \citep{wang2021bipartite}.
A key task in dynamic graph learning is future link prediction \citep{poursafaei2022towards,qin2023temporal}, where the goal is to predict future connections or interactions between nodes in a temporal graph based on historical data.
Temporal features such as the interaction time play an important role in capturing the dynamic nature of these graphs.
For instance, in a temporal person-to-person communication network, imagine an individual who exchanged several emails with Alice this morning but last contacted Bob weeks ago. The next message is far more likely to go to Alice than to Bob, underscoring how the timing of interactions must be captured by dynamic graph models when predicting future links.

One popular approach for encoding temporal information in dynamic graph learning is the sinusoidal time encoder.
Initially introduced in the event sequence modeling literature \citep{xu2019self} to make time encodings compatible with the self-attention mechanism, the sinusoidal time encoder maps timestamps or time differences as periodic functions of their values, represented as multi-dimensional sinusoidal time encodings.
Temporal Graph Attention (\tgat{}) \citep{Xu2020Inductive} introduced the sinusoidal time encoder to dynamic graph learning, and subsequent works \citep{wang2021inductive,cong2023do,yu2023towards} have widely adopted it as the default approach. 
Motivated by the importance of temporal information in dynamic graphs, we revisit the choice of time encoders in this study. 

We examine a simpler alternative: the linear time encoder.
Intuitively, sinusoidal functions are many-to-one mappings, which lead to inevitable temporal information loss, while the linear time encoder avoids it by representing time through one-to-one mappings.
Previous work \citep{xu2019self,Xu2020Inductive} motivated sinusoidal encodings by showing that their inner products depend on the time spans between events, which are essential features for modeling time-sensitive relationships.
However, we prove that self-attention can also compute time spans effectively from linear time encodings.
We demonstrate this through experiments on a synthetic task where such features are required for prediction. 
These observations motivate us to evaluate the linear time encoder in dynamic graph learning.
We conduct extensive experiments on future link prediction tasks across six dynamic graph datasets.
Our results reveal that, out of 24 model-dataset combinations, the linear time encoder outperforms sinusoidal time encoder variants in 19 and 18 cases under random and historical negative sampling, respectively.
The largest performance gains caused by switching from the sinusoidal to linear time encoder on \tgat{} and \dgf{} \citep{yu2023towards} are $22.48$ and $7.28$ increases in average precision (AP) scores under historical negative sampling, respectively.
Additionally, we show that we can reduce the dimensionality of linear time encodings in \tgat{} and save $43\%$ of model parameters, with only a maximum decrease of $2.44$ in AP.
In contrast, sinusoidal time encoding suffers a maximum drop of $8.08$ in AP under the same model size reduction.
We perform similar experiments on reducing the temporal feature channel dimension in \dgf{} \citep{yu2023towards}.
While both sinusoidal and linear time encoders experience similar performance drops on two datasets, the performance of linear time encoders surprisingly improves on two other datasets, with $38\%$ and $34\%$ savings in model parameters, whereas sinusoidal time encoders continue to experience a performance decline.
These results confirm that linear time encoders are not only better performing but also more cost-effective than the widely used sinusoidal time encoders.

In summary, our contribution is threefold. First, we examine the linear time encoder as an alternative to the widely used sinusoidal time encoder, offering a simpler and more efficient method for encoding temporal information in dynamic graph learning. Second, we prove that self-attention can compute time spans between events from linear time encodings and demonstrate it can learn such capabilities through experiments on synthetic datasets. Finally, we present extensive experimental results across six dynamic graph datasets, showing that the linear time encoder outperforms sinusoidal variants in most cases, while also being more cost-effective by reducing model parameters without significantly compromising performance.

\section{Preliminaries}
We revisit the future link prediction problem on continuous-time dynamic graphs and review two seminal attention-based models: Temporal Graph Attention (\tgat{}) \citep{Xu2020Inductive} and \dgf{} \citep{yu2023towards}. Then, we describe the widely-used sinusoidal time encoder. We focus on attention-based models because self-attention is prevalent in mainstream neural architectures for not only text \citep{waswani2017attention} and vision \citep{dosovitskiy2021an} but also graphs \citep{veličković2018graph,dwivedi2021generalization,kim2022pure}. We investigate the two models in specific because \tgat{} introduced the sinusoidal time encoder to work with graph attention network without additional heuristics and DyGFormer is the state-of-the-art transformer-based dynamic graph model. In Section \ref{sec:lin_time}, we motivate the use of linear encodings by proving that self-attention can compute time spans between events from these encodings, making this applicable to both models. 
Additionally, as \tgat{} and DyGFormer are representative graph and sequence-based models, our results apply to both paradigms.

\begin{figure}[!t] 
    \centering
    \includegraphics[width=0.4\linewidth]{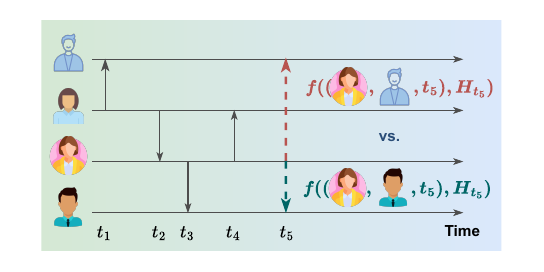}
    \caption{Future link prediction on a continuous-time dynamic graph. At time $t_5$, the link predictor $f$ ranks how likely edges will form given the historical edge events happening at $t_1, \ldots, t_4$.}
    \label{fig:future_linkpred}
\end{figure}

\subsection{Future Link Prediction}
Future link prediction is the problem of predicting whether an edge will appear in a \emph{continuous-time dynamic graph}. 
\begin{definition}[\textit{Continuous-Time Dynamic Graph}]
\label{def:ctdg}
A continuous-time dynamic graph over a time interval $\mathcal{I}_T \coloneq [0,T]$ is an ordered pair $\mathcal{G} = (\mathcal{V}, \mathcal{E})$ where $\mathcal{V}=\{1, \ldots, N\}$ is a set of nodes and $\mathcal{E} \subseteq \{(i,j,t) \in \mathcal{V}^2  \times \mathcal{I}_T\}$ is a set of edges representing the interaction events between nodes where $i$ is the source node, $j$ is the destination node, and $t$ is the time of interaction.
Depending on individual datasets, each node $i \in \mathcal{V}$ may be associated with node attributes $\mathbf{x}_i \in \mathbb{R}^{d_V}$ and each edge $( i,j,t) \in \mathcal{E}$ may also be associated with edge attributes $\mathbf{x}_{i,j}^t \in \mathbb{R}^{d_E}$.
\end{definition}

Future link prediction \citep{poursafaei2022towards,qin2023temporal} involves assigning a score that estimates how likely it is that a target edge $(i,j,t)$ will form at time $t$ given the history of interactions up to that point.
We call an edge $(i,j,t)$ positive if $(i,j,t)\in{\mathcal{E}}$ and negative otherwise.
Evaluation for future link prediction typically treats the task as a binary classification problem between positive and negative edges, using ranking metrics such as Area Under the Curve (AUC) or Average Precision (AP). 
We expand on different ways of sampling negative edges in \cref{subsec:negative_edge}.

\subsection{Attention-Based Dynamic Graph Learning Models}
To estimate how likely a target edge $(i,j,t)$ is to occur, a typical dynamic graph learning model computes the temporal representations of node $i$ and $j$ according to their first-order or higher-order historical edge interactions. 
Using node $i$ as an example, the first-order historical edge interactions involving node $i$ are denoted as $\mathcal{S}_i^t \coloneq \{(i,v,t')|t'<t, (i,v,t') \in \mathcal{E}\} \cup \{(u,i,t')|t'<t, (u,i,t') \in \mathcal{E}\}$.
The first-order temporal neighbors of node $i$ are denoted as $\mathcal{N}_i^t \coloneq \{(v,t')|(i,v,t')\in \mathcal{S}_i^t \vee (v,i,t') \in \mathcal{S}_i^t \}$.
The temporal representations of node $i$ and $j$ are later concatenated and forwarded through a multi-layer perceptron (MLP) to obtain a final score on how likely $(i,j,t)$ will happen.
Below, we briefly introduce two representative attention-based dynamic graph learning models, \tgat{} and \dgf{}.
For a more detailed review, please refer to Appendix \ref{app:model_details}.

TGAT is a graph-based model that recursively aggregates higher-order temporal neighbor information through multiple layers.
The initial temporal representation of node $i$, $\Tilde{\mathbf{h}}_i^{(0)}(t)$, is its raw attributes $\mathbf{x}_i$.
To compute the $l$'th layer temporal representation of node $i$, $\Tilde{\mathbf{h}}_i^{(l)}(t)$, from the $l-1$'th layer, TGAT uses self-attention where node $i$ acts as the query and its temporal neighbors act as the keys and values.
Let $\Phi:\mathcal{I}_T\rightarrow\mathbb{R}^{d_T}$ denote a time encoder that maps time values to $d_T$-dimensional time encodings.
For each node $v$ and interaction time $t'$ involved, the input features to self-attention consists of the $l-1$'th layer temporal node representation $\Tilde{\mathbf{h}}_v^{(l-1)}(t')$ and a time encoding that encodes the difference between $t'$ and the target time $t$, i.e. $\Phi(t-t')$.
After aggregating information from temporal neighbors into a vector, it is concatenated with $\mathbf{x}_i$ and forwarded through an MLP to get $\Tilde{\mathbf{h}}_i^{(l)}(t)$.

DyGFormer is a sequence-based model that considers the first-order temporal neighbors of a node when computing its representation.
To compute the temporal representation of node $i$ at time $t$, the target edge and the historical edge interactions involving $i$, $\mathcal{S}_i^t$, are retrieved and sorted by the timestamps in ascending order.
The features corresponding to each edge are collected, including neighbor node attributes, edge attributes, neighbor co-occurrence features, and time encodings $\Phi(t-t')$ where $t'$ is the edge timestamp.
The target edge attributes are filled with a zero vector since the target edge has not occurred.
Eventually, the four types of features are each linearly transformed into a channel dimension of $d_{\text{ch}}$ and concatenated together as the input sequence matrix to the transformer $\mathbf{X}_i^t$.
Later, the sequence matrices $\mathbf{X}_i^t$ and $\mathbf{X}_j^t$ corresponding to the two nodes of the target edge $(i,j,t)$ are concatenated in the sequence dimension and forwarded through an $L$-layer transformer to output $\mathbf{Z}^{(L)}(t)$, which contains edge interaction representations of both nodes. 
Finally, DyGFormer applies mean pooling and a linear transformation to the positions corresponding to $i$ and $j$ in  $\mathbf{Z}^{(L)}(t)$ to get their respective temporal representations.

\subsection{Sinusoidal Time Encoder}
Time encodings are essential for capturing the temporal relationships between events.
Importantly, the time spans between events matter more than their absolute timestamps.
For example, the influence of an edge interaction at time $t_1$ to another one at $t_2$ should depend on $t_1 - t_2$.
Motivated by this intuition, \citet{xu2019self} propose a time encoder that ensures the inner product between encodings reflects the time span.
They achieve this with the sinusoidal encoder
$\Phi(t) = \sqrt{\frac{1}{d_T}} [ \cos(\omega_1 t) \, \sin(\omega_1 t) \, \ldots \, \cos(\omega_{d_T} t) \, \sin(\omega_{d_T} t) ]^\top$ with learnable parameters $\omega_1, \ldots, \omega_{d_T} \in \mathbb{R}$, which satisfies the desired property: $\langle \Phi(t_1), \Phi(t_2) \rangle = \frac{1}{d_T}\sum_{i=1}^{d_T} \cos(\omega_i (t_1 - t_2))$.
The sinusoidal time encoder is then introduced to dynamic graph learning in TGAT \citep{Xu2020Inductive}.
Several subsequent models such as GraphMixer \citep{cong2023do} and DyGFormer \citep{yu2023towards} follow this choice.
Their implementations use all cosine functions with phases to encode the difference between the target time $t$ and an edge event time $t'$ for the edge: $\Phi(t-t') = [ \cos(\omega_1 (t-t') + \phi_1) \, \ldots \, \cos(\omega_{d_T} (t-t') + \phi_{d_T}) ]^\top$.
This is a valid design choice since cosines are as expressive as sines, and since only the time span between events matters, the time reference point can be shifted without affecting the span: $|t_1 - t_2| = |(t - t_1) - (t - t_2)|$.
While GraphMixer uses fixed frequency and phase parameters, TGAT and DyGFormer make them learnable, so we focus on learnable time encoders in this study.


\section{Do We Need Sinusoidal Time Encodings?} \label{sec:lin_time}
\subsection{Linear Time Encoder}
Although the inner products of sinusoidal encodings can be expressed as a function of the time differences, both the encodings and the inner products are many-to-one mappings from time values to a fixed range $[-1,1]$, which inevitably loses information.
We consider a simple linear time encoder $\Phi: \mathcal{I}_T \rightarrow \mathbb{R}^{d_T}$, to encode the difference between the target time $t$ and an edge event time $t'$, $\Delta t' \coloneq t - t'$:
\begin{align}
    \Phi(\Delta t') &= \begin{bmatrix}
        w_1 \Delta t' + b_1 & \ldots & w_{d_T} \Delta t' + b_{d_T} 
    \end{bmatrix}^\top
\end{align}
where $w_1, \ldots, w_{d_T} \in \mathbb{R}$ and $b_1, \ldots, b_{d_T} \in \mathbb{R}$ are learnable weights and biases parameters.
In practice, we encode the standardized time difference value $(\Delta t' - \mu_{\Delta t'})/\sigma_{\Delta t'}$ where $\mu_{\Delta t'}$ and $\sigma_{\Delta t'}$ are the empirical mean and standard deviation of observed time differences for numerical stability and optimization efficiency purposes.
Encoding standardized time differences will still result in time encodings linear to the original time differences.
Unlike sinusoidal time encodings, each dimension $i$ of a linear time encoding is a one-to-one mapping from $\Delta t'$ as long as $w_i \neq 0$.
On the other hand, linear time encodings do not satisfy the inner product property that the sinusoidal time encodings do: $\langle \Phi(t_1), \Phi(t_2)\rangle = \psi(t_1 - t_2)$.
Nevertheless, we show that self-attention can compute the time spans between events from linear time encodings in the following proposition.
\begin{proposition}\label{prop:timespan}
    Let $\boldsymbol{x} = \{(\mathbf{x}_1, t_1), \ldots , (\mathbf{x}_M, t_M)\}$ be a set of events where each event $m$ contains a feature vector $\mathbf{x}_m \in \mathbb{R}^d$ and a timestamp $t_m \in \mathcal{I}_T$.
    Let $t$ be the target time.
    There exists a linear time encoder $\Phi: \mathcal{I}_T \rightarrow \mathbb{R}^{d_T}$, a query weight matrix $\mathbf{W}_Q \in \mathbb{R}^{(d_T+d) \times d_h}$, and a key weight matrix $\mathbf{W}_K \in \mathbb{R}^{(d_T+d) \times d_h}$ that can factor the time span $t_m - t_n$ between any two events $m,n$ into the attention score: 
    \begin{align*}
        &\mathbf{q}_m =
        \mathbf{W}_Q^\top
        \begin{bmatrix}
            \Phi(t - t_m) \\
            \mathbf{x}_m  \\
        \end{bmatrix}, \,
        \mathbf{k}_n =
        \mathbf{W}_K^\top
        \begin{bmatrix}
            \Phi(t - t_n) \\
            \mathbf{x}_n  \\
        \end{bmatrix} \\
        & \langle \mathbf{q}_m, \mathbf{k}_n \rangle = h(t_m - t_n) + g(\mathbf{q}_m, \mathbf{k}_n)
    \end{align*}
    where $h: \mathbb{R} \rightarrow \mathbb{R}$ models the patterns relevant to the time span and $g: \mathbb{R}^{d_T+d} \times \mathbb{R}^{d_T+d} \rightarrow \mathbb{R}$ models other patterns between the events.
\end{proposition}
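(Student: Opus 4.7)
My plan is to prove the proposition constructively, by exhibiting a particular linear time encoder and a particular pair of weight matrices that realize the claimed decomposition. The key observation driving the construction is algebraic: because $\Phi$ is linear, each encoded time contains $t - t_m$ as a term, and the shared anchor $t$ cancels when a query from event $m$ meets a key from event $n$, leaving $t_m - t_n$. The role of the weight matrices is simply to route the linear and constant components so that this cancellation happens in the inner product.

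Concretely, I will take $d_T = 2$ and set $\Phi(s) = [\,s,\ 1\,]^\top$ (i.e., $w_1=1, b_1=0, w_2=0, b_2=1$). The augmented input vectors are then $\mathbf{u}_m = [\,t-t_m,\ 1,\ \mathbf{x}_m^\top\,]^\top \in \mathbb{R}^{2+d}$. I will design $\mathbf{W}_Q, \mathbf{W}_K \in \mathbb{R}^{(2+d)\times d_h}$ in block form, with $d_h \geq 2$. The first two columns of $\mathbf{W}_Q$ will be chosen so that $\mathbf{q}_m^{(1)} = t-t_m$ and $\mathbf{q}_m^{(2)} = 1$, and the first two columns of $\mathbf{W}_K$ will be chosen so that $\mathbf{k}_n^{(1)} = -1$ and $\mathbf{k}_n^{(2)} = t-t_n$; the remaining columns will be allowed to act arbitrarily on the $\mathbf{x}$ sub-block and zero on the time sub-block. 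This block structure ensures that the time and feature contributions to the inner product decouple.

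The verification is then a one-line calculation: the first two coordinates contribute $\mathbf{q}_m^{(1)}\mathbf{k}_n^{(1)} + \mathbf{q}_m^{(2)}\mathbf{k}_n^{(2)} = -(t-t_m) + (t-t_n) = t_m - t_n$, while the remaining coordinates produce some function only of $\mathbf{x}_m$ and $\mathbf{x}_n$ (via the bottom blocks of $\mathbf{W}_Q$ and $\mathbf{W}_K$). Defining $h$ as the identity on $\mathbb{R}$ and $g$ as the contribution of the remaining coordinates gives the decomposition $\langle \mathbf{q}_m,\mathbf{k}_n\rangle = h(t_m - t_n) + g(\mathbf{q}_m,\mathbf{k}_n)$ stated in the proposition.

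There is no real obstacle in this proof; the statement is an existence claim and the construction above is essentially forced by the algebra. The only subtlety worth flagging explicitly is the role of the constant coordinate $b_2 = 1$: without it one cannot simultaneously extract $t-t_m$ from the query side and $t-t_n$ from the key side in a way that lets the unwanted $t$ terms cancel, so the proof quietly highlights why a linear encoder with at least two dimensions (one scale and one bias) suffices, while a one-dimensional purely linear encoding without bias would not.
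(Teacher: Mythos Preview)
Your proposal is correct and essentially identical to the paper's own proof: both construct a two-dimensional linear encoder with one slope coordinate and one constant coordinate ($b_2=1$), then use the first two columns of $\mathbf{W}_Q$ and $\mathbf{W}_K$ to route these so that the inner product yields $(t-t_m)-(t-t_n)=t_m-t_n$ while the remaining columns act on the feature block to produce $g$. The only cosmetic difference is that the paper places the $-1$ in $\mathbf{W}_Q$ rather than $\mathbf{W}_K$ and leaves $w_1,b_1$ general (obtaining $h(s)=-w_1 s$ instead of $h=\mathrm{id}$), but the mechanism and the key insight about the constant coordinate are the same.
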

\begin{proof}
    Our proof provides a construction of $\Phi$, $\mathbf{W}_Q$, and $\mathbf{W}_K$ that only requires specifying two dimensions of the parameters.
    Please refer to Appendix \ref{app:prop_proof} for the complete proof.
\end{proof}

Proposition \ref{prop:timespan} shows that there exists a particular combination of a linear time encoder and self-attention weights that computes the time spans.
To understand whether such a combination can be learned from data, we design a simple but representative synthetic task of event sequence classification and test whether a linear time encoder and a transformer can successfully learn the task.

\subsection{Synthetic Task: Event Sequence Classification}
\subsubsection{Data-Generating Process}
Consider sequence-label pairs with the form of $(\{(x_1,t_1), \ldots, (x_M,t_M), t\}, y)$ where $x_m$ and $t_m$ are the feature value and timestamp of event $m$, $t$ is the target time, and $y$ is the sequence label.
The data-generating process is as follows:
\begin{align}
    x_m &= \begin{cases}
      1,  & \text{with probability}\ 0.5 \\
      -1, & \text{otherwise}
    \end{cases} \notag \\
    t_1 &\sim \text{Exp}(\lambda) \notag \\
    \Delta t_m &\sim \text{Exp}(\lambda), \, t_m = t_{m-1} + \Delta t_m \text{ for } m = 2, \ldots, M \notag \\
    \Delta t &\sim \text{Exp}(\lambda), \, t = t_M + \Delta t \notag \\
    \epsilon &\sim \mathcal{N}(0,0.01) \notag \\
    y &= \mathbbm{1}[\sum_{m=1}^M \exp(-\alpha(t - t_m)) x_m + \epsilon > 0] \label{eq:synthetic_label_fn}
\end{align}
where the intensity rate $\lambda$ and exponential decay rate $\alpha$ are the parameters. 
The labeling function resonates with the assumption that the more recent a historical edge event takes time, the more influence it has on the current state of a node.

\subsubsection{Event Sequence Classification With Transformers}\label{subsubsec:synthetic_transformer}
We encode the event sequence with an $L$-layer transformer into a sequence representation and classify it with a linear head.
The input $\mathbf{X} \in \mathbb{R}^{(M + 1) \times (d_T+1)}$ is organized as:
\begin{align*}
    \mathbf{X} = \begin{bmatrix}
        \Phi(t - t_1)^\top & x_1 \\
        \vdots & \vdots \\
        \Phi(t - t_M)^\top & x_M \\
        \Phi(0)^\top & 0 \\
    \end{bmatrix}
\end{align*}
We follow the transformer layer implementation in DyGFormer as reviewed in Appendix \ref{app:model_details}.
Let $d_h$ denote the attention dimension and $\mathbf{Z}^{(l)} \in \mathbb{R}^{(M+1) \times d_h}$ denote the $l$'th layer sequence representation. 
We experiment with full self-attention and autoregressive attention.
In full self-attention, we follow \dgf{} by averaging $\mathbf{Z}^{(L)}$ over the sequence dimension to obtain the sequence representation: $\mathbf{z} = \frac{1}{M+1} {\mathbf{Z}^{(L)}}^\top \mathbf{1}_{M+1}$.
For autoregressive attention, we apply a causal mask 
that prevents an event from aggregating information about any future events.
Following the convention of autoregressive transformers \citep{radford2018improving}, we take the last ``token'' representation as the sequence representation: $\mathbf{z} = {\mathbf{Z}^{(L)}}^\top \mathbf{e}_{M+1}$.
After getting the sequence representation, we apply a linear head for binary classification. 
We train the transformer and linear head by minimizing the binary cross entropy loss between the prediction and $y$.

\subsubsection{Experiments}\label{subsubsec:synthetic_exp}
\begin{figure}[!t] 
    \centering
    \begin{subfigure}{0.23\columnwidth}
        \centering
        \includegraphics[width=\linewidth]{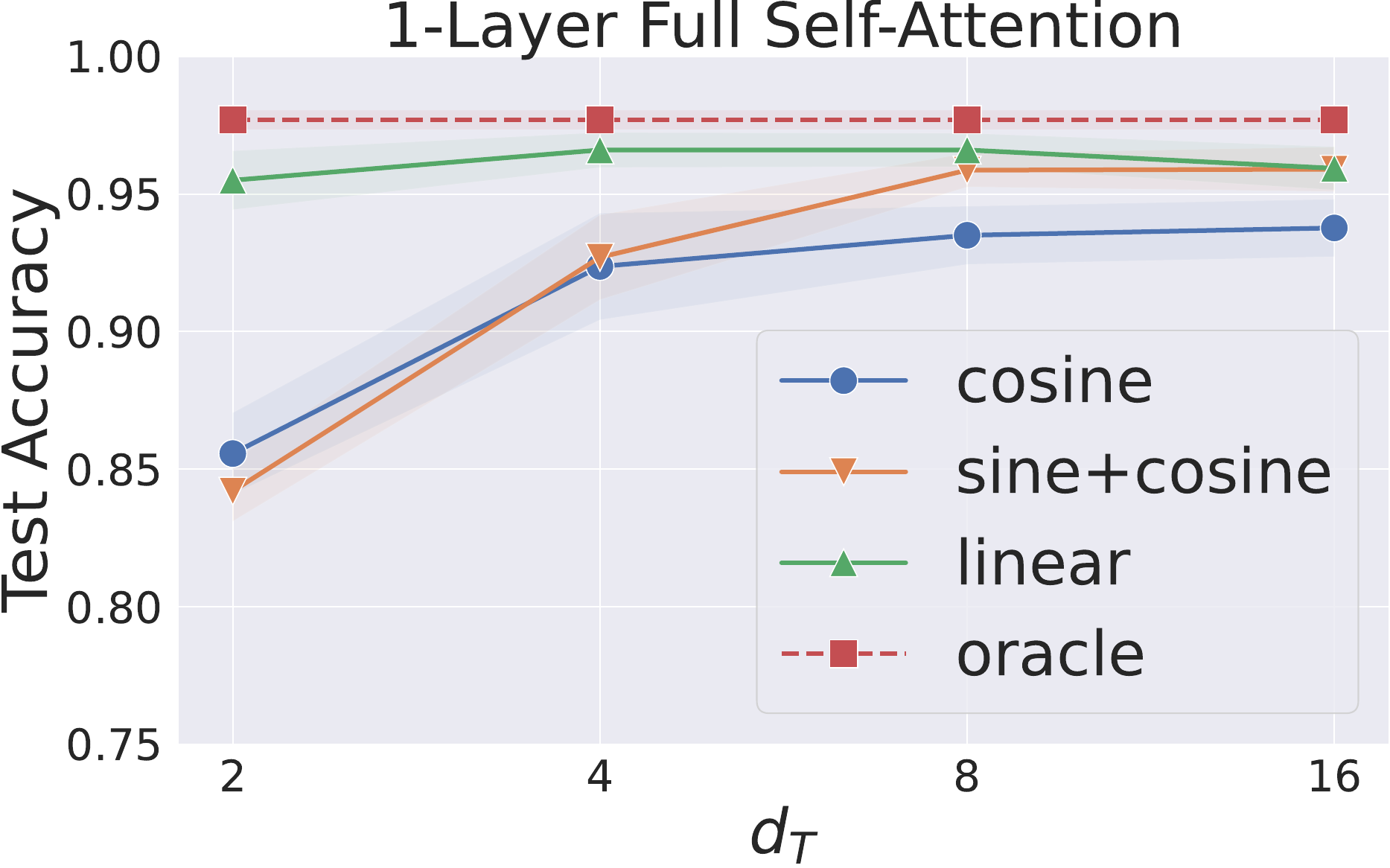}
    \end{subfigure}
    \begin{subfigure}{0.22\columnwidth}
        \centering
        \includegraphics[width=\linewidth]{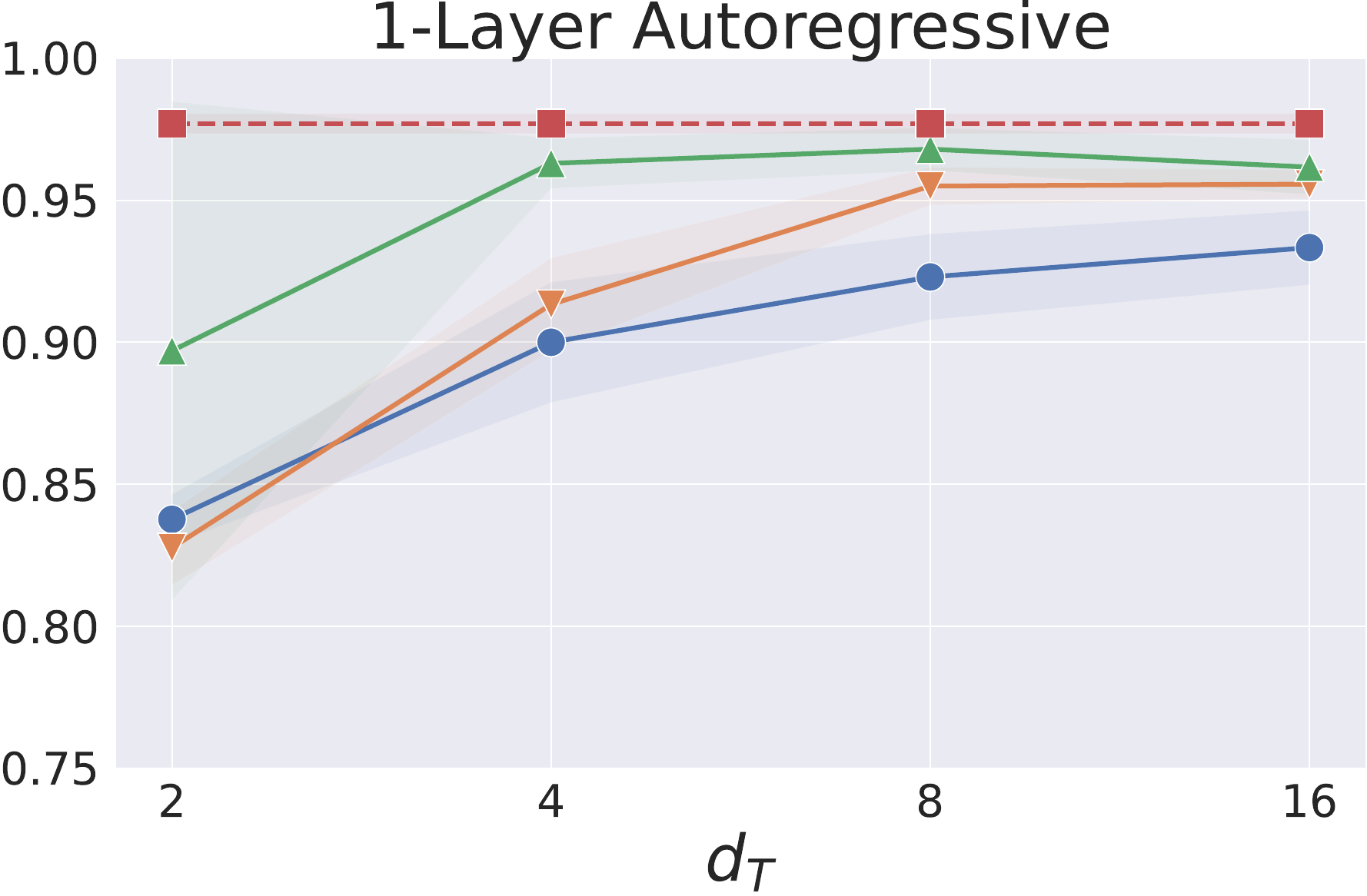}
    \end{subfigure}
    \begin{subfigure}{0.22\columnwidth}
        \centering
        \includegraphics[width=\linewidth]{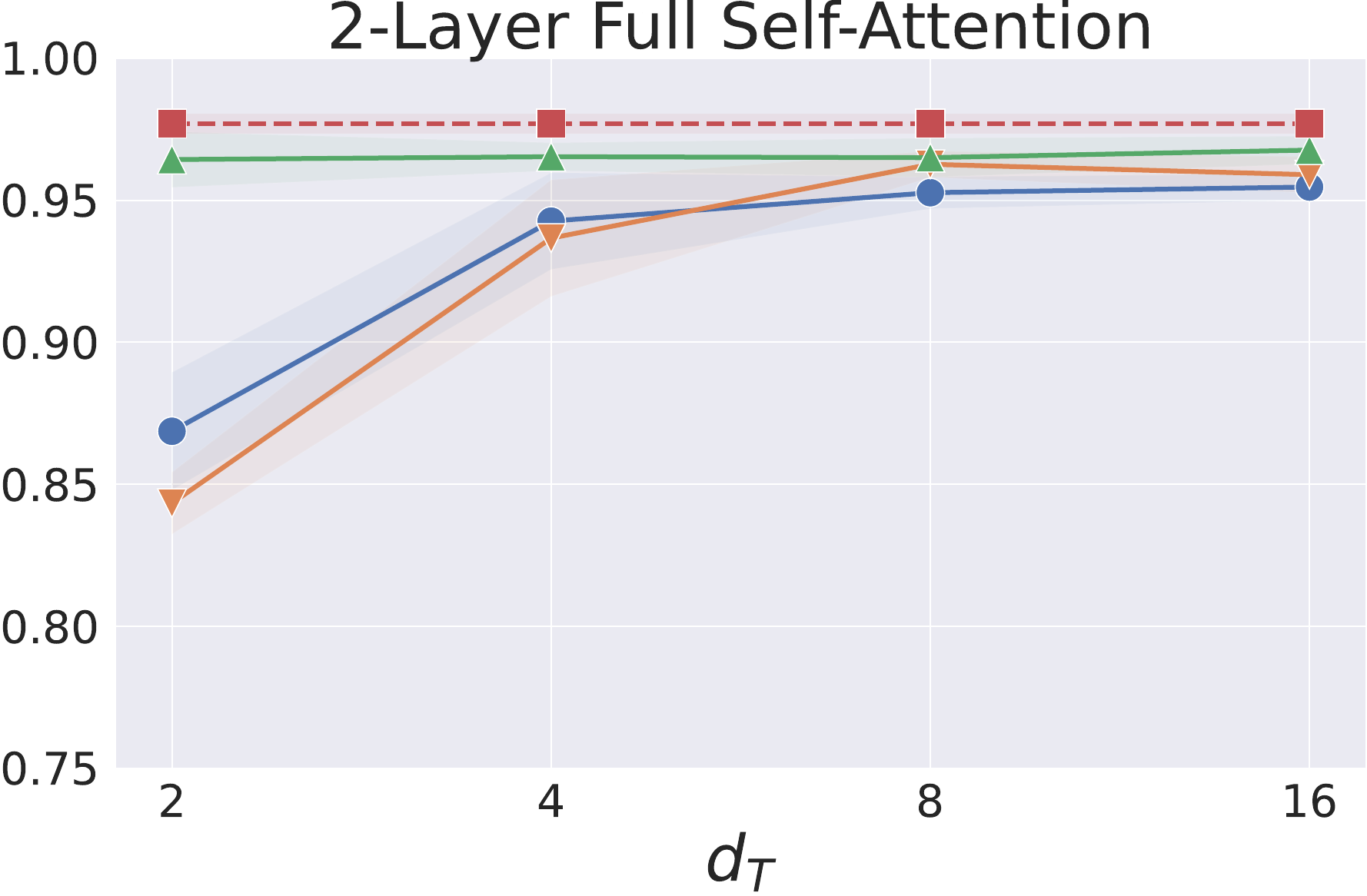}
    \end{subfigure}
    \begin{subfigure}{0.22\columnwidth}
        \centering
        \includegraphics[width=\linewidth]{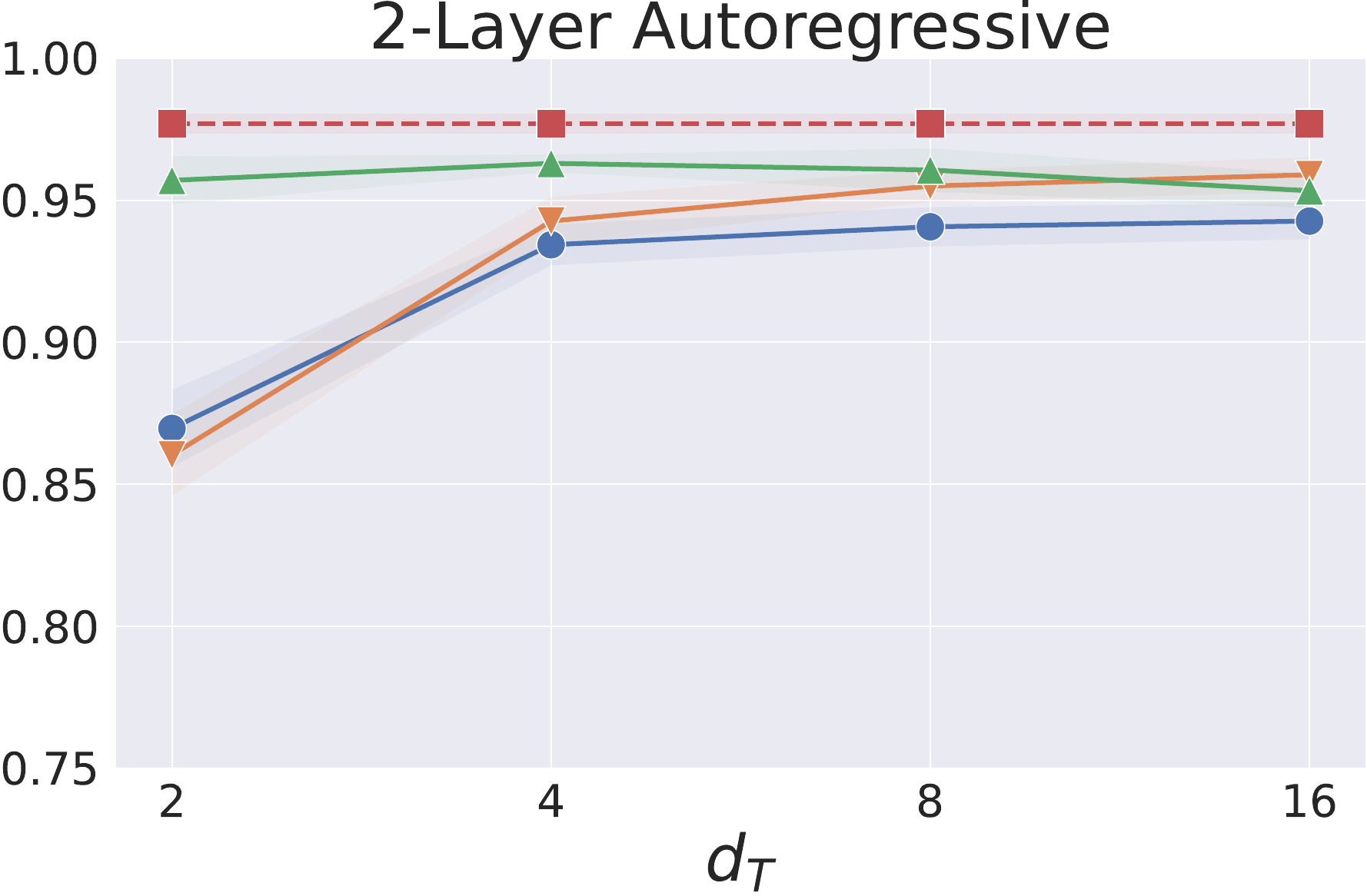}
    \end{subfigure}
    \caption{Test accuracies of transformers vs. $d_T$ on the synthetic task. The left two figures are the results of full self-attention and autoregressive attention of 1-layer transformers. The right two are the results of 2-layer transformers.}
    
    \label{fig:synthetic_1layer_results}
\end{figure}
\begin{figure}[!t] 
    \centering
    \begin{subfigure}{0.25\columnwidth}
        \centering
        \includegraphics[width=\linewidth]{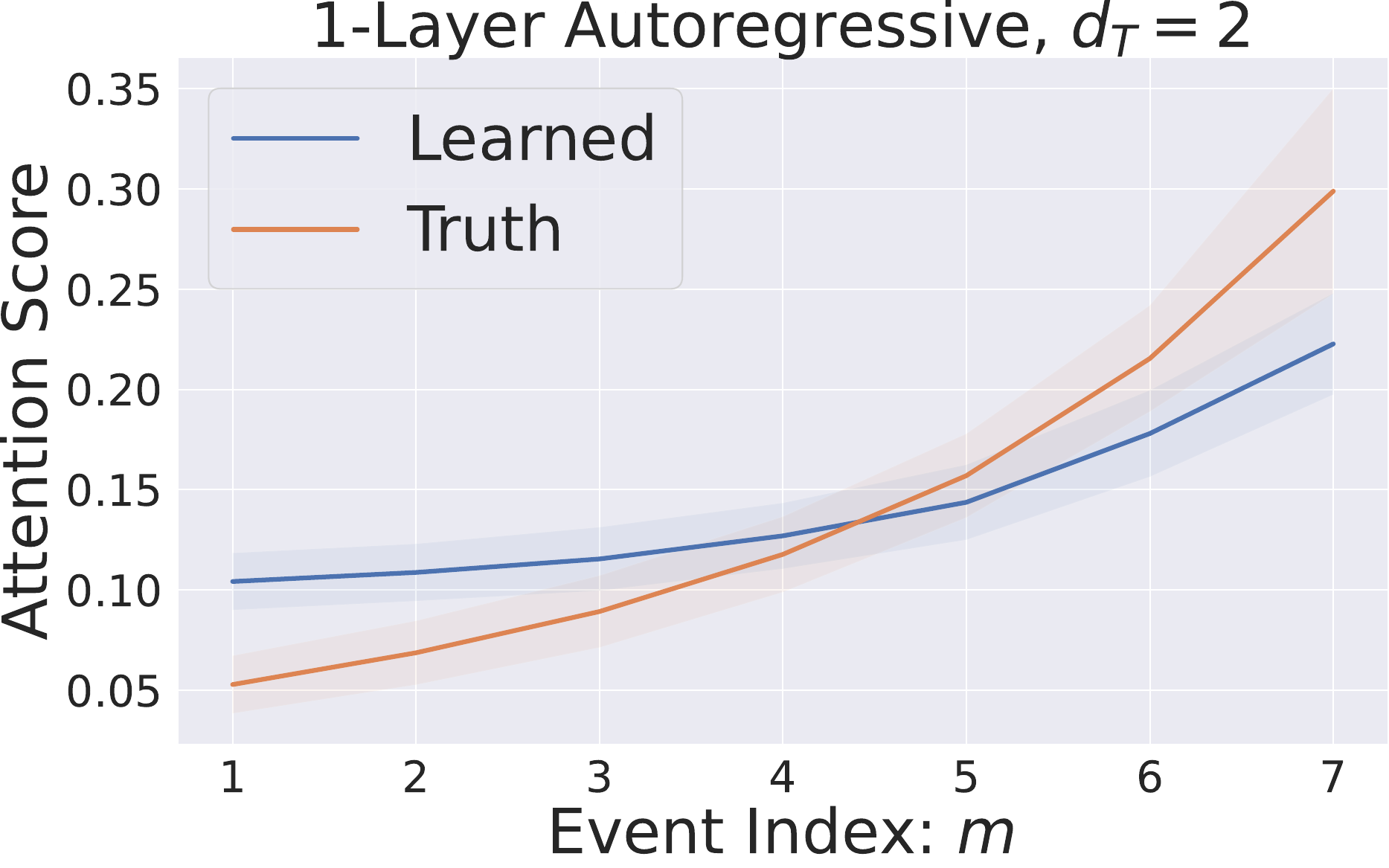}
    \end{subfigure}
    \hspace{4pt}
    \begin{subfigure}{0.25\columnwidth}
        \centering
        \includegraphics[width=\linewidth]{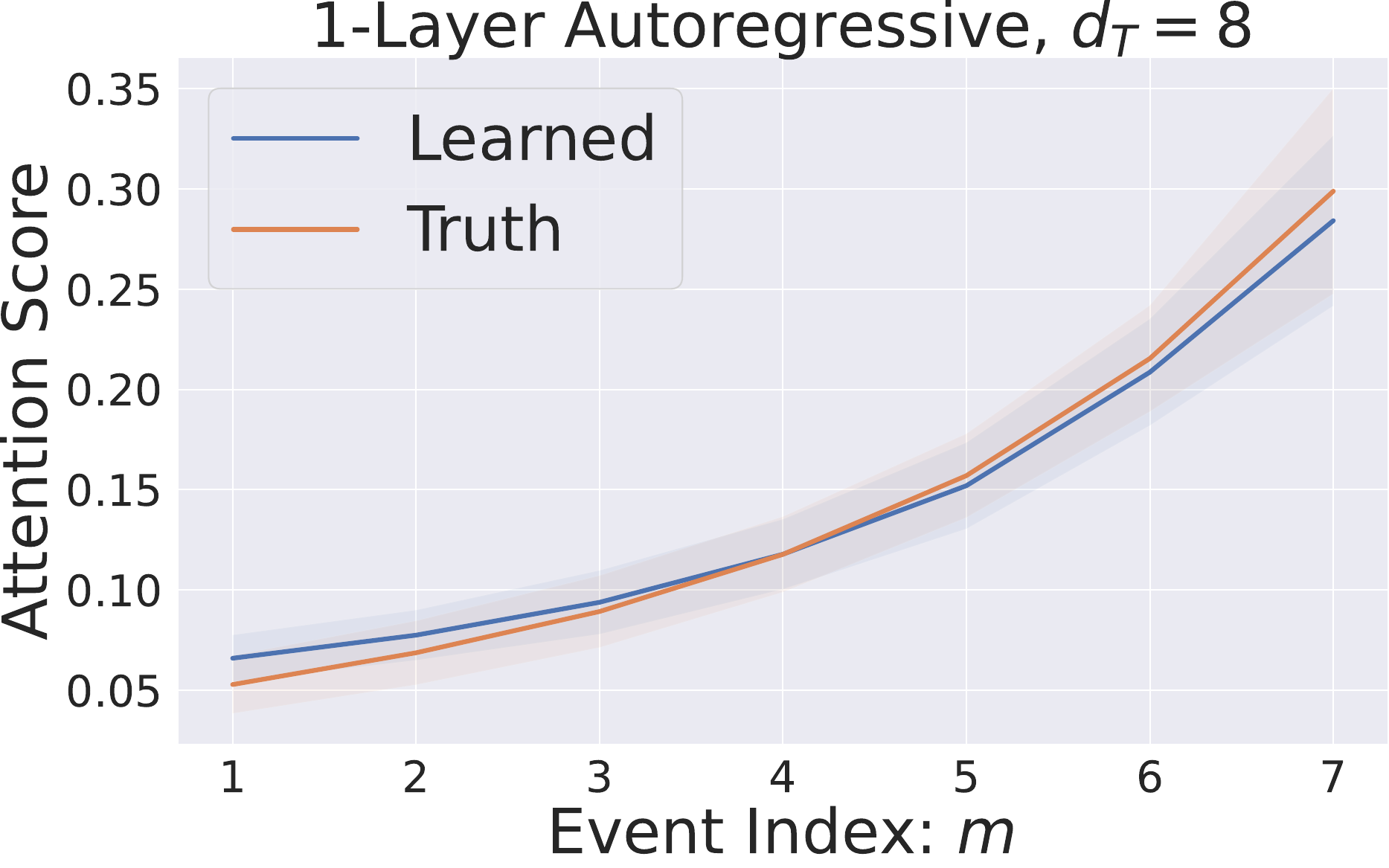}
    \end{subfigure}
    \caption{Average attention scores for each event index $m$ of 1-layer autoregressive transformers with linear time encodings. We compare the attention scores when using $d_T=2$ and $8$ with the true attention scores derived from the labeling function in Eq. \ref{eq:synthetic_label_fn}.}
    \label{fig:attn_scores}
\end{figure}

We conduct experiments to see if transformers with linear time encoders can learn the event sequence classification task. 
We use $\lambda=0.01$, $\alpha=0.003,$ and $M=7$ to generate $2000$ sequences in total where $70\%/15\%/15\%$ of the sequences are for train/validation/test.
We experiment with $1$-layer and $2$-layer transformers and vary $d_T$ in between $\{2,4,8,16\}$.
We compare linear time encodings, sinusoidal time encodings with all cosines, and sinusoidal time encodings with sine and cosine pairs.
We do $10$ runs for each configuration and present the results in Figure \ref{fig:synthetic_1layer_results}. 

The results show that transformers with linear time encoders are capable of learning the event sequence classification task even when $d_T$ is small.
In Figure \ref{fig:attn_scores}, we also visualize the attention scores of 1-layer autoregressive transformers with linear time encodings, which essentially uses $(\mathbf{X})_{M+1,:}$ as the query and $(\mathbf{X})_{1:M,:}$ as the keys and values.
We know the true attention scores based on the labeling function in Eq. \ref{eq:synthetic_label_fn}.
We average the learned and true attention scores over all sequences per event index and compare the two in the plots.
According to the plots, the 1-layer transformer can learn the exponential decaying pattern when $d_T=2$ and becomes closer to the true labeling function when $d_T=8$.
Our experiments confirm that self-attention can learn to compute time spans from linear time encodings and extract relevant patterns.
Motivated by this result, we extend our empirical study to future link prediction on dynamic graphs.

\subsection{Modifying TGAT and DyGFormer}
We study how different time encoders perform with TGAT and DyGFormer in the context of future link prediction.
For TGAT, we replace its sinusoidal time encoder with a linear time encoder and compare their performances.
For DyGFormer, we consider $2$ variants in addition to the original version and illustrate them in Figure \ref{fig:dygformer_variants}.
The first variant, DyGFormer-separate, computes the temporal representations of the source and destination nodes separately. 
The goal of this variant is to rule out the effect of cross-sequence attention.
The second variant, DyGDecoder, computes the source and destination node representations separately and in an autoregressive manner as described in Section \mbox{\ref{subsubsec:synthetic_transformer}}.
In DyGDecoder, we prepend a learnable ``beginning of sequence'' embedding to the input feature sequence matrices of the source and destination nodes, signaling the start of the sequences and enabling the transformer to count the sequence lengths \citep{kazemnejad2024impact}.
Similar to our approach for TGAT, we modify the time encoders of all three DyGFormer versions.

To verify that any differences between the sinusoidal and linear time encoders are not solely due to standardization, we also introduce a sinusoidal-scale time encoder.
This encoder employs the same scaling strategy as the linear time encoder to the time differences but encodes them with sinusoidal functions, allowing us to isolate the impact of scaling from the underlying encoding technique.

\begin{figure}[t!]
    \centering
    \includegraphics[width=\linewidth]{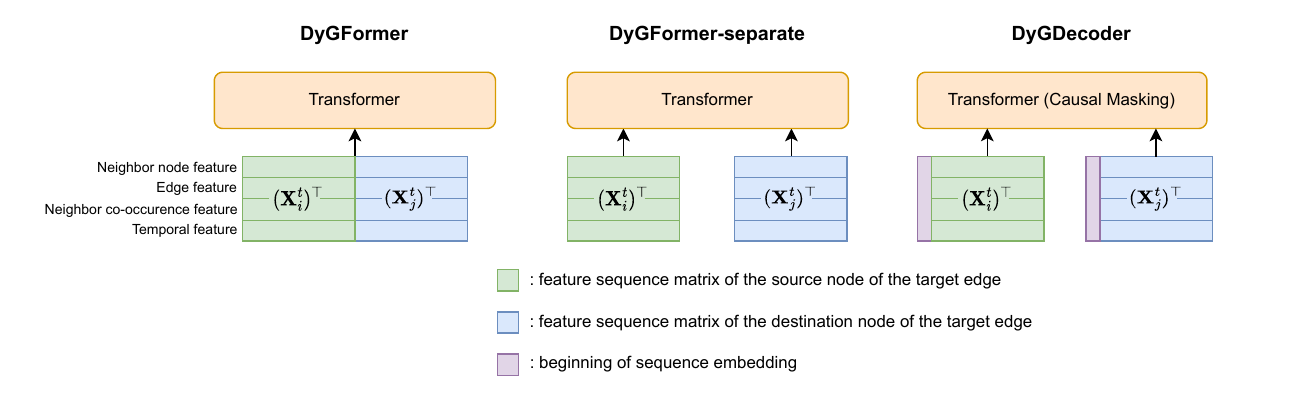}
    \caption{An illustration of the DyGFormer variants that we experimented with on dynamic link prediction. DyGFormer-separate and DyGDecoder forward the source and destination node feature sequences separately through the transformers while DyGFormer concatenates them and forwards jointly. In addition, DyGDecoder applies causal masking in its self-attention blocks and prepends a learnable ``beginning of sequence'' embedding to the input feature sequence matrices.}
    \label{fig:dygformer_variants}
\end{figure}

\section{Experiments}

\subsection{Setup}
\subsubsection{Data}
For our main experiments, we use six standard dynamic graph learning benchmark datasets: UCI \citep{panzarasa2009patterns}, Wikipedia \citep{kumar2019predicting}, Enron \citep{shetty2004enron}, Reddit \citep{kumar2019predicting}, LastFM \citep{kumar2019predicting} and US Legis \citep{fowler2006legislative,huang2020laplacian}.
We list the details of the datasets in Appendix \ref{app:dataset}.
Following the setup of the unified dynamic graph learning library, DyGLib \citep{yu2023towards}, we split the time span of an entire dataset into $70\%/15\%/15\%$ for train/validation/test.
We denote the starting time points for the validation and test splits by $t_{\text{val}}$ and $t_{\text{test}}$ where $0 < t_{\text{val}} < t_{\text{test}} < T$.

\subsubsection{Negative Edge Sampling} \label{subsec:negative_edge}
As in DyGLib, we sample an equal amount of negative edges as the positive edges for the training and evaluation of future link prediction.
We adopt two negative sampling (NS) strategies for evaluation: random NS and historical NS \citep{poursafaei2022towards}.
Let $\mathcal{E}(\check{t},\hat{t}) \coloneq \{(i,j,t) | (i,j,t) \in \mathcal{E}, \check{t} \leq t < \hat{t}\}$ denote the set of edges that occur within the time range of $[\check{t},\hat{t})$ and $\mathcal{P}(\check{t},\hat{t}) \coloneq \{(i,j)|(i,j,t) \in \mathcal{E}(\check{t},\hat{t})\}$ denote the node pairs that interact within the range.
Given a batch of positive test edges $\mathcal{E}(\check{t},\hat{t})$ where $t_{\text{test}} \leq \check{t} < \hat{t} < T$, random NS substitutes the node pairs of the positive edges by sampling from all possible node pairs uniformly: $\mathcal{E}_{\text{rand}}^-(\check{t},\hat{t}) = \{(u,v,t)|u,v \sim \text{Unif}(\mathcal{V}), (i,j,t) \in \mathcal{E}(\check{t},\hat{t}) \}$.
Historical NS draws node pairs that do not interact in the given batch of positive edges but have interacted in the past for substitution: $\mathcal{E}_{\text{hist}}^-(\check{t},\hat{t}) = \{(u,v,t)|(u,v) \in \mathcal{P}(0,\check{t}), (u,v)\notin \mathcal{P}(\check{t},\hat{t}), (i,j,t) \in \mathcal{E}(\check{t},\hat{t}) \}$.
We train under random NS and do model selection under random and historical NS for evaluation under random and historical NS, respectively.

\subsubsection{Evaluation Metric and Hyper-Parameters}\label{subsubsec:metric_hparam}
We follow DyGLib to use the average precision (AP) as the evaluation metric, set the batch size to $200$, and use the Adam optimizer \citep{kingma2014adam} with a learning rate of $0.0001$.
The sinusoidal time encoding dimension $d_T$ is set to $100$.
We also set the linear time encoding dimension to $100$ for TGAT but set it to $1$ for DyGFormer variants since the time encoder is followed by a linear projection to a temporal feature channel of dimension $d_{\text{ch}}$, effectively resulting in a $d_{\text{ch}}$-dimensional linear time encoder no matter what $d_T$ is.

We use the same other hyper-parameters for training the original DyGFormer under random NS as the ones in DyGLib since they are well-tuned.
For the LastFM dataset, we directly apply the hyper-parameters specified in DyGLib as well due to the scale of the dataset.
As for other cases, we search the hyper-parameters of each method on each dataset.
The search space for TGAT is the dropout rate among $\{0.1, 0.3, 0.5\}$.
The search space for DyGFormer variants is the combination of channel dimension $d_{\text{ch}} \in \{30, 50\}$ and the dropout rate among $\{0.1, 0.3, 0.5\}$.
The best hyper-parameters are determined based on the validation AP score. 
Following \citet{poursafaei2022towards} and DyGLib, we report the average results and standard deviation over five runs.

\subsection{Results and Discussions}

\begin{table*}[t]
\caption{Test AP (random NS) across six datasets. The best performance for each model variant is highlighted in \textbf{bold} and the second best performance is highlighted by \underline{underline}. Counts of dataset wins are also \textbf{boldfaced}, and in every model variant the linear time encoder secures the most wins.}
\label{table:random_ns_results}
\begin{center}
\addtolength{\tabcolsep}{1pt}
\scalebox{0.78}{
\begin{tabular}{lccccccccc}
\toprule
\multicolumn{1}{l}{\parbox{2.5cm}{\bf Test AP \\ (Random NS)}}  & \multicolumn{1}{c}{\bf Time Encoder} & \multicolumn{1}{c}{\bf UCI}  & \multicolumn{1}{c}{\bf Wikipedia}  & \multicolumn{1}{c}{\bf Enron}  & \multicolumn{1}{c}{\bf Reddit} & \multicolumn{1}{c}{\bf LastFM} & \multicolumn{1}{c}{\bf US Legis} & \multicolumn{1}{c}{\bf \# Wins} \\ \midrule
TGAT & Sinusoidal                       & \underline{80.27}{\scriptsize$\pm$0.42} & \underline{97.00}{\scriptsize$\pm$0.15} & 72.07{\scriptsize$\pm$1.10} & \textbf{98.55}{\scriptsize$\pm$0.01}  & 75.82{\scriptsize$\pm$0.27} & 67.42{\scriptsize$\pm$1.40} & 1 \\ 
     & Sinusoidal-scale                 & 77.59{\scriptsize$\pm$3.57} & 95.58{\scriptsize$\pm$0.98} & \underline{78.45}{\scriptsize$\pm$1.88} & 98.43{\scriptsize$\pm$0.02} & \underline{82.70}{\scriptsize$\pm$0.15} & \textbf{69.01}{\scriptsize$\pm$1.86} & 1 \\
     & Linear                           & \textbf{95.41}{\scriptsize$\pm$0.06} & \textbf{98.21}{\scriptsize$\pm$0.02} & \textbf{82.31}{\scriptsize$\pm$0.08} & \underline{98.52}{\scriptsize$\pm$0.02} & \textbf{83.10}{\scriptsize$\pm$0.12} & \underline{68.22}{\scriptsize$\pm$1.79} & \textbf{4} \\ \midrule
DyGFormer & Sinusoidal                  & \textbf{96.01}{\scriptsize$\pm$0.10} & 99.04{\scriptsize$\pm$0.02} & \textbf{93.63}{\scriptsize$\pm$0.11} & 99.20{\scriptsize$\pm$0.01} & 93.68{\scriptsize$\pm$0.04} & \underline{70.12}{\scriptsize$\pm$0.60} & 2 \\
          & Sinusoidal-scale            & 84.76{\scriptsize$\pm$6.69} & \underline{99.15}{\scriptsize$\pm$0.08} & 93.19{\scriptsize$\pm$0.21} & \underline{99.25}{\scriptsize$\pm$0.01} &  \underline{94.33}{\scriptsize$\pm$0.05} & 70.08{\scriptsize$\pm$0.64} & 0\\
          & Linear                      & \underline{96.00}{\scriptsize$\pm$0.42} & \textbf{99.19}{\scriptsize$\pm$0.02} & \underline{93.29}{\scriptsize$\pm$0.15} & \textbf{99.28}{\scriptsize$\pm$0.01} & \textbf{94.34}{\scriptsize$\pm$0.04} & \textbf{70.44}{\scriptsize$\pm$0.32} & \textbf{4} \\ \midrule
DyGFormer-separate & Sinusoidal         & \textbf{96.15}{\scriptsize$\pm$0.02} & 99.04{\scriptsize$\pm$0.02} & \textbf{93.49}{\scriptsize$\pm$0.15} & 99.24{\scriptsize$\pm$0.01} &  93.69{\scriptsize$\pm$0.07} & 68.03{\scriptsize$\pm$1.11} & 2 \\
                   & Sinusoidal-scale   & 87.77{\scriptsize$\pm$15.75} & \underline{99.15}{\scriptsize$\pm$0.07} & 93.33{\scriptsize$\pm$0.14} & \underline{99.29}{\scriptsize$\pm$0.01} & \textbf{94.30}{\scriptsize$\pm$0.11} & \underline{69.36}{\scriptsize$\pm$1.84} & 1 \\
                   & Linear             & \underline{95.88}{\scriptsize$\pm$0.76} & \textbf{99.24}{\scriptsize$\pm$0.02} & \textbf{93.49}{\scriptsize$\pm$0.26} & \textbf{99.30}{\scriptsize$\pm$0.01} &  \textbf{94.30}{\scriptsize$\pm$0.14} & \textbf{70.98}{\scriptsize$\pm$1.22} & \textbf{5} \\ \midrule
DyGDecoder & Sinusoidal                 & \underline{96.11}{\scriptsize$\pm$0.06} & \underline{99.03}{\scriptsize$\pm$0.02} & 93.53{\scriptsize$\pm$0.04} & 99.25{\scriptsize$\pm$0.01} & 94.21{\scriptsize$\pm$0.03} & 69.39{\scriptsize$\pm$1.06} & 0 \\
           & Sinusoidal-scale           & 94.25{\scriptsize$\pm$1.29} & 98.93{\scriptsize$\pm$0.14} & \underline{93.62}{\scriptsize$\pm$0.16} & \underline{99.30}{\scriptsize$\pm$0.04} &  \underline{94.32}{\scriptsize$\pm$0.13} & \underline{69.52}{\scriptsize$\pm$0.86} & 0 \\
           & Linear                     & \textbf{97.38}{\scriptsize$\pm$0.06} & \textbf{99.21}{\scriptsize$\pm$0.01} & \textbf{93.86}{\scriptsize$\pm$0.18} & \textbf{99.32}{\scriptsize$\pm$0.01} &  \textbf{94.44}{\scriptsize$\pm$0.03} & \textbf{70.12}{\scriptsize$\pm$1.11} & \textbf{6} \\
\bottomrule
\end{tabular}}
\addtolength{\tabcolsep}{-1pt}
\end{center}
\end{table*}

\subsubsection{Cosine vs. Sine-Cosine Pairs}
TGAT and DyGFormer implement sinusoidal time encoders using only cosines, while their respective papers describe them as sine-cosine pairs.
To investigate this, we first compare the performance of sinusoidal time encoders with all cosines and with sine-cosine pairs.
The results are presented in Appendix \ref{app:exp} Table \ref{table:sine_vs_sinecosine}.
We find that there is no significant difference in performance between the two versions. 
Therefore, we simply adopt the version with all cosines as implemented by TGAT and DyGFormer for the remainder of our study.


\subsubsection{Evaluation Under Random NS}

We compare sinusoidal, sinusoidal-scale, and linear time encoders under random NS and present the results in Table \ref{table:random_ns_results}.
The linear time encoder outperforms the others in 19 out of 24 model-dataset combinations, while the sinusoidal and sinusoidal-scale encoders perform best in 5 and 2 combinations, respectively.
The largest improvements in test AP scores when switching from sinusoidal to linear are 15.14 and 10.24 for TGAT on UCI and Enron, respectively.
The most notable performance drop is a modest decrease of 0.34 in the AP score for DyGFormer on Enron.
When comparing linear and sinusoidal-scale time encoders, which both encode the same scaled time difference values using different functions, the linear encoder outperforms the sinusoidal-scale encoder in 22 out of 24 cases.
This suggests that allowing self-attention to learn patterns relevant to time spans directly from the linear encodings is more effective than relying on pre-encoded sinusoidal functions.

Regarding the comparison between sinusoidal and sinusoidal-scale encoders, we observe that scaling consistently degrades performances on UCI, which is not the case for the other datasets.
To investigate further, we analyze the inter-event times in UCI.
For the source and destination nodes of each edge, we calculate the waiting times since their last interaction (see Appendix \ref{app:uci_waiting_time} Table \ref{table:uci_waiting_time}). 
We find that the waiting times increase significantly during the test split, which may hinder the sinusoidal-scale encoder's performance, as it relies on time difference statistics in the train split for scaling. 
Despite the negative impact of scaling under distribution shift, the linear time encoder is able to recover its performance. This echoes findings in the literature on relative positional encoding in language models \citep{press2022train} which discovered that the linear function of positional differences shows strong length generalization.

\begin{table*}[t]
\caption{Test AP (historical NS) across six datasets. The best performance for each model variant is highlighted in \textbf{bold} and the second best performance is highlighted by \underline{underline}. Counts of dataset wins are also \textbf{boldfaced}, and in every model variant the linear time encoder secures the most wins.}
\label{table:historical_ns_results}
\begin{center}
\addtolength{\tabcolsep}{1pt}
\scalebox{0.78}{
\begin{tabular}{lcccccccc}
\toprule
\multicolumn{1}{l}{\parbox{3cm}{\bf Test AP \\ (Historical NS)}} & \multicolumn{1}{c}{\bf Time Encoder} & \multicolumn{1}{c}{\bf UCI} & \multicolumn{1}{c}{\bf Wikipedia} & \multicolumn{1}{c}{\bf Enron} & \multicolumn{1}{c}{\bf Reddit} & \multicolumn{1}{c}{\bf LastFM} & \multicolumn{1}{c}{\bf US Legis} & \multicolumn{1}{c}{\bf \# Wins}  \\
\midrule
TGAT & Sinusoidal & \underline{68.94}{\scriptsize$\pm$0.58} & \underline{88.02}{\scriptsize$\pm$0.26} & 64.82{\scriptsize$\pm$2.31} & \underline{79.62}{\scriptsize$\pm$0.22} & 76.52{\scriptsize$\pm$0.77} & \textbf{84.59}{\scriptsize$\pm$4.39} & 1\\ 
& Sinusoidal-scale & 41.84{\scriptsize$\pm$5.84} & 87.49{\scriptsize$\pm$0.64} & \underline{78.40}{\scriptsize$\pm$0.74} & 79.38{\scriptsize$\pm$0.36} &  \underline{80.99}{\scriptsize$\pm$2.05} & \underline{81.96}{\scriptsize$\pm$2.51} & 0\\
& Linear & \textbf{91.42}{\scriptsize$\pm$0.27} & \textbf{92.02}{\scriptsize$\pm$0.14} & \textbf{80.14}{\scriptsize$\pm$0.29} & \textbf{79.70}{\scriptsize$\pm$0.20} &  \textbf{81.52}{\scriptsize$\pm$0.58} & 81.64{\scriptsize$\pm$4.21} & \textbf{5} \\
\midrule
DyGFormer & Sinusoidal & \underline{82.13}{\scriptsize$\pm$0.75} & \underline{83.81}{\scriptsize$\pm$0.81} & \underline{78.13}{\scriptsize$\pm$0.24} & \textbf{83.11}{\scriptsize$\pm$0.64} & 82.83{\scriptsize$\pm$0.20} & 84.58{\scriptsize$\pm$1.24} & 1\\
& Sinusoidal-scale & 80.29{\scriptsize$\pm$6.54} & 82.49{\scriptsize$\pm$3.11} & 75.32{\scriptsize$\pm$2.19} & \underline{82.46}{\scriptsize$\pm$0.55} & \underline{83.79}{\scriptsize$\pm$0.53} & \textbf{87.07}{\scriptsize$\pm$1.40} & 1 \\
& Linear & \textbf{89.41}{\scriptsize$\pm$2.63} & \textbf{86.23}{\scriptsize$\pm$0.85} & \textbf{78.24}{\scriptsize$\pm$1.24} & 82.31{\scriptsize$\pm$0.90} & \textbf{84.43}{\scriptsize$\pm$0.14} & \underline{85.89}{\scriptsize$\pm$1.36} & \textbf{4} \\
\midrule
DyGFormer-separate & Sinusoidal & \underline{82.42}{\scriptsize$\pm$0.51} & \underline{86.25}{\scriptsize$\pm$0.89} & 75.70{\scriptsize$\pm$1.95} & 82.60{\scriptsize$\pm$0.73} & 82.70{\scriptsize$\pm$0.40} & \underline{85.33}{\scriptsize$\pm$2.58} & 0\\
& Sinusoidal-scale & 80.92{\scriptsize$\pm$6.76} & 84.03{\scriptsize$\pm$0.80} & \underline{76.87}{\scriptsize$\pm$1.10} & \underline{83.05}{\scriptsize$\pm$1.42} &  \underline{83.90}{\scriptsize$\pm$0.46} & 82.54{\scriptsize$\pm$3.29} & 0\\
& Linear & \textbf{83.09}{\scriptsize$\pm$4.03} & \textbf{88.89}{\scriptsize$\pm$0.69} & \textbf{80.52}{\scriptsize$\pm$0.63} & \textbf{83.82}{\scriptsize$\pm$1.18} &  \textbf{84.26}{\scriptsize$\pm$0.55} & \textbf{89.44}{\scriptsize$\pm$1.73} & \textbf{6} \\
\midrule
DyGDecoder & Sinusoidal & \underline{84.49}{\scriptsize$\pm$0.61} & \underline{87.01}{\scriptsize$\pm$0.42} & 79.75{\scriptsize$\pm$0.19} & 84.65{\scriptsize$\pm$0.47} & \textbf{84.70}{\scriptsize$\pm$0.18} & \textbf{84.74}{\scriptsize$\pm$4.77} & 2 \\
& Sinusoidal-scale & 82.27{\scriptsize$\pm$1.91} & 86.27{\scriptsize$\pm$0.28} & \underline{81.38}{\scriptsize$\pm$0.30} & \textbf{85.09}{\scriptsize$\pm$0.52} & \underline{84.57}{\scriptsize$\pm$0.39} & 81.75{\scriptsize$\pm$4.17} & 1 \\
& Linear & \textbf{87.08}{\scriptsize$\pm$0.89} & \textbf{88.29}{\scriptsize$\pm$0.44} & \textbf{82.00}{\scriptsize$\pm$0.50} & \underline{85.03}{\scriptsize$\pm$0.55} &  84.41{\scriptsize$\pm$0.49} & \underline{83.79}{\scriptsize$\pm$2.12} & \textbf{3} \\
\bottomrule
\end{tabular}}
\addtolength{\tabcolsep}{-1pt}
\end{center}
\end{table*}

\subsubsection{Evaluation Under Historical NS}
For evaluation under historical NS, we use the validation AP scores under historical NS for model selection. 
We present the results in Table \ref{table:historical_ns_results}.
Among all $24$ model-dataset combinations, the linear time encoder is the best in $18$ instances while the sinusoidal and sinusoidal-scale encoders are the best in $4$ and $2$ instances, respectively.

The linear time encoder consistently outperforms the sinusoidal and sinusoidal-scale time encoder on UCI, Wikipedia, and Enron.
The most significant differences caused by switching the time encoder from sinusoidal to linear are the increases in average test AP scores on UCI and Wikipedia from $68.94$ to $91.42$ and from $64.82$ to $80.14$ with TGAT.
Other notable performance enhancements with DyGFormer variants include the $7.28$ and $4.82$ increases with DyGFormer on UCI and DyGFormer-separate on Enron.


We observe that the linear time encoder does not achieve the best performance on US Legis under historical NS evaluation, winning only with DyGFormer-separate. This is likely because sinusoidal time encoders are sufficient for this dataset's coarse-grained temporal granularity. US Legis is a discrete-time dynamic graph with only 12 evenly spaced time steps in the units of congressional sessions, resulting in just 12 possible time differences: $\{0, 1, ..., 11\}$. In such a setting, a simple one-dimensional sinusoidal encoding like $\cos (\frac{\pi}{11} \Delta t)$ can map all time differences to unique values in $[-1, 1]$ without collisions. In contrast, applying this approach to continuous-time datasets where time differences span up to $10^6$ seconds would lead to collisions due to limited numerical precision (e.g. $\cos (\frac{\pi}{10^6} \cdot 0) \approx \cos (\frac{\pi}{10^6} \cdot 85) \approx 1$ in float32). This example illustrates why encoding time differences is simpler in US Legis and why sinusoidal encodings are sufficient, while the advantage of the linear time encoder's injective property emerges in continuous-time scenarios.

\subsection{Additional Analyses}

\begin{figure*}[t]
    \centering
    \begin{subfigure}[b]{0.21\textwidth}
        \centering
        \includegraphics[width=\textwidth]{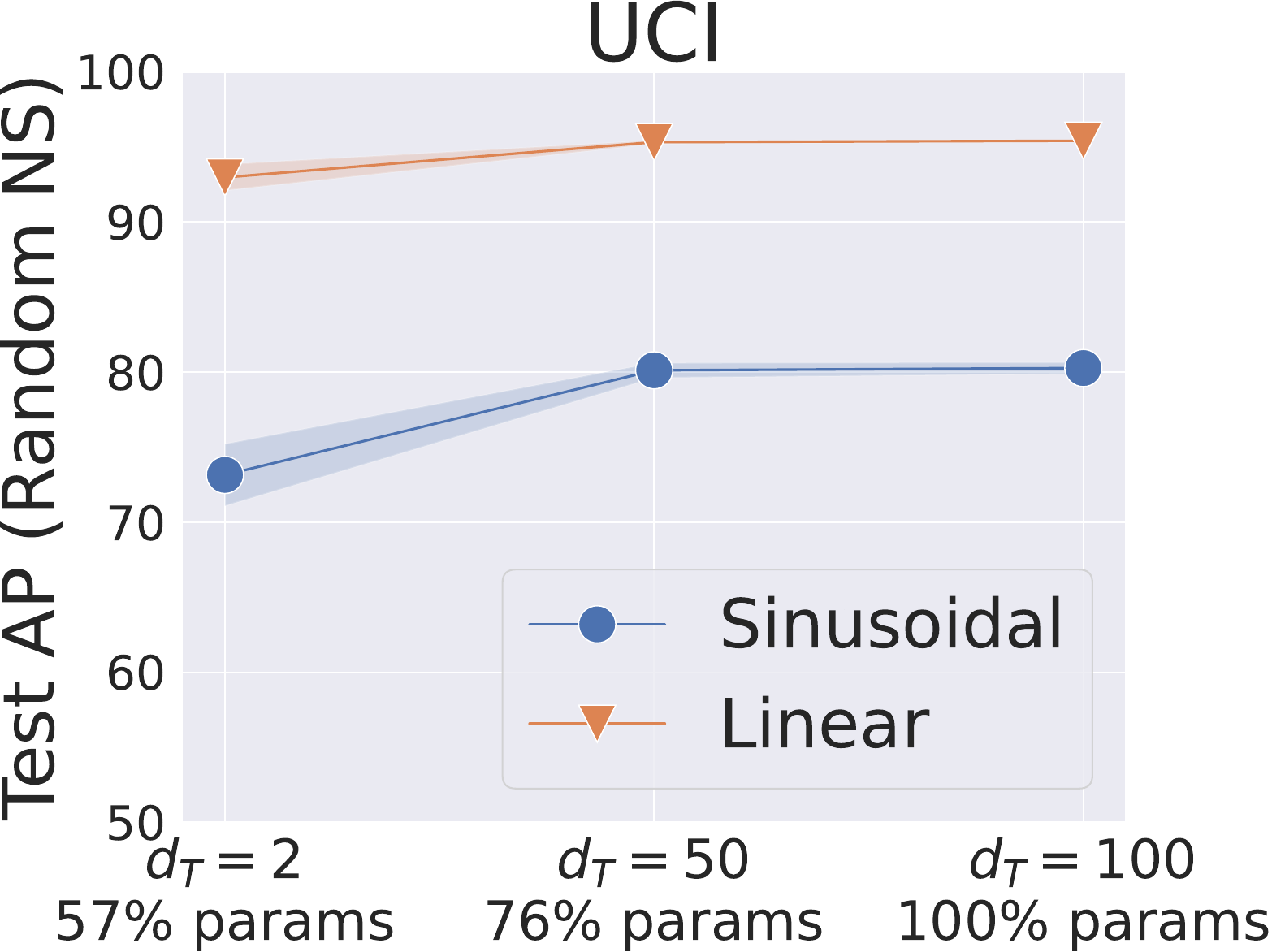} 
    \end{subfigure}
    \begin{subfigure}[b]{0.20\textwidth}
        \centering
        \includegraphics[width=\textwidth]{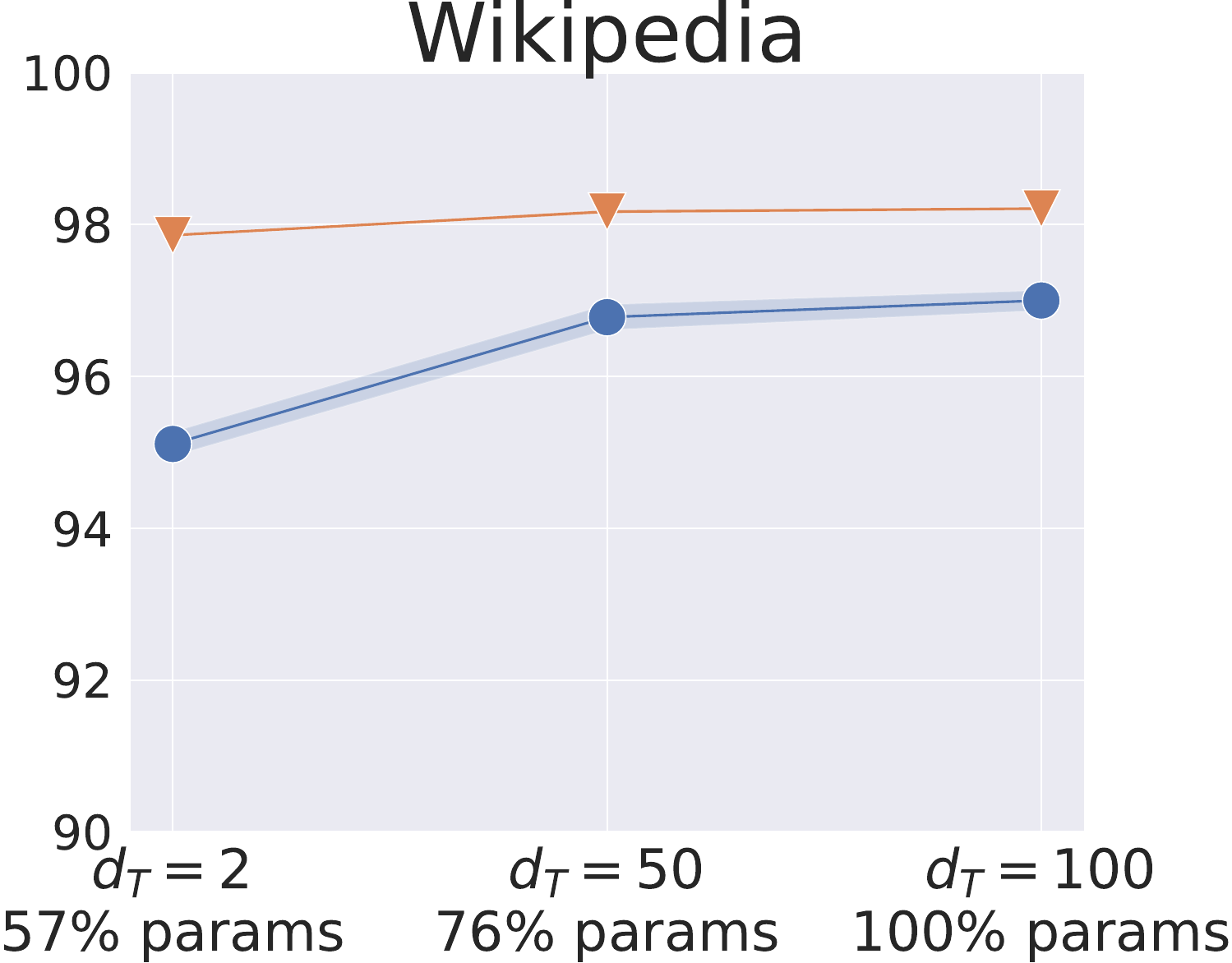}
    \end{subfigure}
    \begin{subfigure}[b]{0.20\textwidth}
        \centering
        \includegraphics[width=\textwidth]{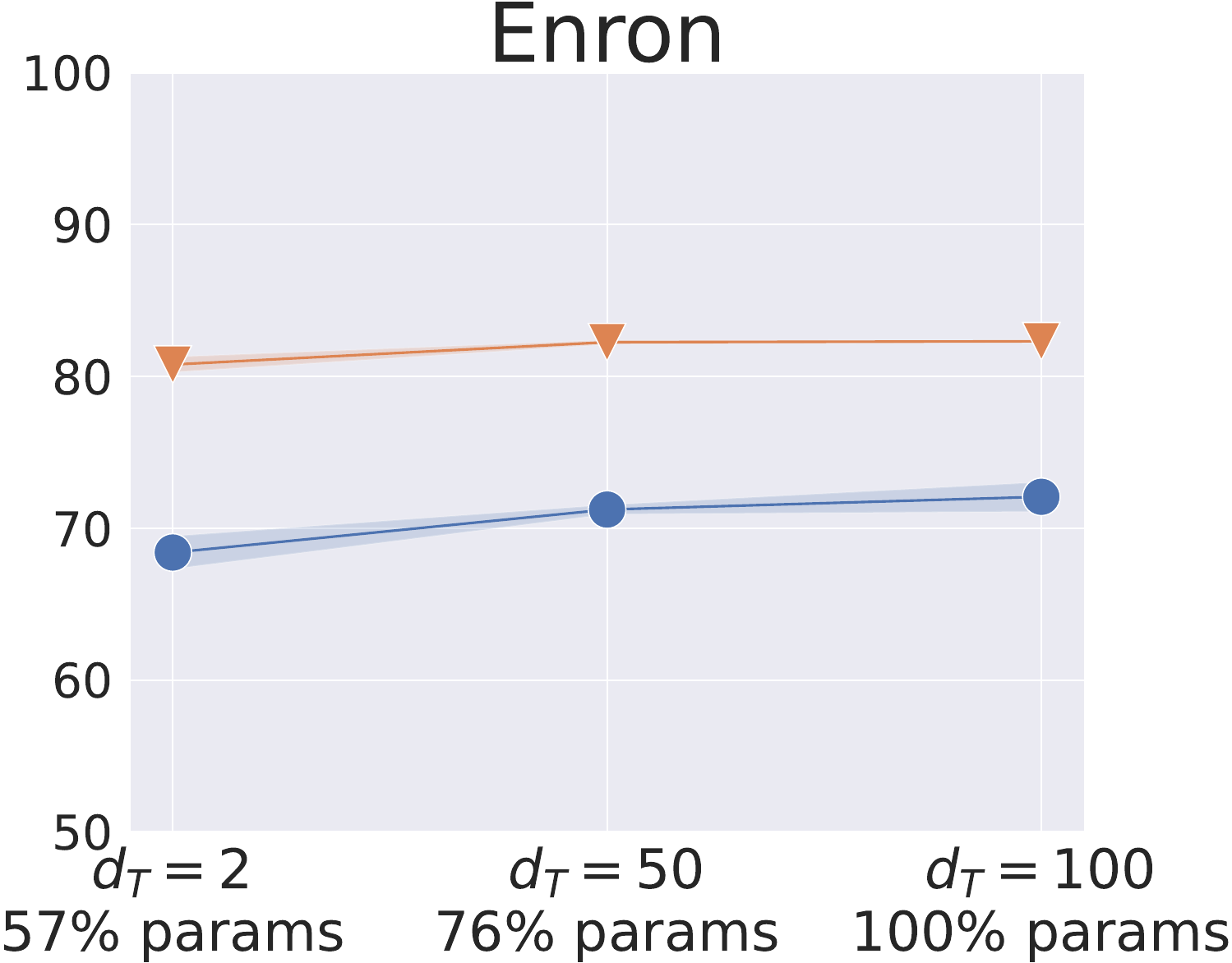}
    \end{subfigure}
    \vskip \baselineskip
    \begin{subfigure}[b]{0.21\textwidth}
        \centering
        \includegraphics[width=\textwidth]{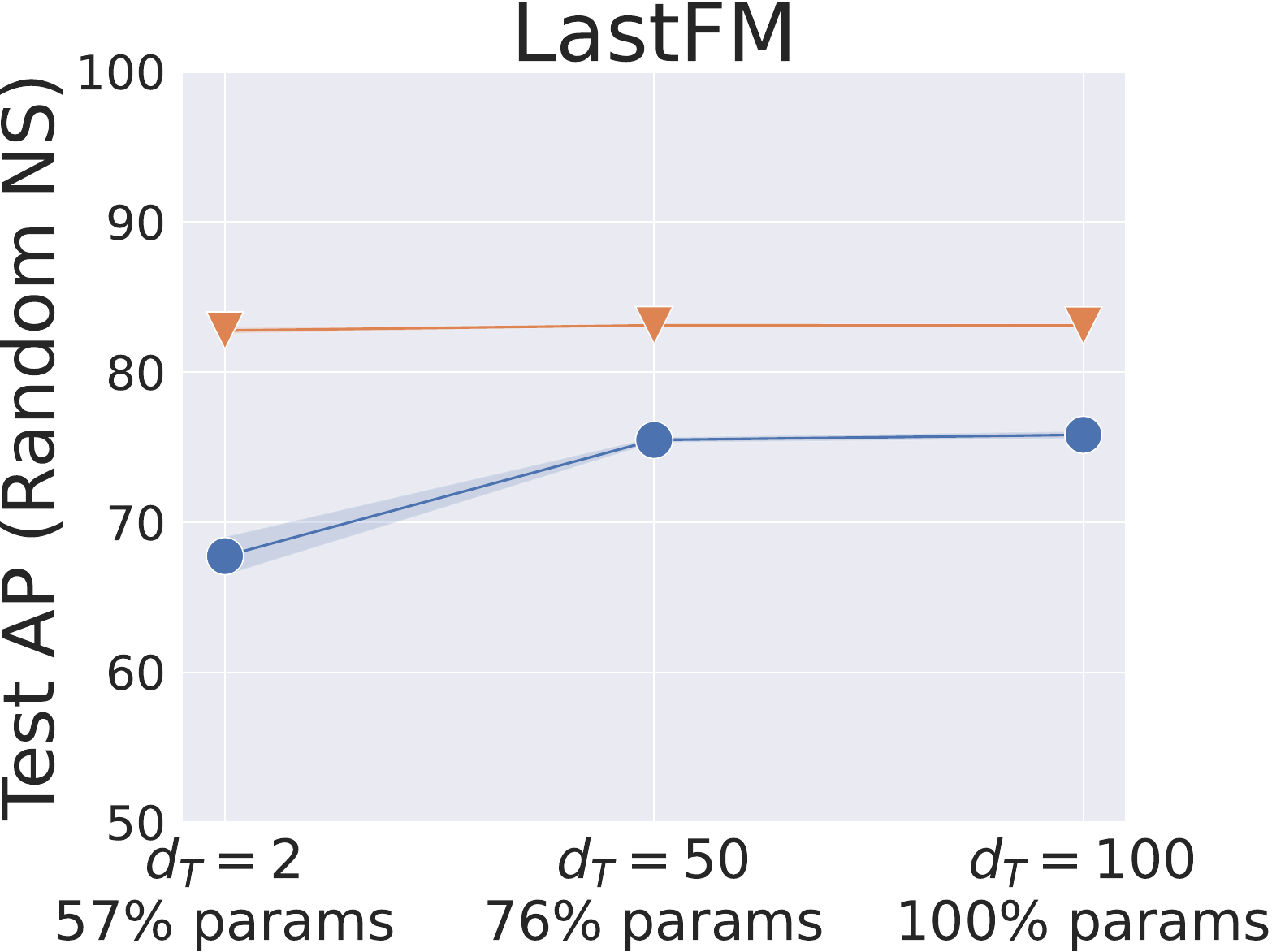} 
    \end{subfigure}
    \begin{subfigure}[b]{0.20\textwidth}
        \centering
        \includegraphics[width=\textwidth]{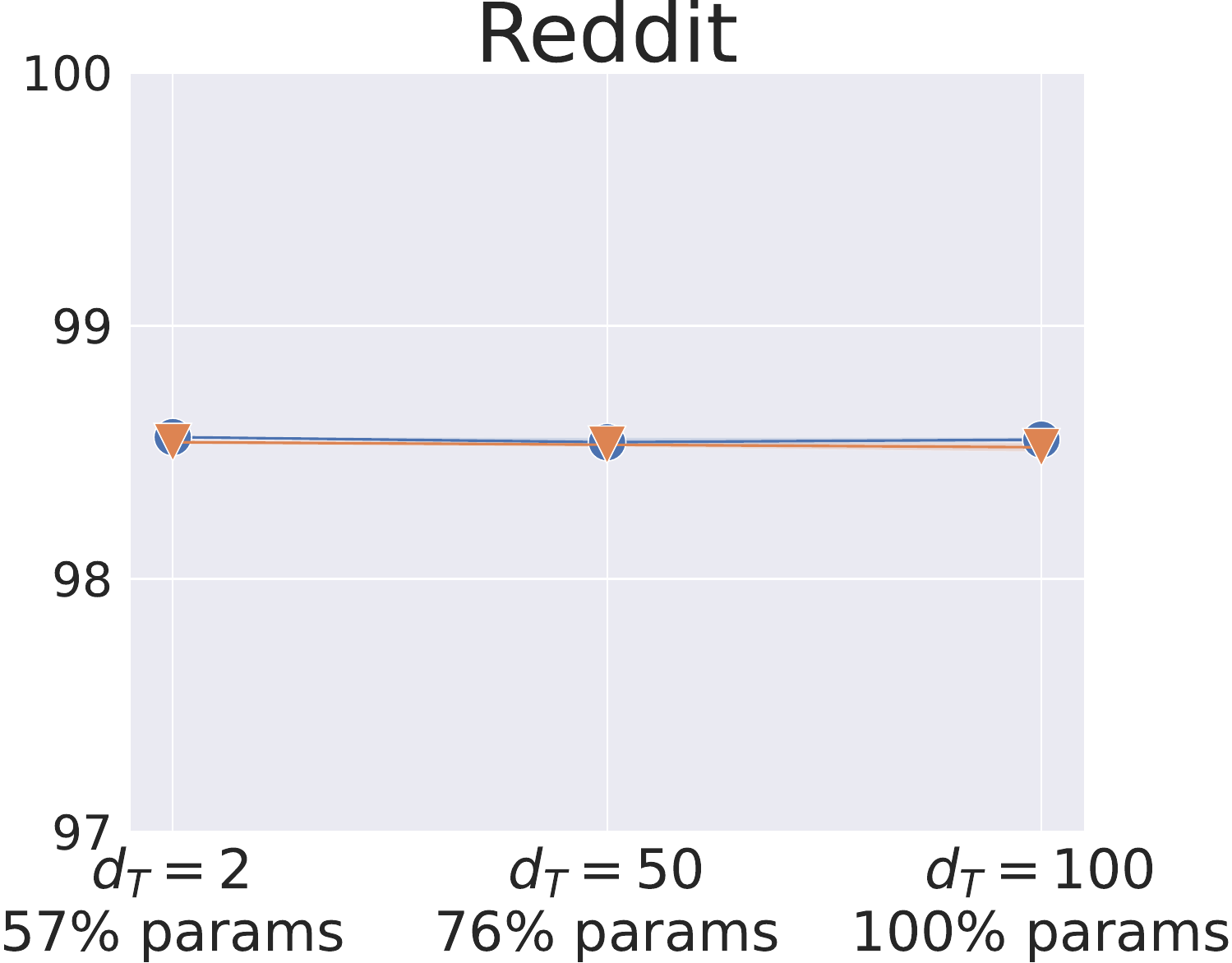} 
    \end{subfigure}
    \begin{subfigure}[b]{0.20\textwidth}
        \centering
        \includegraphics[width=\textwidth]{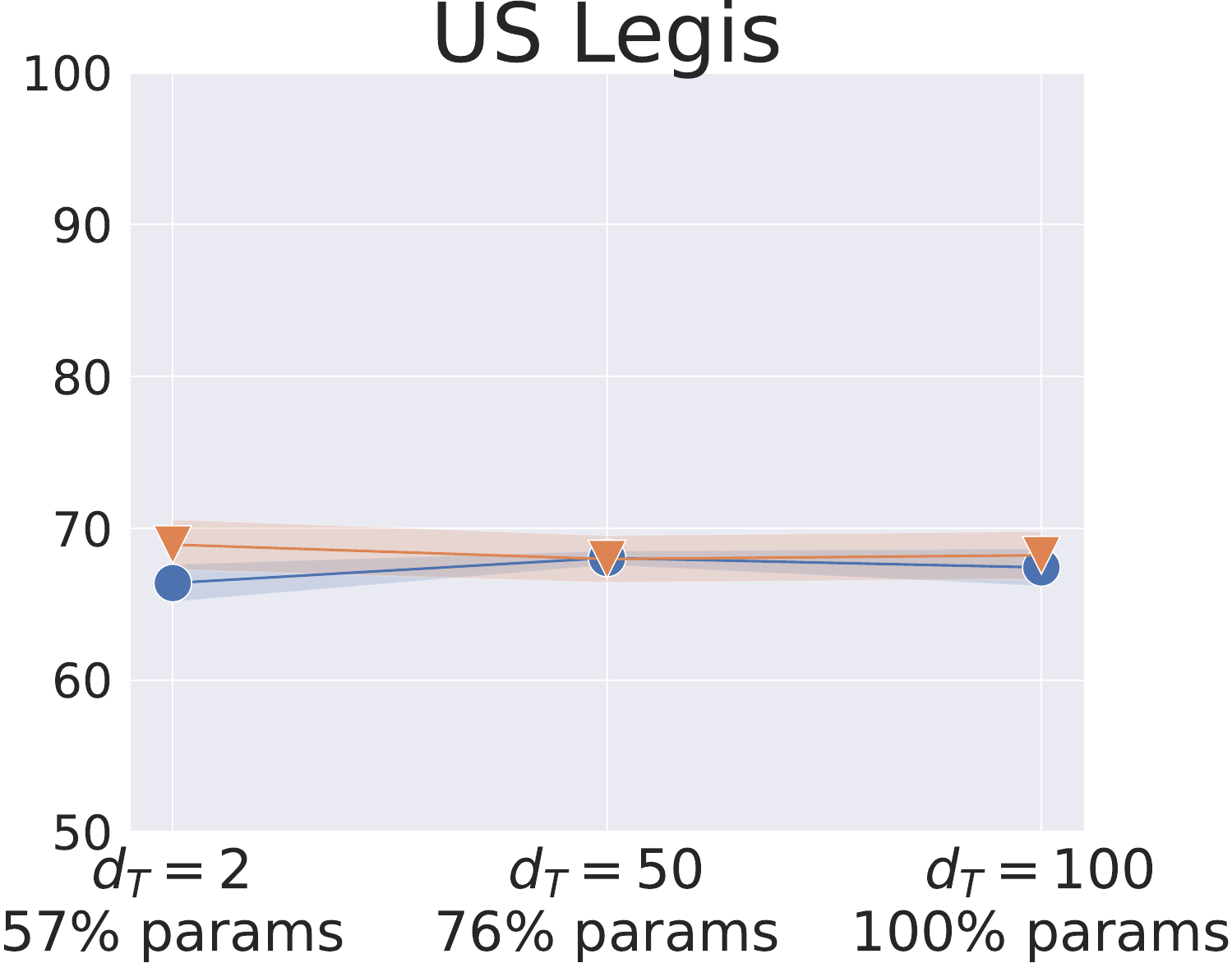}
    \end{subfigure}
    \caption{Test AP (random NS) vs. $d_T$ and the percentage of the number of parameters relative to $d_T=100$ in \tgat{}. }
    \label{fig:tgat_dt}
\end{figure*}

\subsubsection{Reducing Temporal Feature Dimension} \label{subsubsec:reduce_dt}

In the synthetic task experiments in Section \ref{subsubsec:synthetic_exp}, we observe that transformers with a low-dimensional linear time encoder can effectively learn the synthetic task.
We explore whether the same holds for realistic dynamic graph datasets.

In \tgat{}, the original time encoding dimension $d_T$ is $100$.
We reduce $d_T$ to be $50$ and $2$ for both sinusoidal and linear time encodings and plot the results in Figure \ref{fig:tgat_dt}.
The most significant performance decline with the sinusoidal time encoder occurs on LastFM, where the AP score decreases by $8.08$.
In comparison, the largest performance drop with the linear time encoder is only $2.44$ on UCI.
On all datasets except Reddit, linear time encoder with $d_T=2$ outperforms sinusoidal time encoder with $d_T=100$, while saving $43\%$ of parameters.

We conduct similar experiments with \dgf{}.
In the original \dgf{} with sinusoidal time encoders, the dimension for each channel $d_{\text{ch}}$ is $50$.
For UCI, Enron, LastFM, and US Legis, we also set $d_{\text{ch}}=50$ for \dgf{} with linear time encoders as determined in the hyper-parameter selection phase.
On these datasets, we experiment with reducing solely the dimension of the temporal feature channel to $24$ and $2$ while the dimensions for the other channels are fixed at $50$.
We separately denote the temporal feature channel dimension by $d_{\text{tc}}$ since it is different from the dimensions for the other channels $d_{\text{ch}}$ in this auxiliary experiment.
The results, shown in Figure \ref{fig:dygformer_dt}, reveal that while performances decrease on Enron and LastFM with both encoders as $d_{\text{tc}}$ is reduced, DyGFormer with a linear time encoder improves on UCI and US Legis, whereas the sinusoidal encoder continues to decline.
Overall, these experiments confirm that linear time encoders are more parameter-efficient. 

\begin{figure}[!t] 
    \centering
    \begin{subfigure}{0.205\columnwidth}
        \centering
        \includegraphics[width=\linewidth]{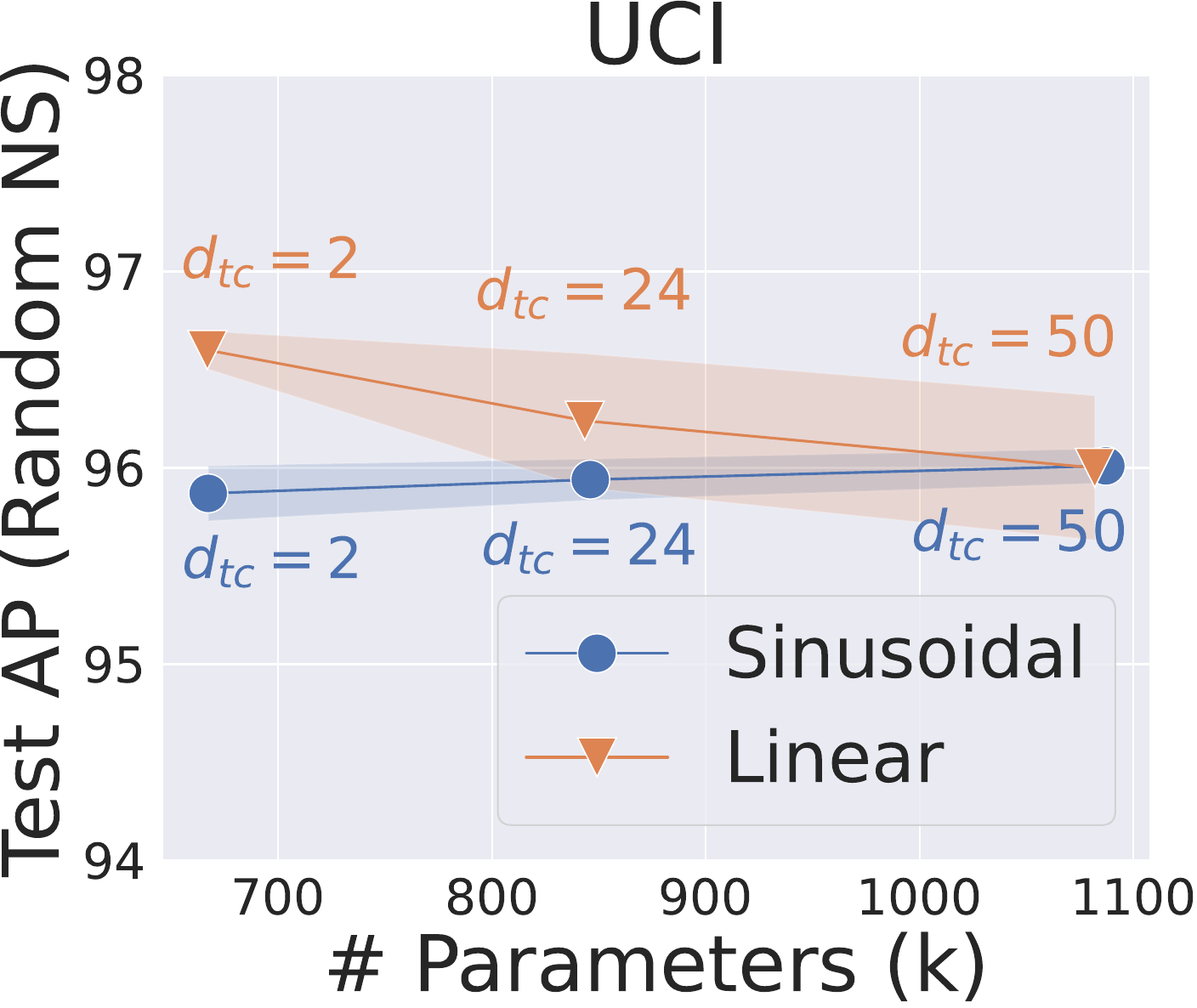}
    \end{subfigure}
    \hspace{3pt}
    \begin{subfigure}{0.18\columnwidth}
        \centering
        \includegraphics[width=\linewidth]{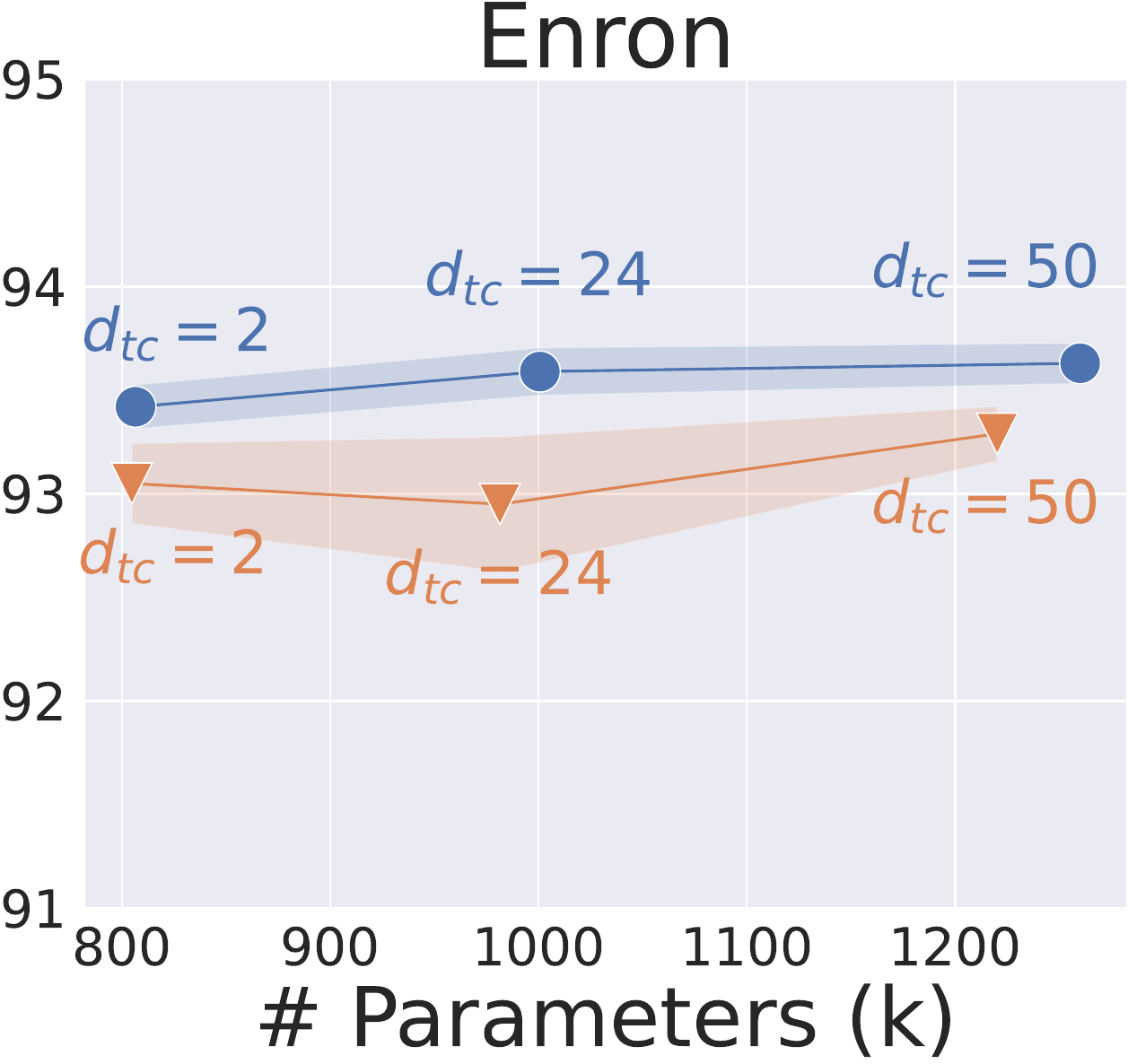}
    \end{subfigure}
    \hspace{3pt}
    \begin{subfigure}{0.18\columnwidth}
        \centering
        \includegraphics[width=\linewidth]{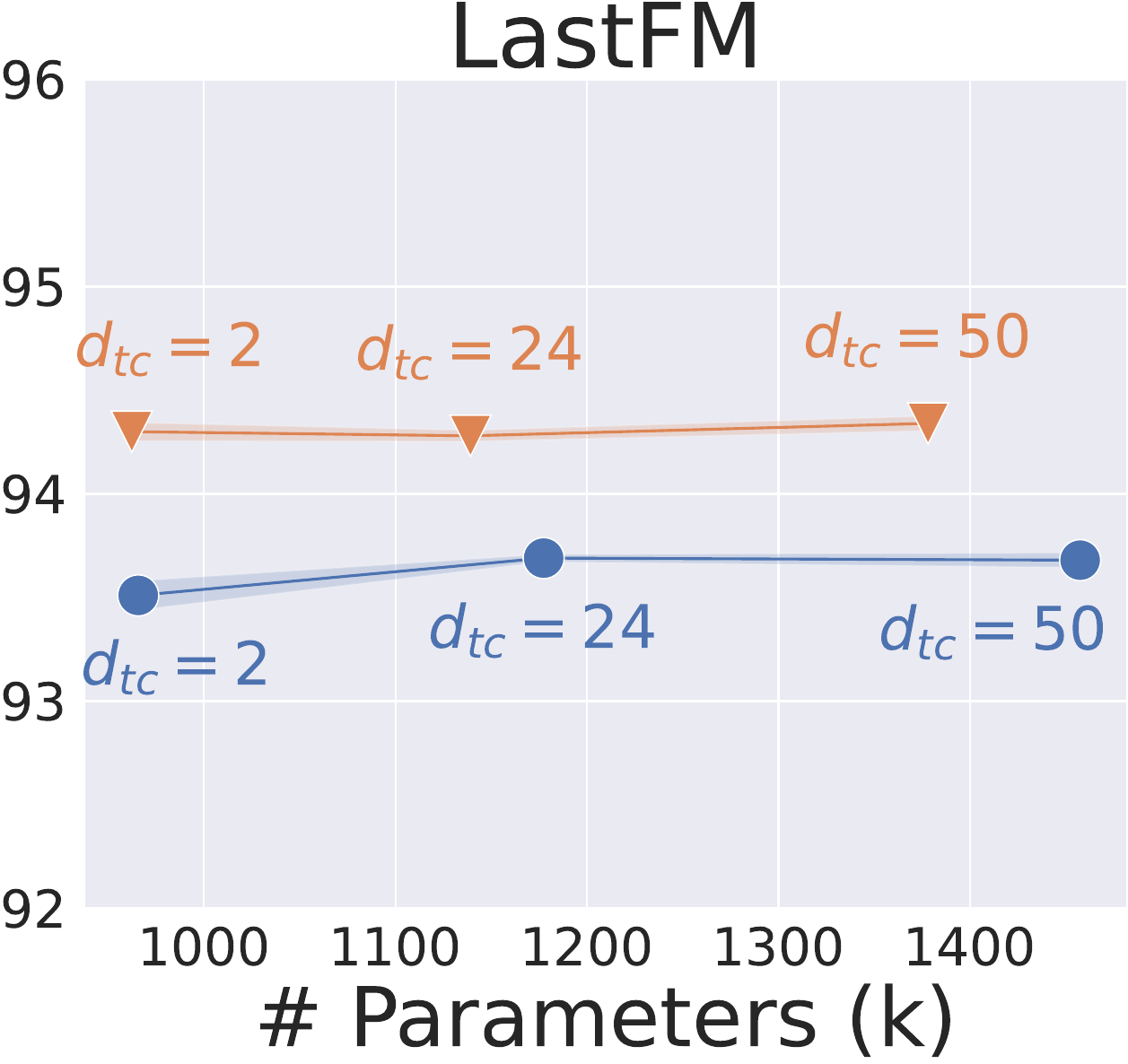}
    \end{subfigure}
    \hspace{3pt}
    \begin{subfigure}{0.18\columnwidth}
        \centering
        \includegraphics[width=\linewidth]{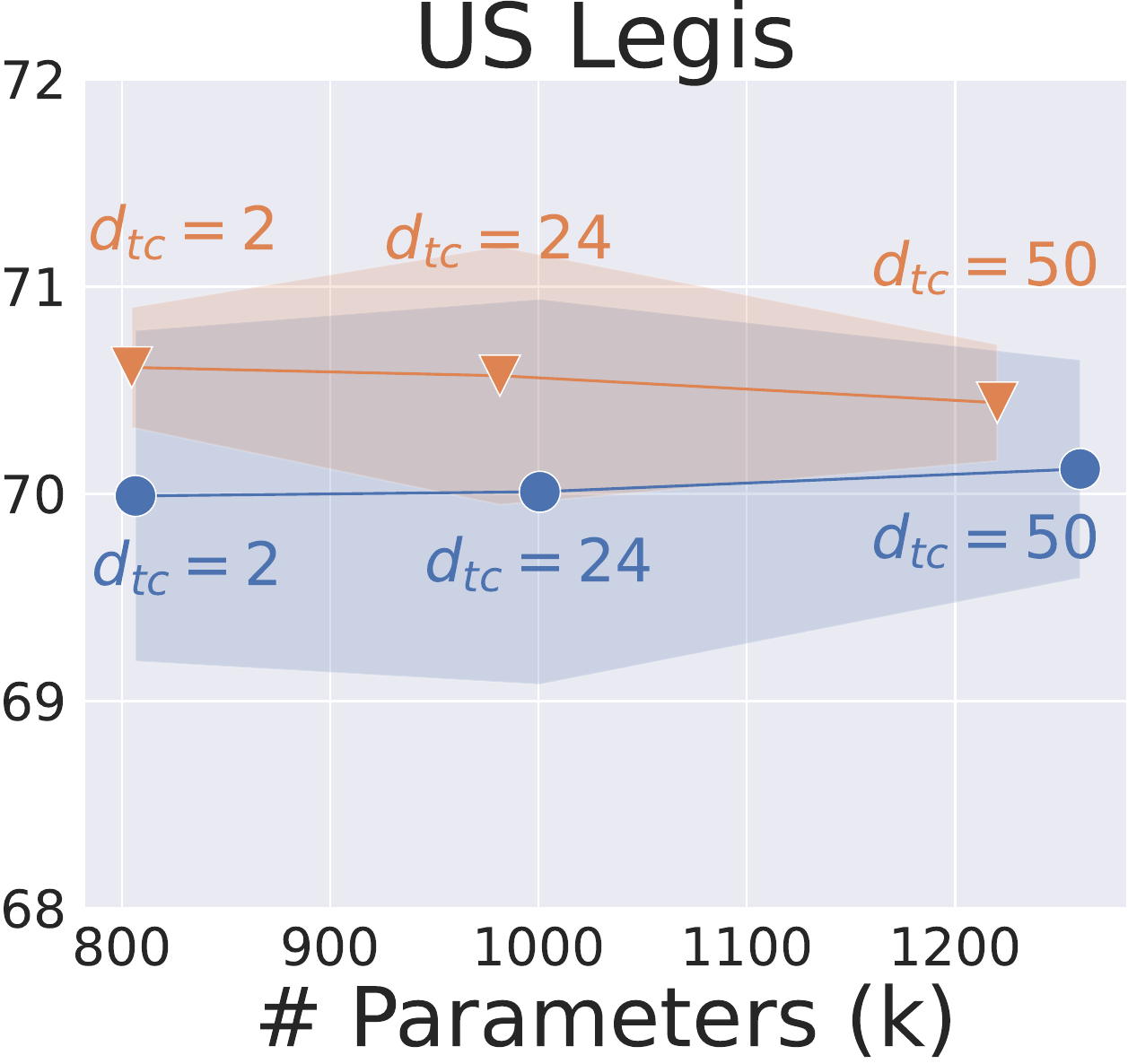}
    \end{subfigure}
    \caption{Test AP (random NS) under varying temporal feature channel dimension $d_{\text{tc}}$ and number of parameters in DyGFormer.}
    \label{fig:dygformer_dt}
\end{figure}

\subsubsection{Attention Score Analysis}\label{subsubsec:attn_score_analysis}

Attention scores are useful for understanding how edge event representations aggregate information. 
We choose to visualize the attention score patterns in the last layers of DyGFormer-separate and DyGDecoder since they rule out the effect of cross-sequence attention.

We sample $K$ target edges from the test set.
For each edge $(i,j,t)$, we retrieve the historical edge event sequences of node $i$ and $j$.
For each sequence, we record the attention score between a query vector and a key vector alongside their encoded time differences, $t - t_q$ and $t - t_k$.
We then plot the attention scores and time differences for all $K$ historical edge event sequences of the source nodes and repeat the process for the destination nodes.
Figure \ref{fig:reddit_src_tq_tk_attn} shows the visualization results for the source nodes of Reddit.
The other results are presented in Appendix \ref{app:attn_score_analysis}.

\begin{figure}[!t] 
    \centering
    \begin{subfigure}{0.22\columnwidth}
        \centering
        \includegraphics[width=\linewidth]{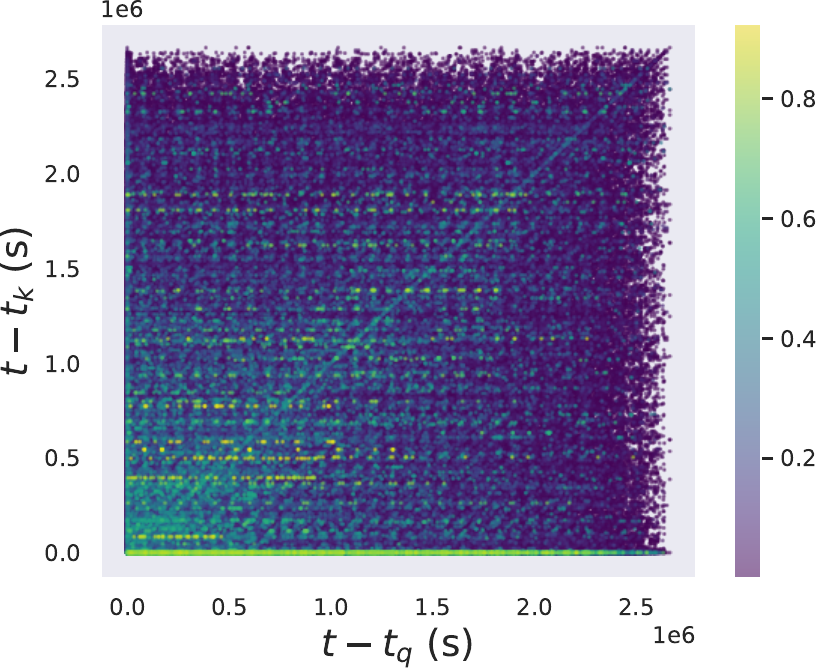}
        \caption{\centering Sinusoidal/\newline DyGFormer-separate}
        \label{fig:reddit_dygformer_separate_sinusoidal_src_tq_tk_attn}
    \end{subfigure}
    \hspace{2pt}
    \begin{subfigure}{0.22\columnwidth}
        \centering
        \includegraphics[width=\linewidth]{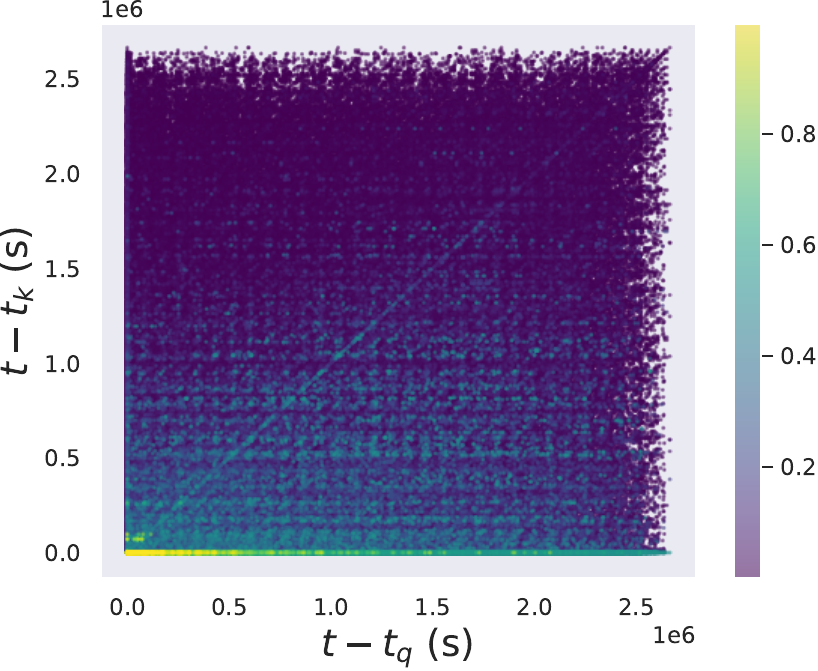}
        \caption{\centering Linear/\newline DyGFormer-separate}
        \label{fig:reddit_dygformer_separate_linear_src_tq_tk_attn}
    \end{subfigure}
    \hspace{2pt}
    \begin{subfigure}{0.22\columnwidth}
        \centering
        \includegraphics[width=\linewidth]{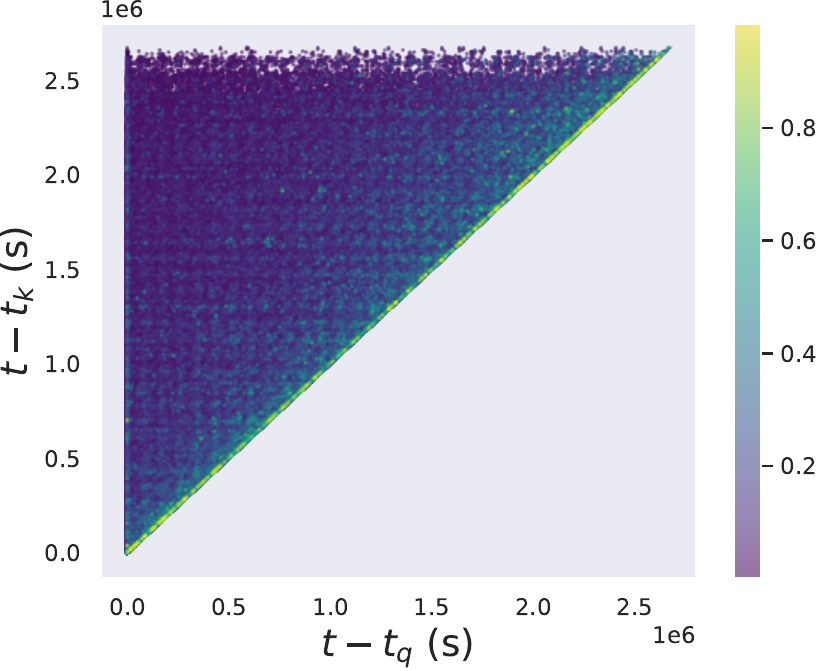}
        \caption{\centering Sinusoidal/\newline DyGDecoder}
    \end{subfigure}
    \hspace{2pt}
    \begin{subfigure}{0.22\columnwidth}
        \centering
        \includegraphics[width=\linewidth]{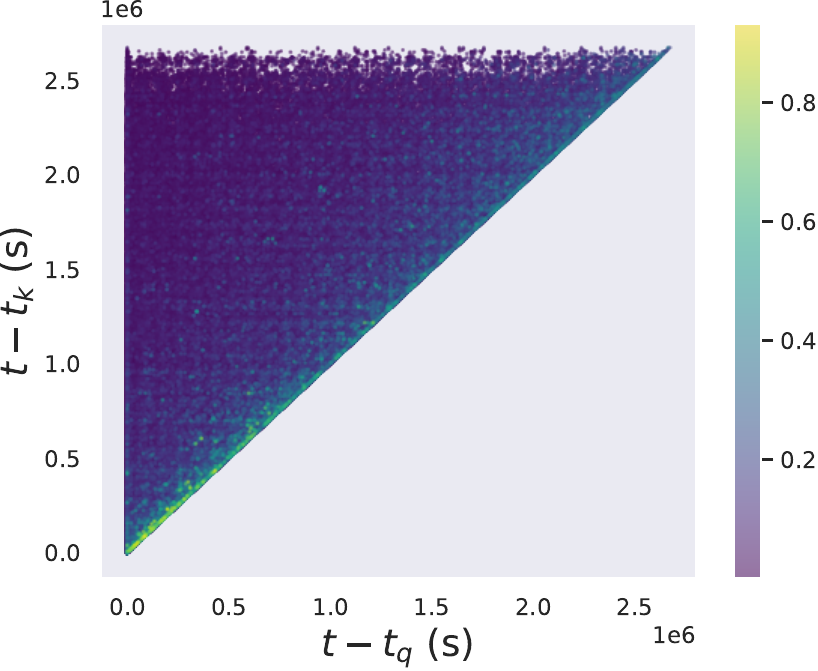}
        \caption{\centering Linear/\newline DyGDecoder}
    \end{subfigure}
    \caption{Attention scores (expressed in colors) of DyGFormer-separate and DyGDecoder with the sinusoidal and linear time encoders vs. $t-t_k$ vs. $t-t_q$ of the historical edge event sequences of Reddit's source nodes.}
    \label{fig:reddit_src_tq_tk_attn}
\end{figure}

Figure \ref{fig:reddit_dygformer_separate_sinusoidal_src_tq_tk_attn} and \ref{fig:reddit_dygformer_separate_linear_src_tq_tk_attn} reveal distinct attention patterns of DyGFormer-separate on Reddit when different time encoders are used.
With the linear encoder, the attention scores are negatively correlated with the elapsed time since the key edge events, so the colors smoothly transition along $t - t_k$. In contrast, the sinusoidal encoder produces scattered points of high attention scores separated by low-score regions, likely due to the time encoder's periodicity. However, such stark differences are absent across the other datasets. Regardless of the time encoder paired with DyGFormer-separate, attention scores tend to decline as $t - t_k$ increases, implying that the freshness of past interactions is typically the dominant factor.

Regarding the differences between the visualization results of DyGFormer-separate and DyGDecoder, the most notable one is that no attention scores are recorded in DyGDecoder for $t - t_q > t - t_k$.
This arises from its autoregressive attention mechanism.
Another interesting distinction is that DyGFormer-separate’s highest attention scores tend to be close to the line $t - t_k = 0$ while DyGDecoder’s are close to the line $t - t_k = t - t_q$.
This difference likely arises from their distinct pooling strategies.
DyGDecoder uses the final event representation (i.e., the target edge event) as the sequence representation, which aggregates information from preceding events. 
Consequently, it learns to attend more strongly to key vectors where $t_k$ is close to $t_q$.
In contrast, DyGFormer-separate applies mean pooling over all edge event representations to form a sequence representation, so even a query vector corresponding to an early event allocates most of its attention to the latest events (where $t_k$ is closer to $t$) because those events are more influential.

In summary, our experimental results demonstrate that linear time encoders mostly outperform sinusoidal encoders when paired with \tgat{} and \dgf{}.
Moreover, self-attention can effectively learn temporal patterns from linear time encodings with far fewer dimensions.
While both time encoders can be used together in practice, as in Time2vec \citep{kazemi2019time2vec} used in time series modeling \citep{shukla2021multitime}, our findings highlight that the linear encoder offers advantages in reducing model complexity. For large-scale temporal network applications, adopting linear time encodings can significantly streamline the architecture by avoiding high dimensional sinusoidal encodings, as used in \tgat{} and \dgf{}.

\section{Related Work}
Dynamic graphs \citep{holme2012temporal,longa2023graph} are evolving graphs associated with temporal information.
There are two classes of dynamic graph learning methods: discrete-time methods \citep{pareja2020evolvegcn,sankar2020dysat,you2022roland}, which discretize dynamic graphs into static snapshots and lose the temporal order within each snapshot, and continuous-time methods \citep{kumar2019predicting,trivedi2019dyrep,tgn_icml_grl2020,Xu2020Inductive,wang2021inductive,cong2023do,yu2023towards}, which directly model temporal dynamics and are more versatile. This work focuses on the latter.

Continuous-time methods can be divided into memory-based and non-memory-based methods.
Memory-based methods \citep{trivedi2019dyrep,kumar2019predicting,tgn_icml_grl2020} maintain a memory for each node, updating it whenever an event related to the node occurs.
In contrast, non-memory-based methods retrieve a node's historical information as needed, allowing for more expressive models to compute its temporal representation.
Non-memory-based methods can be further subdivided into graph-based and sequence-based approaches.
TGAT \citep{Xu2020Inductive} and TCL \citep{wang2021tcl} belong to the graph-based category, as they aggregate higher-order temporal neighbor information.
On the other hand, DyGFormer \citep{yu2023towards} and GraphMixer \citep{cong2023do} are sequence-based approaches because they only consider first-order temporal neighbors.
CAWN \citep{wang2021inductive} sits between the two approaches, as it utilizes a recurrent neural network (RNN) \citep{rumelhart1986learning} to learn representations from causal anonymous random walks over the graph.
It is worth noting that TCL uses linear functions to represent time values, which is similar to the linear time encoder that we present in Section \ref{sec:lin_time}.
However, TCL does not provide a justification for the design choice nor compare it with sinusoidal encodings.
Aside from methods, evaluation is also a crucial component of dynamic graph learning.
\citet{poursafaei2022towards} introduce historical and inductive negative sampling to test if future link prediction models simply memorize past edges.
TGB \citep{huang2024temporal,gastinger2024tgb} provides a standardized package that includes datasets, data loaders, and evaluators.
DyGLib \citep{yu2023towards} is a unified library for dynamic graph learning that contains implementations of various methods, data processing, and evaluation.

The sinusoidal time encoder \citep{xu2019self,Xu2020Inductive} resembles the positional encoding in Transformer \citep{waswani2017attention}, which uses sinusoidal functions to encode discrete language token positions. Subsequently, there have been a series of development in relative positional encodings (RPE) in language modeling \citep{shaw2018self,raffel2020exploring,press2022train,chi2022kerple,su2024roformer}, which inject relative positional information to attention scores after computing the inner products between key and query vectors. We do not consider language model RPEs in this paper for two reasons. Firstly, language model RPEs have not been directly applied to dynamic graph learning to the best of our knowledge.
Secondly, there is only a single position for each word token in language modeling while DyGFormer uses a patching technique to merge multiple edge events into a single ``token''. Therefore, each ``token'' in DyGFormer may be associated with multiple time positions and it is unclear how to compute the relative positional information between two tokens. This would not be a problem for sinusoidal and linear time encodings because they can be pre-computed and then patched into the same vector before self-attention.
Time2Vec \citep{kazemi2019time2vec} is a time encoding technique used in irregular time series modeling \citep{shukla2021multitime} that combines a linear time feature with multiple periodic time features. The key difference between Time2Vec's linear time feature and our linear time encoder is that our encoder standardizes time values, which we found significantly benefits optimization efficiency and improves results in the early stages of our experiments. This work focuses on the comparison between linear and sinusoidal time encoders to isolate the effects of their respective functional forms. We leave a thorough exploration of their combination to future work.

\section{Conclusion}
In this study, we present the linear time encoder as an effective alternative to the widely-used sinusoidal time encoder for attention-based dynamic graph learning models.
Our theoretical and experimental findings demonstrate that the linear time encoder not only allows the self-attention mechanism to capture temporal relationships but also outperforms sinusoidal time encoders in most scenarios, particularly in terms of predictive accuracy and parameter savings.
While both encoders can be used together simultaneously, our study sheds lights on the advantages of linear time features, which are overlooked in modern dynamic graph models.
Looking ahead, exploring the possibility of adapting language model RPEs to continuous-time dynamic graphs is an exciting direction, as they have shown better generalization performances in the language domain.
Ultimately, these advancements will enable more effective dynamic graph models, driving improvements in applications such as recommender systems and traffic forecasting.


\bibliography{main}
\bibliographystyle{tmlr}

\appendix
\section{Details of Attention-Based Dynamic Graph Learning Models} \label{app:model_details}
\subsection{\tgat{}}
\tgat{} \citep{Xu2020Inductive} is a graph-based dynamic graph learning model.
We describe the operation of the $l$'th layer of TGAT when computing the representation of node $i$ at target time $t$, denoted by $\Tilde{\mathbf{h}}_i^{(l)}(t) \in \mathbb{R}^d$.
The initial temporal representation of node $i$ is its raw attributes, i.e. $\Tilde{\mathbf{h}}_i^{(0)}(t) = \mathbf{x}_i$.
To compute the next layer of node $i$'s temporal representation, TGAT uses self-attention where node $i$ acts as the query and its temporal neighbors act as the keys and values.
For simplicity, we label the temporal neighbors of node $i$ as $\mathcal{N}_i^t = \{(v_1, t_1), (v_2, t_2), \ldots (v_{|\mathcal{N}_i^t|}, t_{|\mathcal{N}_i^t|})\}$.
Let $\Phi:\mathcal{I}_T \rightarrow \mathbb{R}^{d_T}$ denote the time encoder and $\mathbf{W}_Q, \mathbf{W}_K, \mathbf{W}_V \in \mathbb{R}^{(d+d_T)\times d_h}$ be the query, key, and value weight matrices.
For each node $v$ and time $t'$ involved, the input to the self-attention consists of the $l-1$'th layer representation $\Tilde{\mathbf{h}}_v^{(l-1)}(t')$ and the encoded time difference $\Phi(t-t')$.
In specific, the query vector corresponding to node $i$ at time $t$, $\mathbf{q}_i(t)$, is computed as:
\begin{align*}
    &\mathbf{q}_i(t) = \mathbf{W}_Q^\top \begin{bmatrix}
        \Tilde{\mathbf{h}}_i^{(l-1)}(t) \\
        \Phi(0) \\
    \end{bmatrix} \in \mathbb{R}^{d_h} \\
\end{align*}
The time difference input to $\Phi$ is $0$ because the difference between the target time $t$ and itself is $0$.
The input to the key and value weight matrices are the temporal neighbor features $\mathbf{N}_i(t)$:
\begin{align*}
    &\mathbf{N}_i(t) = \begin{bmatrix}
        {\tilde{\mathbf{h}}^{(l-1)}_{v_1}(t_1)}^\top & {\Phi(t - t_1)}^\top \\
        {\tilde{\mathbf{h}}^{(l-1)}_{v_2}(t_2)}^\top & {\Phi(t - t_2)}^\top \\
        \vdots \\
        {\tilde{\mathbf{h}}^{(l-1)}_{v_{|\mathcal{N}_i^t|}}(t_{|\mathcal{N}_i^t|})}^\top & {\Phi(t - t_{|\mathcal{N}_i^t|})}^\top \\
    \end{bmatrix} \in \mathbb{R}^{|\mathcal{N}_i^t| \times (d+d_T)}\\
\end{align*}
The remaining part of self-attention is carried out as follows to aggregate the neighbor information to a vector $\mathbf{h}_i(t)$:
\begin{align*}    
    \mathbf{K}_i(t) &= \mathbf{N}_i(t) \mathbf{W}_K \in \mathbb{R}^{|\mathcal{N}_i^t| \times d_h} \\
    \mathbf{V}_i(t) &= \mathbf{N}_i(t) \mathbf{W}_V \in \mathbb{R}^{|\mathcal{N}_i^t| \times d_h} \\
    \mathbf{h}_i(t) &= \text{Attn}(\mathbf{q}_i(t), \mathbf{K}_i(t), \mathbf{V}_i(t)) \\
                    &= \mathbf{V}_i(t)^\top \text{Softmax}(\frac{\mathbf{K}_i(t) \mathbf{q}_i(t)}{\sqrt{d_h}}) \in \mathbb{R}^{d_h}
\end{align*}
Finally, TGAT concatenates $\mathbf{h}_i(t)$ with node $i$'s raw attributes and forwards it through an MLP to obtain its $l$'th-layer temporal representation $\Tilde{\mathbf{h}}_i^{(l)}(t)$.
\begin{align*}
    \Tilde{\mathbf{h}}_i^{(l)}(t) 
                                  &= \mathbf{W}_1 \text{ReLU}(\mathbf{W}_0  \begin{bmatrix}
                                      \mathbf{h}_i(t) \\
                                      \mathbf{x}_i 
                                  \end{bmatrix}+ \mathbf{b}_0)  + \mathbf{b}_1\\
\end{align*}
where $\mathbf{W}_0 \in \mathbb{R}^{d_f \times (d_h + d_V) }, \mathbf{W}_1 \in \mathbb{R}^{d \times d_f}, \mathbf{b}_0 \in \mathbb{R}^{d_f}, \mathbf{b}_1 \in \mathbb{R}^d$ are the MLP parameters.
TGAT can be extended to make use of edge features corresponding to each temporal neighbor by concatenating them to $\mathbf{N}_i(t)$.

\subsection{\dgf{}}
DyGFormer \citep{yu2023towards} is a sequence-based dynamic graph learning model.
To compute the temporal representation of node $i$, the target edge $(i,j,t)$ and the historical edge interactions involving $i$, $\mathcal{S}_i^t$, are retrieved and sorted by the timestamps in ascending order.
We denote the edge collection by $\mathcal{\tilde{S}}_i^t \coloneq \mathcal{S}_i^t \cup \{(i,j,t)\}$.
Then, the features corresponding to those edges are collected and arranged in the same order.
There are four channels of features: temporal features, neighbor node features, edge features, and neighbor co-occurrence features.
The temporal feature matrix $\mathbf{X}_{i,T}^t \in \mathbb{R}^{|\mathcal{\tilde{S}}_i^t| \times d_T}$ consists of the time encoding $\Phi(t-t') \in \mathbb{R}^{d_T}$ of each edge that has a timestamp $t'$. 
Similarly, the neighbor node feature matrix $\mathbf{X}_{i,V}^t \in \mathbb{R}^{|\mathcal{\tilde{S}}_i^t| \times d_V}$ and edge feature matrix $\mathbf{X}_{i,E}^t \in \mathbb{R}^{|\mathcal{\tilde{S}}_i^t|\times d_E}$ contain the raw neighbor node attributes and edge attributes of each edge. 
Note that since the target edge has not occurred, its edge attributes are filled with a zero vector.

The use of neighbor co-occurrence features is a key technique in \dgf{} to strengthen the modeling of the correlation between two nodes, $i$ and $j$, of the target edge $(i,j,t)$ for better link prediction.
Given $\mathcal{\tilde{S}}_i^t$ and $\mathcal{\tilde{S}}_j^t$,  the occurrence frequencies of each neighbor in the histories of nodes $i$ and $j$ are encoded into the neighbor co-occurrence matrices $\mathbf{C}_i^t \in \mathbb{R}^{|\mathcal{\tilde{S}}_i^t|\times 2}$ and $\mathbf{C}_j^t \in \mathbb{R}^{|\mathcal{\tilde{S}}_j^t|\times 2}$.
In specific, the $m$'th row of $\mathbf{C}_*^t$ contains the occurrence frequencies of the $m$'th temporal neighbor of node $*$ in node $i$ and $j$'s histories where $*$ can be either $i$ or $j$. 
For example, if the temporal neighbors of $i$ and $j$ (including the target edge) are $\{(u,t_1),(v,t_2),(w,t_3),(j,t)\}$ and $\{(v,t_4),(u,t_5),(v,t_6),(v,t_7),(i,t)\}$, we can count the frequencies of $u, v, w, i$ and $j$'s appearances in $i/j$'s histories as $1/1$, $1/3$, $1/0$, $0/1$, $1/0$. 
Then, the neighbor co-occurrence matrices $\mathbf{C}_i^t$ and $\mathbf{C}_j^t $ can be written as: 
\begin{equation*}
    \mathbf{C}_i^t = \begin{bmatrix}
        1 & 1 \\
        1 & 3 \\
        1 & 0 \\
        1 & 0 \\
    \end{bmatrix},
    \mathbf{C}_j^t = \begin{bmatrix}
        1 & 3 \\
        1 & 1 \\
        1 & 3 \\
        1 & 3 \\
        0 & 1 \\
    \end{bmatrix}
\end{equation*}
Later, the neighbor co-occurrence matrices are transformed by an MLP $f_C$ into neighbor co-occurrence features of dimension $d_C$:
\begin{equation*}
    \mathbf{X}_{*,C}^t = f_C((\mathbf{C}_*^t)_{:,1}) + f_C((\mathbf{C}_*^t)_{:,2}) \in \mathbb{R}^{|\mathcal{\tilde{S}}_*^t|\times d_C}
\end{equation*}
where $*$ can be either $i$ or $j$.


After collecting the four channels of feature matrices, DyGFormer applies a patching technique, which essentially reshapes the matrices, to allow the transformer to capture longer-range information without increasing the computational cost.
Let $P$ denote the patch size.
Given a sequence of channel $*$ features $\mathbf{X}_{i,*}^t \in \mathbb{R}^{|\mathcal{\tilde{S}}_i^t|\times d_*}$, the sequence is divided into $l_i^t \coloneq \lceil|\mathcal{\tilde{S}}_i^t|/P\rceil$ patches and each patch is reshaped into a single $Pd_*$-dimensional vector, resulting in a shorter sequence of patches $\mathbf{P}_{i,*}^t \in \mathbb{R}^{l_i^t \times Pd_*}$.
If we fix the resulting sequence length $l_i^t$ and increase the patch size $P$, the transformer can see through a longer history.
The tradeoff is that the information of historical edge interactions within the same patch will be aggregated via a simple concatenation instead of the attention mechanism.
The patching technique is applied to all channels of features and produces the patch sequences for node $i$ : $\mathbf{P}_{i,V}^t \in \mathbb{R}^{l_i^t \times Pd_V}, \mathbf{P}_{i,E}^t \in \mathbb{R}^{l_i^t \times Pd_E}, \mathbf{P}_{i,T}^t \in \mathbb{R}^{l_i^t \times Pd_T}, \mathbf{P}_{i,C}^t \in \mathbb{R}^{l_i^t \times Pd_C}$.
Afterward, each channel of the patch sequences is linearly transformed into the same dimension of $d_{\text{ch}}$ and concatenated, resulting in the final input to the transformer:
\begin{align*}
    \mathbf{X}_i^t = f_1(\mathbf{P}_{i,V}^t) || f_2(\mathbf{P}_{i,E}^t) || f_3(\mathbf{P}_{i,T}^t) || f_4(\mathbf{P}_{i,C}^t) \in \mathbb{R}^{l_i^t\times 4d_{\text{ch}}}
\end{align*}
where $f_1, f_2, f_3,$ and $f_4$ are linear transformations.

The feature matrices $\mathbf{X}_i^t$ and $\mathbf{X}_j^t$ corresponding to the two nodes of the target edge $(i,j,t)$ are concatenated in the sequence dimension:
\begin{align*}
    \mathbf{X}^t = \begin{bmatrix}
        \mathbf{X}_i^t \\
        \mathbf{X}_j^t \\
    \end{bmatrix} \in \mathbb{R}^{(l_i^t + l_j^t) \times 4 d_{\text{ch}}}
\end{align*}
and forwarded through $L$ transformer layers to output $\mathbf{Z}^{(L)}(t) \in \mathbb{R}^{(l_i^t + l_j^t)\times d_h}$ where $d_h$ is the attention dimension.

\dgf{} sets the attention dimension $d_h$ of all layers to be the same as the input dimension, i.e. $d_h = 4 d_{\text{ch}}$.
Let $\mathbf{Z}^{(l)}(t) \in \mathbb{R}^{(l_i^t + l_j^t) \times d_h}$ denote the $l$'th-layer sequence representation that is initialized as the input, i.e. $\mathbf{Z}^{(0)}(t) = \mathbf{X}^t$.
$\mathbf{W}^{(l)}_Q, \mathbf{W}^{(l)}_K, \mathbf{W}^{(l)}_V \in \mathbb{R}^{d_h \times d_h}$ denote the $l$'th layer attention weight matrices, and $\mathbf{W}_1^{(l)} \in \mathbb{R}^{d_h\times 4 d_h},\mathbf{W}_2^{(l)} \in \mathbb{R}^{4 d_h \times d_h}, \mathbf{b}_1^{(l)}\in \mathbb{R}^{4 d_h}$, $\mathbf{b}_2^{(l)}\in \mathbb{R}^{d_h}$ denote the $l$'th-layer MLP parameters.
The $l$'th-layer transformer computation goes as:
\begin{align*}
    &\mathbf{Q}^{(l)} = \text{LayerNorm}(\mathbf{Z}^{(l-1)}(t)) \mathbf{W}^{(l)}_Q \\
    &\mathbf{K}^{(l)} = \text{LayerNorm}(\mathbf{Z}^{(l-1)}(t)) \mathbf{W}^{(l)}_K \\
    &\mathbf{V}^{(l)} = \text{LayerNorm}(\mathbf{Z}^{(l-1)}(t)) \mathbf{W}^{(l)}_V \\
    &\mathbf{O}^{(l)} = \text{Attn}(\mathbf{Q}^{(l)}, \mathbf{K}^{(l)}, \mathbf{V}^{(l)}) = \text{Softmax}(\mathbf{Q}^{(l)} {\mathbf{K}^{(l)}}^\top/\sqrt{d_h}) \mathbf{V}^{(l)} \\
    &\mathbf{H}^{(l)} = \mathbf{O}^{(l)} + \mathbf{Z}^{(l-1)}(t) \\
    &\mathbf{B}^{(l)}_1  = \begin{bmatrix}
        {\mathbf{b}^{(l)}_1}^\top \\
        \vdots \\
        {\mathbf{b}^{(l)}_1}^\top \\
    \end{bmatrix}_{(l_i^t + l_j^t) \times 4 d_h} \,
    \mathbf{B}^{(l)}_2  = \begin{bmatrix}
        {\mathbf{b}^{(l)}_2}^\top \\
        \vdots \\
        {\mathbf{b}^{(l)}_2}^\top \\
    \end{bmatrix}_{(l_i^t + l_j^t) \times d_h}\\
    &\mathbf{Y}^{(l)} = \text{GELU}(\text{LayerNorm}(\mathbf{H}^{(l)}) \mathbf{W}^{(l)}_1 + \mathbf{B}^{(l)}_1)\mathbf{W}_2^{(l)} + \mathbf{B}^{(l)}_2 \\
    &\mathbf{Z}^{(l)}(t) = \mathbf{H}^{(l)} + \mathbf{Y}^{(l)} \\
\end{align*}

After $L$ layers of transformer computations, DyGFormer applies mean pooling followed by a linear transformation to the positions corresponding to $i$ and $j$ in $\mathbf{Z}^{(L)}(t)$ to get their respective temporal representations.

\section{Proof of Proposition \ref{prop:timespan}}\label{app:prop_proof}
\textbf{Proposition \ref{prop:timespan} (Restated).}
\emph{
  Let $\boldsymbol{x} = \{(\mathbf{x}_1, t_1), \ldots , (\mathbf{x}_M, t_M)\}$ be a set of events where each event $m$ contains a feature vector $\mathbf{x}_m \in \mathbb{R}^d$ and a timestamp $t_m \in \mathcal{I}_T$.
    Let $t$ be the target time.
    There exists a linear time encoder $\Phi: \mathcal{I}_T \rightarrow \mathbb{R}^{d_T}$, a query weight matrix $\mathbf{W}_Q \in \mathbb{R}^{(d_T+d) \times d_h}$, and a key weight matrix $\mathbf{W}_K \in \mathbb{R}^{(d_T+d) \times d_h}$ that can factor the time span $t_m - t_n$ between any two events $m,n$ into the attention score: 
    \begin{align*}
        &\mathbf{q}_m =
        \mathbf{W}_Q^\top
        \begin{bmatrix}
            \Phi(t - t_m) \\
            \mathbf{x}_m  \\
        \end{bmatrix}, \,
        \mathbf{k}_n =
        \mathbf{W}_K^\top
        \begin{bmatrix}
            \Phi(t - t_n) \\
            \mathbf{x}_n  \\
        \end{bmatrix} \\
        & \langle \mathbf{q}_m, \mathbf{k}_n \rangle = h(t_m - t_n) + g(\mathbf{q}_m, \mathbf{k}_n)
    \end{align*}
    where $h: \mathbb{R} \rightarrow \mathbb{R}$ models the patterns relevant to the time span and $g: \mathbb{R}^{d_T+d} \times \mathbb{R}^{d_T+d} \rightarrow \mathbb{R}$ models other patterns between the events.
}
\begin{proof}
    We provide a construction of $\Phi$, $\mathbf{W}_Q$, and $\mathbf{W}_K$ that only requires specifying two dimensions of the parameters.
    We first construct the linear time encoder $\Phi$ as follows:
    \begin{align*}
        \Phi(t - t') = \begin{bmatrix}
            w_1 (t - t') + b_1 \\
            1 \\
            w_3 (t - t') + b_3 \\
            \vdots \\
            w_{d_T} (t - t') + b_{d_T} \\
        \end{bmatrix}
    \end{align*}
    where $w_1 \neq 0$, $w_2 = 0$, and $b_2 = 1$.
    There are no restrictions on other time encoder parameters: $b_1, b_3, b_4, \ldots b_{d_T}$ and $w_3, w_4, \ldots w_{d_T}$.
    Then, we construct the query matrix $\mathbf{W}_Q$ and the key matrix $\mathbf{W}_K$ as follow:
    \begin{align*}
        \mathbf{W}_Q = \begin{bmatrix}
            1      & 0      & q_{1,3}      & \ldots & q_{1,d_h} \\
            0      & -1     & q_{2,3}      & \ldots & q_{2,d_h} \\
            0      & 0      & q_{3,3}      & \ldots & q_{3,d_h} \\
            \vdots & \vdots & \vdots       & \ddots & \vdots    \\
            0      & 0      & q_{d_T+d,3}  & \ldots & q_{d_T+d,d_h}    \\
        \end{bmatrix}_{(d_T+d) \times d_h}
        \mathbf{W}_K = \begin{bmatrix}
            0      & 1      & k_{1,3}     & \ldots & k_{1,d_h} \\
            1      & 0      & k_{2,3}     & \ldots & k_{2,d_h} \\
            0      & 0      & k_{3,3}     & \ldots & k_{3,d_h} \\
            \vdots & \vdots & \vdots      & \ddots & \vdots    \\
            0      & 0      & k_{d_T+d,3} & \ldots & k_{d_T+d,d_h} \\            
        \end{bmatrix}_{(d_T+d) \times d_h}
    \end{align*}
    where only the first two columns are specified.
    Following the constructions, the query vector $\mathbf{q}_m$ representing event $m$ and the key vector $\mathbf{k}_n$ representing event $n$ have the form of:
    \begin{align*}
        \mathbf{q}_m = \mathbf{W}_Q^\top \begin{bmatrix}
            \Phi(t - t_m) \\
            \mathbf{x}_m \\
            \end{bmatrix} =
            \begin{bmatrix}
                w_1 (t - t_m) + b_1 \\
                -1 \\
                q_3 \\
                \vdots \\
                q_{d_h} \\
            \end{bmatrix},
        \mathbf{k}_n = \mathbf{W}_K^\top \begin{bmatrix}
            \Phi(t - t_n) \\
            \mathbf{x}_n \\
            \end{bmatrix} =
            \begin{bmatrix}
                1 \\
                w_1 (t - t_n) + b_1 \\
                k_3 \\
                \vdots \\
                k_{d_h} \\
            \end{bmatrix}
    \end{align*}
    Therefore, the inner product between the two vectors will compute the time span between two events and factor it into the attention score:
    \begin{align*}
        \langle \mathbf{q}_m, \mathbf{k}_n \rangle = -w_1(t_m - t_n) + \sum_{i=3}^{d_h} q_i k_i = h(t_m - t_n) + g(\mathbf{q}_m, \mathbf{k}_n)
    \end{align*}
    Note that this is just one way of constructing the linear time encoder and attention weights.
    There may exist numerous other possible implementations that achieve the same outcome.
    This proof is inspired by \citet{kazemnejad2024impact} where they show that a two-layer autoregressive transformer without using positional encodings can recover the absolute and relative positions of tokens.
    The key difference between our setting and theirs is that our notion of distance between events is the time interval between two events instead of the number of ``tokens'' between them.
\end{proof}

\section{Dataset Details}\label{app:dataset}

We describe the datasets that we used for the experiments below and present the dataset statistics in Table \ref{tab:data_stats}:
\begin{itemize}
    \item UCI \citep{panzarasa2009patterns}: An unattributed online communication network among University of California Irvine students.
    \item Wikipedia \citep{kumar2019predicting}: This dataset contains edits made to Wikipedia pages over one month where editors and pages are represented as nodes and timestamped posting requests are edges. Edge features are LIWC feature vectors \citep{pennebaker2001linguistic} of length 172 extracted from the edit texts.
    \item Enron \citep{shetty2004enron}: An unattributed temporal network of approximately 50,000 emails exchanged among employees of the Enron energy company over three years.
    \item Reddit \citep{kumar2019predicting}: This dataset consists of subreddit posts over one month with nodes representing users or posts and edges representing timestamped posting requests. Edge features are LIWC feature vectors \citep{pennebaker2001linguistic} of length 172 extracted from the edit texts.
    \item LastFM \citep{kumar2019predicting}: An unattributed interaction network where nodes represent users and songs, and edges denote instances of a user listening to a song. The dataset captures the listening behaviors of 1,000 users with the 1,000 most popular songs over the course of one month.
    \item US Legis \citep{fowler2006legislative,huang2020laplacian}: A senate co-sponsorship graph documenting social interactions between U.S. Senate legislators with edge weights indicating how many times two legislators have co-sponsored a bill.
\end{itemize}

\begin{table}[t]
\caption{Dataset statistics.}
\begin{center}
\addtolength{\tabcolsep}{1pt}
\scalebox{0.85}{
\begin{tabular}{lcccccc}
\toprule
\multicolumn{1}{c}{\bf Dataset} & \multicolumn{1}{c}{\bf \# Nodes} & \multicolumn{1}{c}{\bf Total \# Edges} & \multicolumn{1}{c}{\bf Unique \# Edges} & \multicolumn{1}{c}{\bf Unique \# Steps} & \multicolumn{1}{c}{\bf Time Granularity} & \multicolumn{1}{c}{\bf Duration} \\ \midrule
UCI & 1899 & 59835 & 20296 & 58911 & Unix timestamp & 196 days \\
Wikipedia & 9227 & 157474 & 18257 & 152757 & Unix timestamp & 1 month \\
Enron & 184 & 125235 & 3125 & 22632 & Unix timestamp & 3 years \\
Reddit & 10984 & 672447 & 78516 & 669065 & Unix timestamp & 1 month \\
LastFM & 1980  & 1293103 & 154993 & 1283614 & Unix timestamp & 1 month \\
US Legis & 225 & 60396 & 26423 & 12 & Congresses & 12 congresses \\
\bottomrule
\end{tabular}}
\addtolength{\tabcolsep}{-1pt}
\label{tab:data_stats}
\end{center}
\end{table}

\section{Supplementary Materials For Experiments}\label{app:exp}



\subsection{Cosine vs. Sine-Cosine Pairs}
We compare sinusoidal encoders with all cosines as implemented in \tgat{} and \dgf{} and sinusoidal encoders with sine-cosine pairs as described in their corresponding papers to determine which to use in our study on dynamic graph datasets.
We train TGAT, DyGFormer, DyGFormer-separate, and DyGDecoder using both versions of the time encoder and present the results in Table \ref{table:sine_vs_sinecosine}.
The version with all cosines outperforms the sine-cosine pair encoder in $9$ out of $24$ model-dataset combinations, while the sine-cosine pair version is better in $12$ combinations.
The remaining $3$ combinations result in tied average AP scores.
Our findings indicate that there is no significant difference in performance between the two time encoders.
Therefore, we simply adopt the sinusoidal time encoder with all cosines as implemented in TGAT and DyGFormer for the remaining of our empirical study.

\begin{table*}[t]
\caption{A comparison between the sinusoidal time encoder using all cosine functions and one using sine-cosine pairs over six datasets. The better performance for each model variant is highlighted in \textbf{bold}. Same performances are highlighted by \underline{underline}.}
\label{table:sine_vs_sinecosine}
\begin{center}
\addtolength{\tabcolsep}{1pt}
\scalebox{0.75}{
\begin{tabular}{lccccccccc}
\toprule
\multicolumn{1}{l}{\parbox{2.5cm}{\bf Test AP \\ (Random NS)}}  & \multicolumn{1}{c}{\bf Time Encoder} & \multicolumn{1}{c}{\bf UCI}  & \multicolumn{1}{c}{\bf Wikipedia}  & \multicolumn{1}{c}{\bf Enron}  & \multicolumn{1}{c}{\bf Reddit} & \multicolumn{1}{c}{\bf LastFM} & \multicolumn{1}{c}{\bf US Legis} & \multicolumn{1}{c}{\bf \# Wins} \\ \midrule 
TGAT & Cosine                       & 80.27{\scriptsize$\pm$0.42} & 97.00{\scriptsize$\pm$0.15} & \textbf{72.07}{\scriptsize$\pm$1.10} & \textbf{98.55}{\scriptsize$\pm$0.01} & 75.82{\scriptsize$\pm$0.27} & \textbf{67.42}{\scriptsize$\pm$1.40} & 3 \\
     & Sine-cosine                & \textbf{81.48}{\scriptsize$\pm$0.47} & \textbf{97.11}{\scriptsize$\pm$0.14} & 70.94{\scriptsize$\pm$2.12} & 98.54{\scriptsize$\pm$0.02} & \textbf{76.05}{\scriptsize$\pm$0.32} & 67.14{\scriptsize$\pm$1.90} & 3 \\
\midrule
DyGFormer & Cosine                  & 96.01{\scriptsize$\pm$0.10} & \underline{99.04}{\scriptsize$\pm$0.02} & \textbf{93.63}{\scriptsize$\pm$0.11} & 99.20{\scriptsize$\pm$0.01} & 93.68{\scriptsize$\pm$0.04} & 70.12{\scriptsize$\pm$0.60} & 1\\
          & Sine-cosine                 & \textbf{96.03}{\scriptsize$\pm$0.06} & \underline{99.04}{\scriptsize$\pm$0.04} & 93.53{\scriptsize$\pm$0.10} & \textbf{99.21}{\scriptsize$\pm$0.01} & \textbf{93.73}{\scriptsize$\pm$0.05} & \textbf{70.23}{\scriptsize$\pm$0.72} & 4 \\
\midrule    
DyGFormer-separate & Cosine         & \textbf{96.15}{\scriptsize$\pm$0.02} & \underline{99.04}{\scriptsize$\pm$0.02} & \textbf{93.49}{\scriptsize$\pm$0.15} & \underline{99.24}{\scriptsize$\pm$0.01} & 93.69{\scriptsize$\pm$0.07} & 68.03{\scriptsize$\pm$1.11} & 2 \\
                   & Sine-cosine      & 95.88{\scriptsize$\pm$0.76} & \underline{99.04}{\scriptsize$\pm$0.02} & 93.42{\scriptsize$\pm$0.32} & \underline{99.24}{\scriptsize$\pm$0.01} & \textbf{93.73}{\scriptsize$\pm$0.07} & \textbf{69.25}{\scriptsize$\pm$1.15} & 2 \\
\midrule
DyGDecoder & Cosine                 & \textbf{96.11}{\scriptsize$\pm$0.06} & 99.03{\scriptsize$\pm$0.02} & \textbf{93.53}{\scriptsize$\pm$0.04} & 99.25{\scriptsize$\pm$0.01} & 94.21{\scriptsize$\pm$0.03} & \textbf{69.39}{\scriptsize$\pm$1.06} & 3\\
           & Sine-cosine              & 96.09{\scriptsize$\pm$0.03} & \textbf{99.07}{\scriptsize$\pm$0.02} & 93.38{\scriptsize$\pm$0.32} & \textbf{99.26}{\scriptsize$\pm$0.01} & \textbf{94.29}{\scriptsize$\pm$0.04} & 68.46{\scriptsize$\pm$2.21} & 3\\
\bottomrule
\end{tabular}}
\addtolength{\tabcolsep}{-1pt}
\end{center}
\end{table*}

\begin{table}[t]
\caption{Statistics of the waiting time since the last edge interaction of source and destination nodes of each edge in UCI. We ignore the first appearance of a node since there are no previous interactions. The statistics are aggregated by train/validation/test splits.} 
\label{table:uci_waiting_time}
\begin{center}
\addtolength{\tabcolsep}{2pt}
\scalebox{0.8}{
\begin{tabular}{lccccc}
\toprule
\multirow{2}{*}{\bf Waiting Time (Minutes)} & \multicolumn{2}{c}{\bf Source Node} & \multicolumn{2}{c}{\bf Destination Node} \\
\cmidrule{2-5}
 & \bf Average & \bf Median & \bf Average & \bf Median \\
\midrule
Train & 266.90 & 3.97 & 415.55 & 9.95 \\
Validation & 414.49 & 4.60 & 717.26 & 11.16 \\
Test & 2129.37 & 26.25 & 4476.33 & 145.57 \\
\bottomrule
\end{tabular}}
\end{center}
\end{table}

\begin{table*}[t]
\caption{A comparison between sinusoidal time encoding and sinusoidal positional encoding. The better performance for each model variant is highlighted in \textbf{bold}. The number in the parentheses is the performance change after switching from encoding time values to position indices.}
\label{table:time_vs_pos_random}
\begin{center}
\addtolength{\tabcolsep}{1pt}
\scalebox{0.65}{
\begin{tabular}{lcccccccc}
\toprule
\multicolumn{1}{l}{\parbox{2.7cm}{\bf Test AP \\ (Random NS)}}  & \multicolumn{1}{c}{\bf \parbox{2.7cm}{\centering Encoded\\Feature}} & \multicolumn{1}{c}{\bf UCI}  & \multicolumn{1}{c}{\bf Wikipedia}  & \multicolumn{1}{c}{\bf Enron}  & \multicolumn{1}{c}{\bf Reddit} & \multicolumn{1}{c}{\bf LastFM} & \multicolumn{1}{c}{\bf US Legis} \\ \midrule
TGAT & Time  & \textbf{80.27}{\scriptsize$\pm$0.42} & \textbf{97.00}{\scriptsize$\pm$0.15} & \textbf{72.07}{\scriptsize$\pm$1.10} & 98.55{\scriptsize$\pm$0.01} & \textbf{75.82}{\scriptsize$\pm$0.27} & \textbf{67.42}{\scriptsize$\pm$1.40} \\
     & Position& 79.09{\scriptsize$\pm$0.26} (-1.18) & 95.98{\scriptsize$\pm$0.10} (-1.02) & 65.61{\scriptsize$\pm$2.78} (-6.46) & \textbf{98.63}{\scriptsize$\pm$0.01} (+0.08) & 58.78{\scriptsize$\pm$0.21} (-17.04) & 50.11{\scriptsize$\pm$3.69} (-17.31) \\
\midrule
DyGFormer & Time  & \textbf{96.01}{\scriptsize$\pm$0.10} & \textbf{99.04}{\scriptsize$\pm$0.02} & \textbf{93.63}{\scriptsize$\pm$0.11} & \textbf{99.20}{\scriptsize$\pm$0.01} & 93.68{\scriptsize$\pm$0.04} & \textbf{70.12}{\scriptsize$\pm$0.60}\\
          & Position & 95.94{\scriptsize$\pm$0.02} (-0.07) & 98.87{\scriptsize$\pm$0.03} (-0.17) & 92.97{\scriptsize$\pm$0.34} (-0.66) & 99.19{\scriptsize$\pm$0.01} (-0.01) & \textbf{94.26}{\scriptsize$\pm$0.01} (+0.58) & 66.58{\scriptsize$\pm$1.59} (-3.54) \\
\bottomrule
\end{tabular}}
\addtolength{\tabcolsep}{-1pt}
\end{center}
\end{table*}

\subsection{Waiting Time Statistics in UCI} \label{app:uci_waiting_time}
In the main experimental results presented in Table \ref{table:random_ns_results}, we observe that the sinusoidal-scale encoder consistently underperforms compared to the sinusoidal encoder on the UCI dataset. To investigate this, we analyze the dataset statistics, focusing on the waiting time between a node's current interaction and its previous one, with the results shown in Table \ref{table:uci_waiting_time}. Our analysis reveals that the waiting time significantly increases during the test split. This increase may explain the underperformance of the sinusoidal-scale encoder, as its scaler relies on the empirical mean and standard deviation from the train split.

\begin{table}[t]
\caption{Training time per epoch (in seconds) across datasets and model variants. The shorter training time for each model-dataset combination is highlighted in \textbf{bold}.}
\centering
\small
\scalebox{0.8}{
\begin{tabular}{lccccccc}
\toprule
\textbf{Training time / epoch (s)} & \textbf{Time Encoder} 
& \textbf{UCI} & \textbf{Wikipedia} & \textbf{Enron} & \textbf{Reddit} & \textbf{Last FM} & \textbf{USLegis} \\
\midrule
TGAT                & Sinusoidal & 30.19 & 75.18 & 65.85 & 1219.34 & 734.51  & 31.38 \\
                    & Linear     & \textbf{29.79} & \textbf{73.99} & \textbf{65.40} & \textbf{1215.03} & \textbf{732.40}  & \textbf{31.18} \\
\midrule
DyGFormer           & Sinusoidal & 24.72 & 59.00 & 82.15 & 472.43  & 1607.48 & 40.39 \\
                    & Linear     & \textbf{23.12} & \textbf{53.30} & \textbf{70.45} & \textbf{274.52}  & \textbf{1087.59} & \textbf{35.25} \\
\midrule
DyGFormer-separate  & Sinusoidal & 24.87 & 59.07 & 85.51 & 298.74  & 1481.35 & 36.54 \\
                    & Linear     & \textbf{22.76} & \textbf{58.58} & \textbf{73.08} & \textbf{282.65}  & \textbf{994.33}  & \textbf{32.94} \\
\midrule
DyGDecoder          & Sinusoidal & \textbf{24.71} & 62.83 & 85.91 & 323.38  & 1485.37 & 39.82 \\
                    & Linear     & 24.84 & \textbf{58.80} & \textbf{64.99} & \textbf{309.53}  & \textbf{1370.09} & \textbf{33.76} \\
\bottomrule
\end{tabular}}
\label{tab:per_epoch_training_time}
\end{table}

\begin{table}[t]
\caption{Peak GPU memory usage (in GB) across datasets and model variants. The fewer GPU memory usage for each model-dataset combination is highlighted in \textbf{bold}.}
\centering
\small
\scalebox{0.8}{
\begin{tabular}{lccccccc}
\toprule
\textbf{GPU memory usage (GB)} & \textbf{Time Encoder} 
& \textbf{UCI} & \textbf{Wikipedia} & \textbf{Enron} & \textbf{Reddit} & \textbf{Last FM} & \textbf{USLegis} \\
\midrule
TGAT                & Sinusoidal & 1.930 & 1.997 & 1.972 & 2.332 & 2.728 & 1.930 \\
                    & Linear     & \textbf{1.793} & \textbf{1.861} & \textbf{1.836} & \textbf{2.196} & \textbf{2.590} & \textbf{1.793} \\
\midrule
DyGFormer           & Sinusoidal & 1.024 & 1.094 & 1.581 & 1.509 & 3.180 & 1.544 \\
                    & Linear     & \textbf{1.002} & \textbf{0.729} & \textbf{1.505} & \textbf{1.121} & \textbf{2.796} & \textbf{1.463} \\
\midrule
DyGFormer-separate  & Sinusoidal & 0.963 & \textbf{0.709} & 1.642 & 1.136 & 3.125 & 1.232 \\
                    & Linear     & \textbf{0.621} & 1.006 & \textbf{1.448} & \textbf{1.087} & \textbf{2.741} & \textbf{1.037} \\
\midrule
DyGDecoder          & Sinusoidal & 0.646 & 1.046 & 1.671 & 1.467 & 3.141 & 1.627 \\
                    & Linear     & \textbf{0.622} & \textbf{1.025} & \textbf{1.083} & \textbf{1.421} & \textbf{2.753} & \textbf{1.039} \\
\bottomrule
\end{tabular}}
\label{tab:gpu_memory_usage}
\end{table}

\begin{table}[t!]
\caption{Test AP (random NS) of TGN and CAWN with sinusoidal and linear time encoders. The better performance for each model-dataset combination is \textbf{boldfaced}.}
\centering
\scalebox{0.8}{
\begin{tabular}{lccccc}
\toprule
\textbf{Test AP (random NS)} & \textbf{Time Encoder} & \textbf{UCI} & \textbf{Wikipedia} & \textbf{Enron} & \textbf{Reddit} \\
\midrule
\multirow{2}{*}{TGN} 
  & Sinusoidal & 87.00{\scriptsize$\pm$0.96} & 97.91{\scriptsize$\pm$0.09} & 86.92{\scriptsize$\pm$1.21} & 98.51{\scriptsize$\pm$0.05} \\
  & Linear     & \textbf{94.53}{\scriptsize$\pm$0.39} & \textbf{98.36}{\scriptsize$\pm$0.05} & \textbf{90.03}{\scriptsize$\pm$0.45} & \textbf{98.61}{\scriptsize$\pm$0.03} \\
\midrule
\multirow{2}{*}{CAWN} 
  & Sinusoidal & 95.18{\scriptsize$\pm$0.06} & 98.74{\scriptsize$\pm$0.03} & 91.09{\scriptsize$\pm$0.10} & \textbf{99.12}{\scriptsize$\pm$0.01} \\
  & Linear     & \textbf{96.21}{\scriptsize$\pm$0.22} & \textbf{98.99}{\scriptsize$\pm$0.03} & \textbf{91.54}{\scriptsize$\pm$0.28} & 99.10{\scriptsize$\pm$0.01} \\
\bottomrule
\end{tabular}}
\label{tab:tgn_cawn}
\end{table}

\begin{table}[t!]
\caption{Test AU-ROC of TGAT and DyGFormer with sinusoidal and linear time encoders on dynamic node classification datasets. The better performance for each model-dataset combination is \textbf{boldfaced}.}
\centering
\scalebox{0.8}{
\begin{tabular}{lccccc}
\toprule
\textbf{Test AU-ROC} & \textbf{Time Encoder} & \textbf{Wikipedia} & \textbf{Reddit} \\
\midrule
\multirow{2}{*}{TGAT} 
  & Sinusoidal & 84.08{\scriptsize$\pm$1.90} & 68.77{\scriptsize$\pm$1.49} \\
  & Linear     & \textbf{86.35}{\scriptsize$\pm$0.59} & \textbf{70.18}{\scriptsize$\pm$1.79} \\
\midrule
\multirow{2}{*}{DyGFormer} 
  & Sinusoidal & \textbf{87.54}{\scriptsize$\pm$1.90} & \textbf{68.76}{\scriptsize$\pm$2.13} \\
  & Linear     & 86.33{\scriptsize$\pm$0.97} & 67.51{\scriptsize$\pm$1.02} \\
\bottomrule
\end{tabular}}
\label{tab:dynamic_node_cls}
\end{table}

\subsection{Sinusoidal Time Encoding vs. Sinusoidal Positional Encoding}
To highlight the importance of temporal information beyond mere sequential order, we conduct a separate experiment comparing sinusoidal time encodings with sinusoidal positional encodings. The latter encodes the positional indices of events rather than their time values. Table \ref{table:time_vs_pos_random} presents the results. We observe that time encoding outperforms positional encoding in 10 out of 12 cases, indicating that temporal features are generally beneficial. Among all datasets, Reddit appears to be the least affected by the switch, with changes in AP scores below 0.1. This suggests that temporal features are less critical in Reddit, which aligns with our observations in Section \ref{subsubsec:reduce_dt} Figure \ref{fig:tgat_dt}.

\subsection{Training Time and Memory Usage}
For the main experiments, we measure the training time per epoch and peak GPU memory usage across datasets and model variants and present them in Table \ref{tab:per_epoch_training_time} and \ref{tab:gpu_memory_usage}. In general, models using the sinusoidal time encoder exhibit longer runtimes due to the additional element-wise cosine computation. Furthermore, the sinusoidal encoder tends to incur higher peak GPU memory usage for two main reasons. First, the cosine activation introduces an additional intermediate tensor during computation. Second, the linear time encoder dimension can be reduced to 1 for DyGFormer variants due to the subsequent linear projection layer as mentioned in Section \ref{subsubsec:metric_hparam}, resulting in lower memory usage than the sinusoidal encoder.

\subsection{Beyond Pure Self-Attention Architectures}
We conduct smaller-scale experiments to test if the effectiveness of the linear time encoder generalizes to model architectures beyond TGAT and DyGFormer. We experiment with a memory-based model, TGN \citep{tgn_icml_grl2020}, and a random walk-based model, CAWN \citep{wang2021inductive}. The results are shown in Table \ref{tab:tgn_cawn}. The linear time encoder outperforms the default sinusoidal time encoder in 7 out of 8 evaluated cases. These results are consistent with our earlier findings on TGAT and the DyGFormer variants.

\subsection{Dynamic Node Classification}
In this work, we focus primarily on future link prediction, as it is the most widely studied task in dynamic graph learning and benefits from a wealth of publicly available datasets. This task is also fundamental, as it evaluates a model’s ability to capture the inherent temporal structure of dynamic graphs. To complement our main results, we conduct smaller-scale experiments on dynamic node classification using the two datasets provided in DyGLib \citep{yu2023towards}: Wikipedia and Reddit. Following DyGLib's protocol, we first train the temporal embedding backbone using the future link prediction objective, then train a node classification head while keeping the backbone fixed. The results are presented in Table \ref{tab:dynamic_node_cls}. On these datasets, TGAT performs better with the linear time encoder, while DyGFormer performs better with the sinusoidal time encoder, suggesting that the two encoders are comparable for this task. A more comprehensive evaluation across additional datasets is needed to better understand their relative effectiveness in dynamic node classification, which we leave for future work.

\subsection{Attention Score Analysis}\label{app:attn_score_analysis}
In Section \ref{subsubsec:attn_score_analysis}, Figure \ref{fig:reddit_src_tq_tk_attn} visualizes the attention scores between query and key vectors for the historical edge event sequences of source nodes in Reddit, along with the encoded time differences $t - t_q$ and $t - t_k$, respectively. We present similar visualizations for the destination nodes of Reddit in Figure \ref{fig:reddit_dst_tq_tk_attn} and five other datasets in Figures \ref{fig:dygformer-separate_sinusoidal_attn_pattern}, \ref{fig:dygdecoder_sinusoidal_attn_pattern}, \ref{fig:dygformer-separate_linear_attn_pattern}, and \ref{fig:dygdecoder_linear_attn_pattern}. The attention patterns observed in the continuous-time datasets align with those described in Section \ref{subsubsec:attn_score_analysis}: in DyGFormer-separate, the attention scores are concentrated near $t - t_k = 0$, and in DyGDecoder, they focus around $t - t_k = t - t_q$. However, attention patterns can vary across datasets. For instance, in DyGFormer-separate, the attention scores for UCI are tightly concentrated, while for LastFM, the scores are more spread out across the $(t-t_q)$-$(t-t_k)$ plane, reflecting distinct characteristics of the datasets.

Being a discrete-time dataset, US Legis has a limited number of distinct time differences, as seen in Figures \ref{fig:attn_tq_tk_uslegis_src_dygformer_separate_sinusoidal}, \ref{fig:attn_tq_tk_uslegis_dst_dygformer_separate_sinusoidal}, \ref{fig:attn_tq_tk_uslegis_src_dygdecoder_sinusoidal}, \ref{fig:attn_tq_tk_uslegis_dst_dygdecoder_sinusoidal}, \ref{fig:attn_tq_tk_uslegis_src_dygformer_separate_linear}, \ref{fig:attn_tq_tk_uslegis_dst_dygformer_separate_linear}, \ref{fig:attn_tq_tk_uslegis_src_dygdecoder_linear}, and \ref{fig:attn_tq_tk_uslegis_dst_dygdecoder_linear}. In this case, we average all attention scores corresponding to the same $(t - t_k, t - t_q)$ pair and visualize the result. Interestingly, in DyGFormer-separate, the highest attention scores are not located at $t - t_k = 0$, but at $t - t_k = 1$. This is because the key vectors corresponding to the most recent events before the target time $t$ are located at $t - 1$, and these key vectors have the greatest influence on the node's state at time $t$. In contrast, the key vectors corresponding to the target edges lack informative edge features, as the interaction has not yet occurred. Consequently, these key vectors are less important for computing a node's temporal representation. This should also hold for continuous-time datasets, where the key vectors encoding $t-\epsilon$ should have the highest attention scores, with $\epsilon$ being a small positive quantity. However, this effect is not easily discernible in the plots due to the continuous nature of the time differences being visualized.

\begin{figure}[t] 
    \centering
    \begin{subfigure}{0.22\columnwidth}
        \centering
        \includegraphics[width=\linewidth]{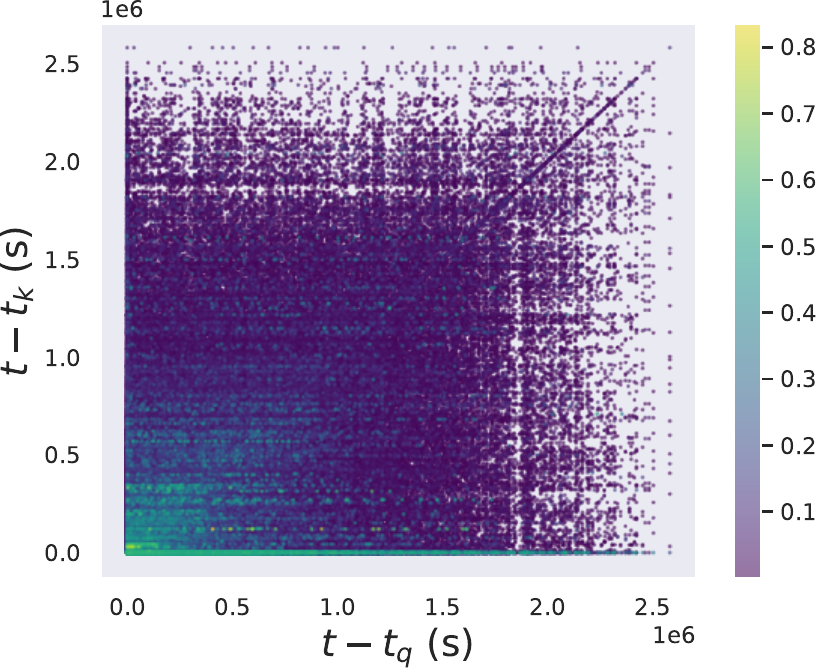}
        \caption{\centering Sinusoidal/\newline DyGFormer-separate}
        \label{fig:reddit_dygformer_separate_sinusoidal_dst_tq_tk_attn}
    \end{subfigure}
    \hspace{2pt}
    \begin{subfigure}{0.22\columnwidth}
        \centering
        \includegraphics[width=\linewidth]{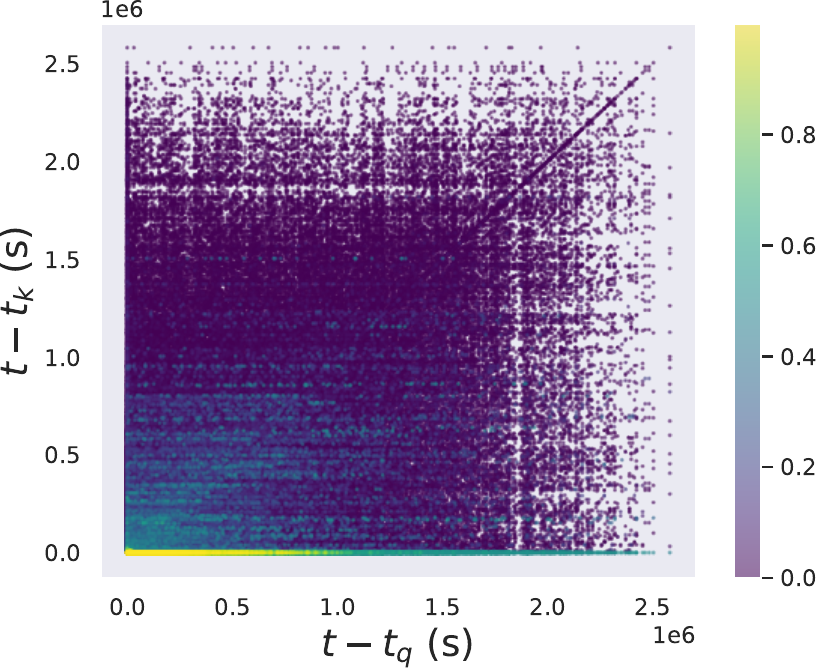}
        \caption{\centering Linear/\newline DyGFormer-separate}
        \label{fig:reddit_dygformer_separate_linear_dst_tq_tk_attn}
    \end{subfigure}
    \hspace{2pt}
    \begin{subfigure}{0.22\columnwidth}
        \centering
        \includegraphics[width=\linewidth]{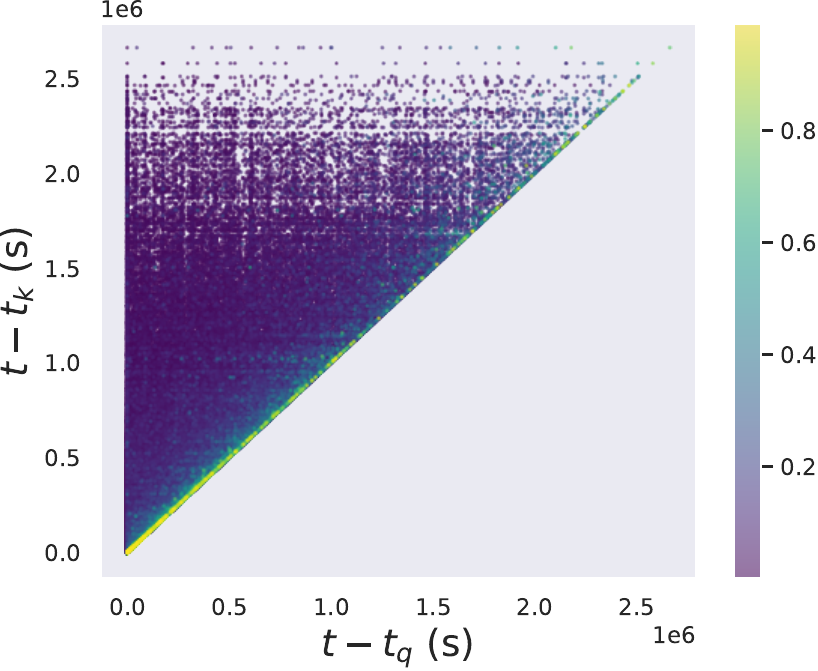}
        \caption{\centering Sinusoidal/\newline DyGDecoder}
    \end{subfigure}
    \hspace{2pt}
    \begin{subfigure}{0.22\columnwidth}
        \centering
        \includegraphics[width=\linewidth]{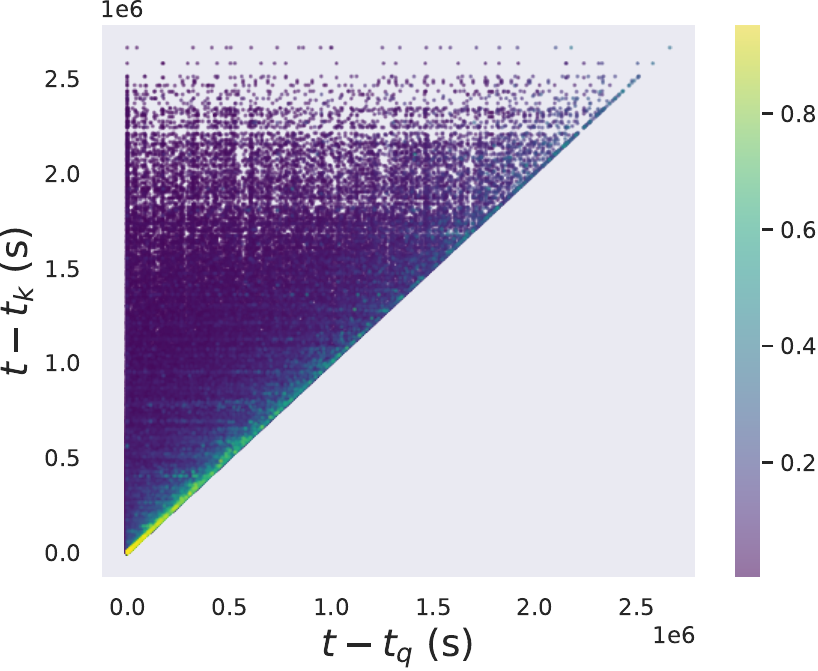}
        \caption{\centering Linear/\newline DyGDecoder}
    \end{subfigure}
    \caption{Attention scores (expressed in colors) of DyGFormer-separate and DyGDecoder with the sinusoidal and linear time encoders vs. $t-t_k$ vs. $t-t_q$ of the historical edge event sequences of \textbf{Reddit's destination nodes}.}
    \label{fig:reddit_dst_tq_tk_attn}
\end{figure}

\begin{figure*}[t]
    \centering
    \captionsetup{font=small}
    \begin{subfigure}[b]{0.18\textwidth}
        \centering
        \includegraphics[width=\textwidth]{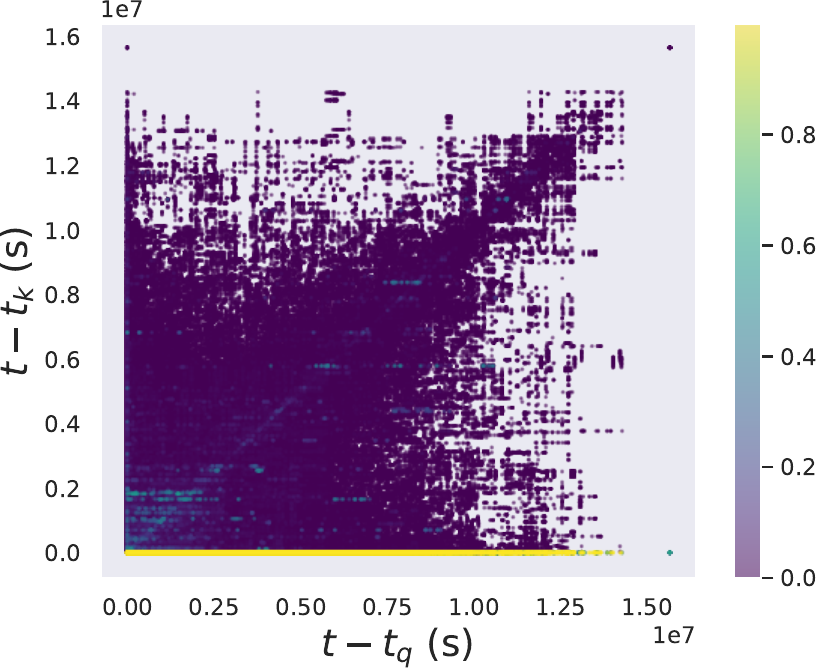}
        \caption{UCI/source}
    \end{subfigure}
    \hfill
    \begin{subfigure}[b]{0.18\textwidth}
        \centering
        \includegraphics[width=\textwidth]{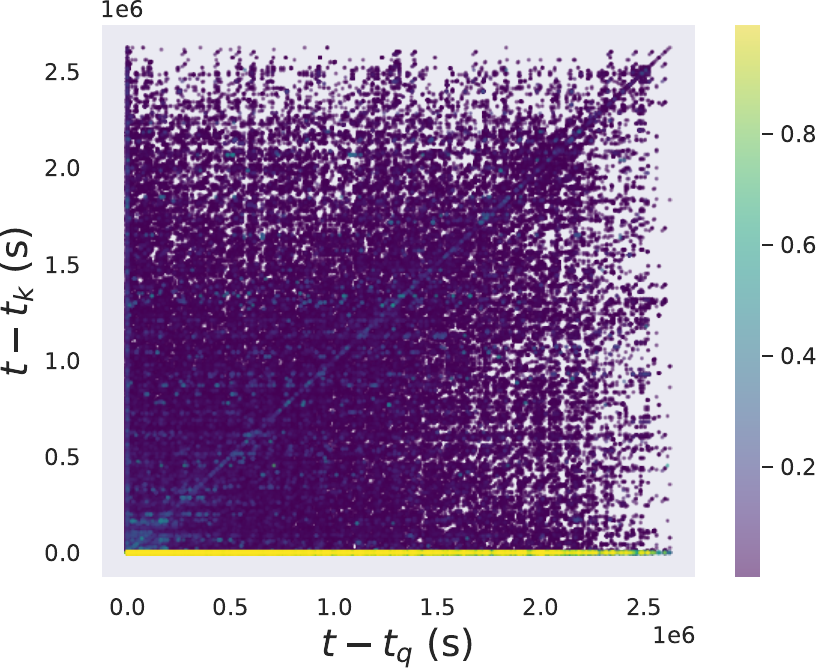}
        \caption{Wikipedia/source}
    \end{subfigure}
    \hfill
    \begin{subfigure}[b]{0.18\textwidth}
        \centering
        \includegraphics[width=\textwidth]{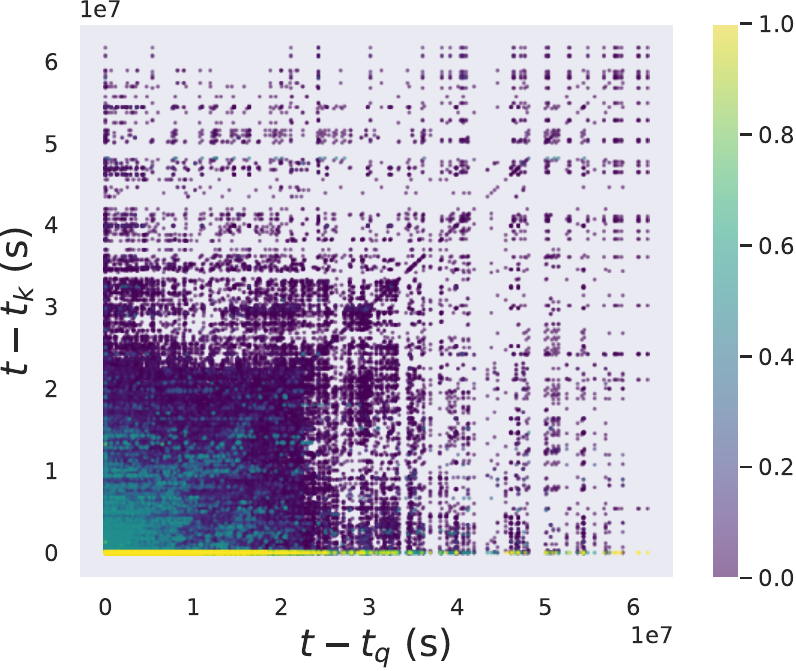}
        \caption{Enron/source}
    \end{subfigure}
    \hfill
    \begin{subfigure}[b]{0.18\textwidth}
        \centering
        \includegraphics[width=\textwidth]{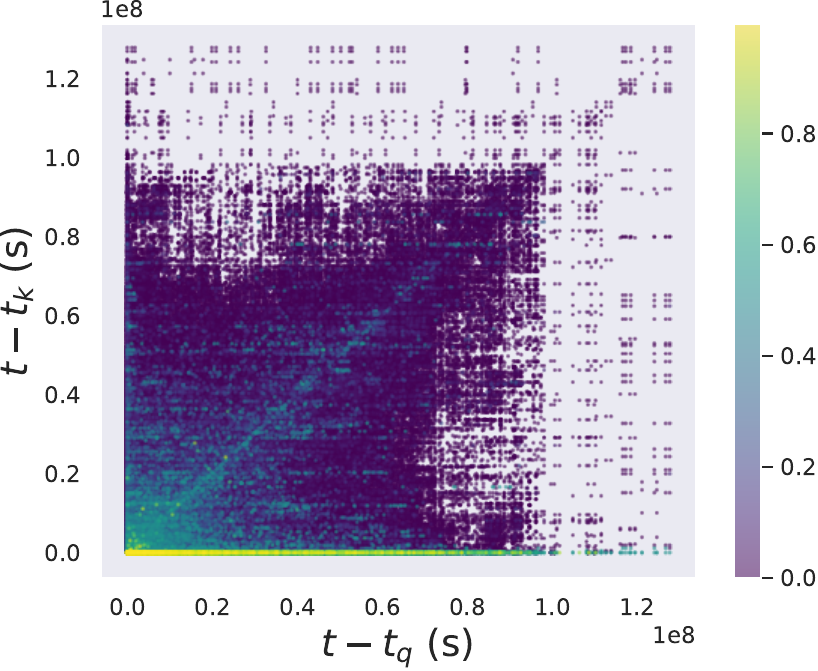}
        \caption{LastFM/source}
    \end{subfigure}
    \hfill
    \begin{subfigure}[b]{0.18\textwidth}
        \centering
        \includegraphics[width=\textwidth]{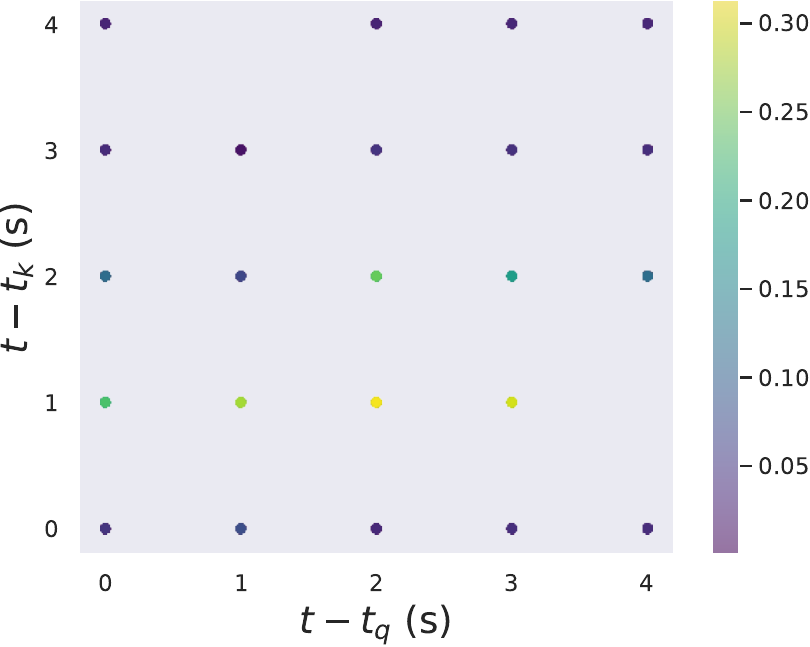}
        \caption{US Legis/source}
        \label{fig:attn_tq_tk_uslegis_src_dygformer_separate_sinusoidal}
    \end{subfigure}

    \vskip \baselineskip

    \begin{subfigure}[b]{0.18\textwidth}
        \centering
        \includegraphics[width=\textwidth]{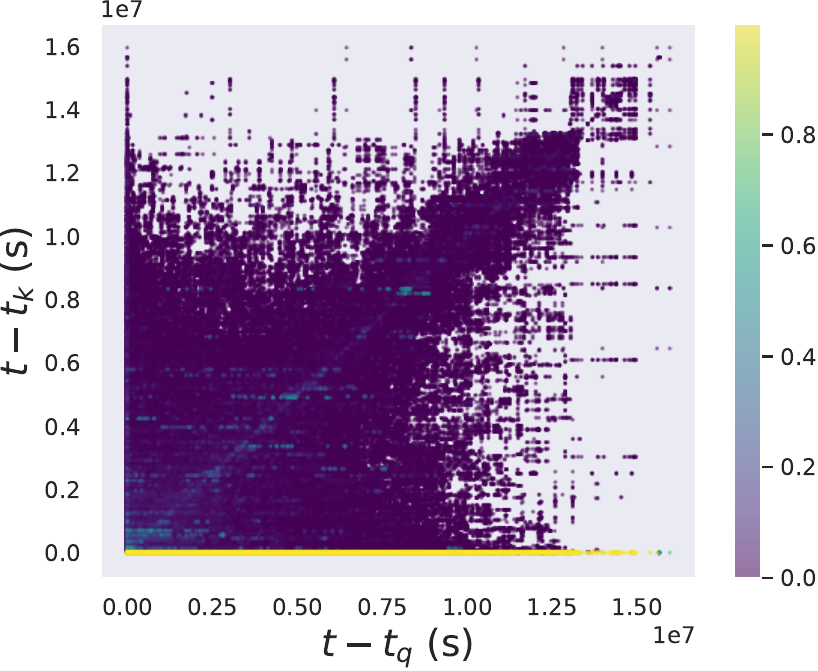} 
        \caption{UCI/dest}
    \end{subfigure}
    \hfill
    \begin{subfigure}[b]{0.18\textwidth}
        \centering
        \includegraphics[width=\textwidth]{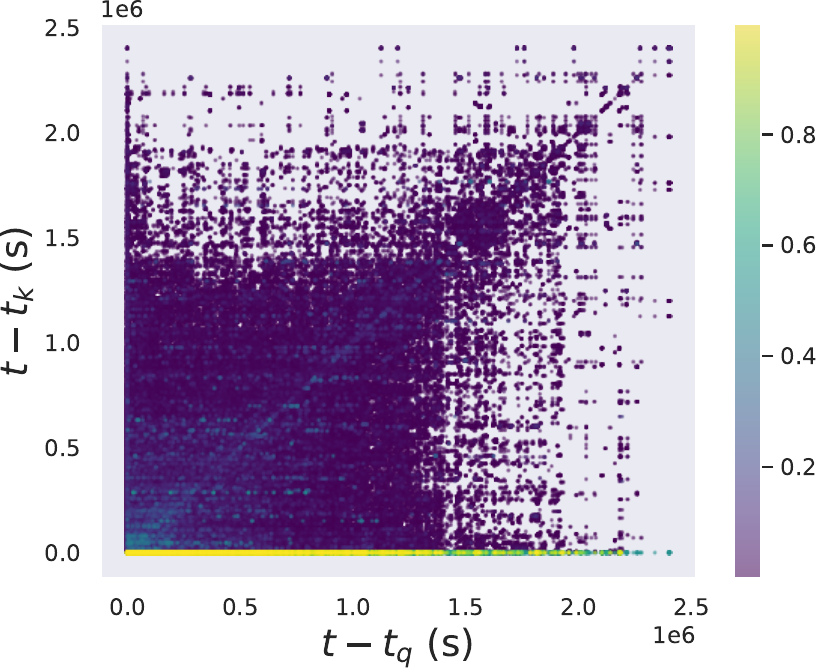}
        \caption{Wikipedia/dest}
    \end{subfigure}
    \hfill
    \begin{subfigure}[b]{0.18\textwidth}
        \centering
        \includegraphics[width=\textwidth]{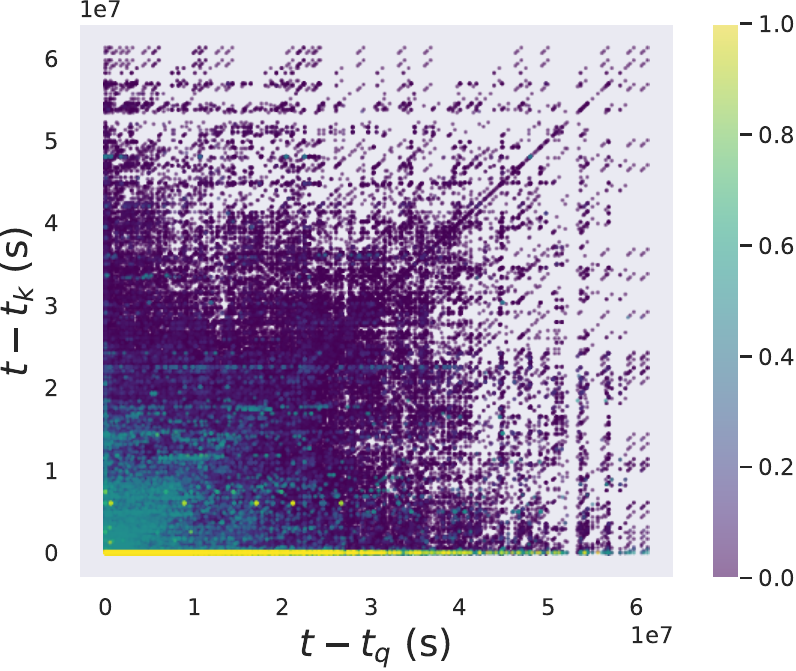}
        \caption{Enron/dest}
    \end{subfigure}
    \hfill
    \begin{subfigure}[b]{0.18\textwidth}
        \centering
        \includegraphics[width=\textwidth]{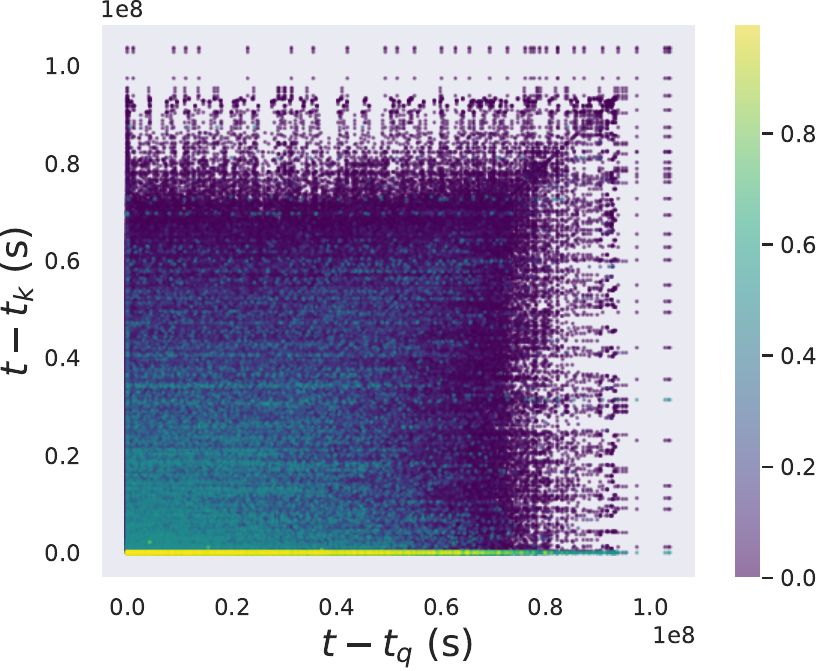}
        \caption{LastFM/dest}
    \end{subfigure}
    \hfill
    \begin{subfigure}[b]{0.18\textwidth}
        \centering
        \includegraphics[width=\textwidth]{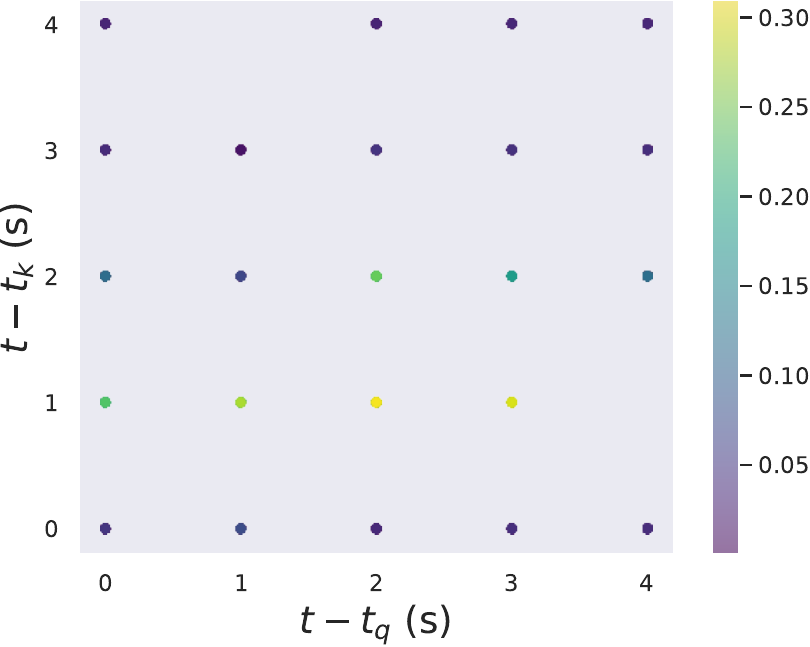} 
        \caption{US Legis/dest}
        \label{fig:attn_tq_tk_uslegis_dst_dygformer_separate_sinusoidal}
    \end{subfigure}
    
    \caption{Attention scores (expressed in colors) of \textbf{DyGFormer-separate} with the \textbf{sinusoidal time encoder} vs. $t-t_k$ vs. $t-t_q$ of the historical edge event sequences of nodes in UCI, Wikipedia, Enron, LastFM, and US Legis. The upper and bottom rows are the results of the source and destination nodes.}
    \label{fig:dygformer-separate_sinusoidal_attn_pattern}
\end{figure*}

\begin{figure*}[t]
    \centering
    \captionsetup{font=small}
    \begin{subfigure}[b]{0.18\textwidth}
        \centering
        \includegraphics[width=\textwidth]{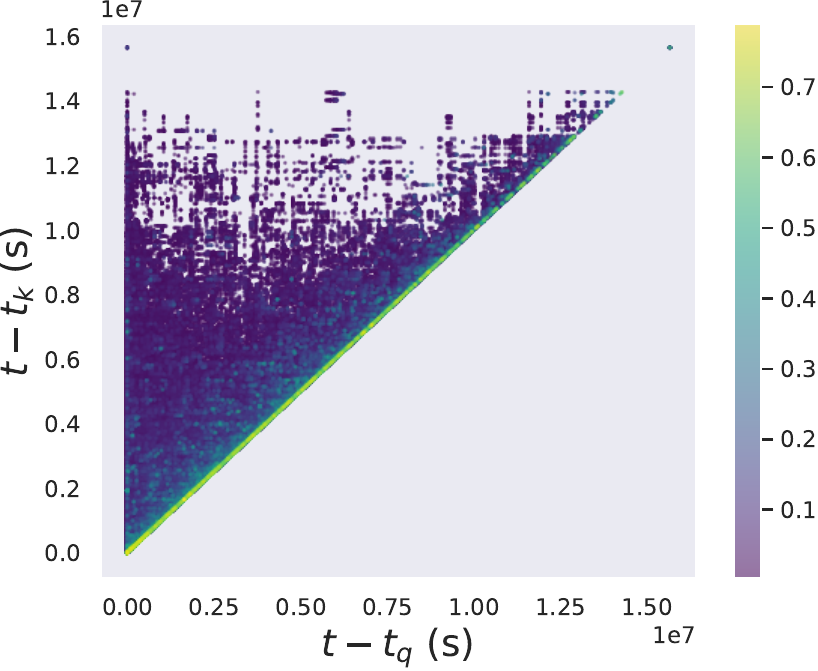} 
        \caption{UCI/source}
    \end{subfigure}
    \hfill
    \begin{subfigure}[b]{0.18\textwidth}
        \centering
        \includegraphics[width=\textwidth]{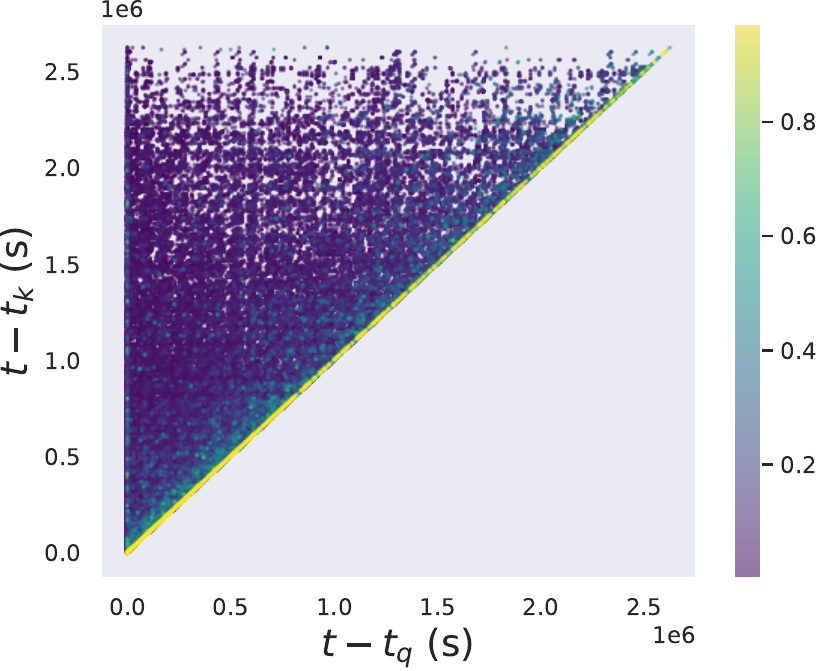}
        \caption{Wikipedia/source}
    \end{subfigure}
    \hfill
    \begin{subfigure}[b]{0.18\textwidth}
        \centering
        \includegraphics[width=\textwidth]{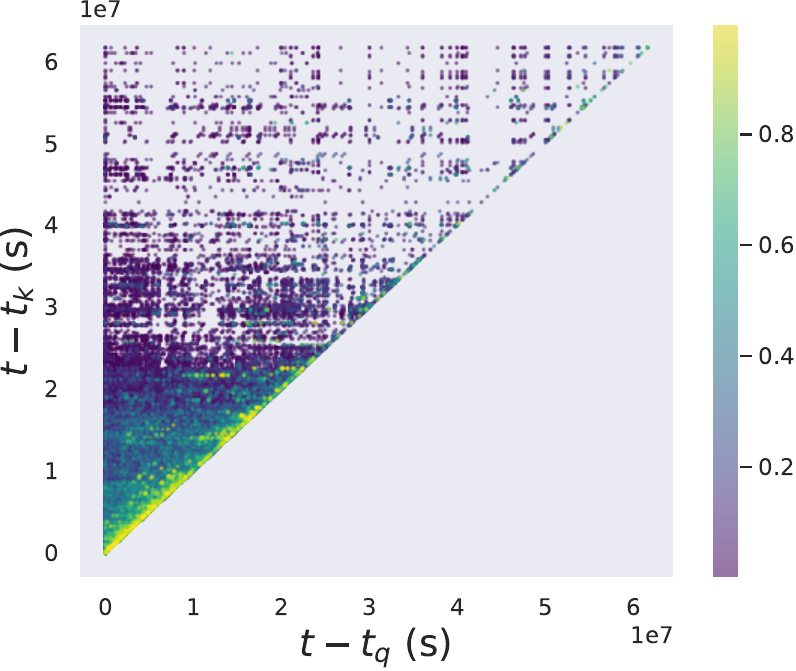}
        \caption{Enron/source}
    \end{subfigure}
    \hfill
    \begin{subfigure}[b]{0.18\textwidth}
        \centering
        \includegraphics[width=\textwidth]{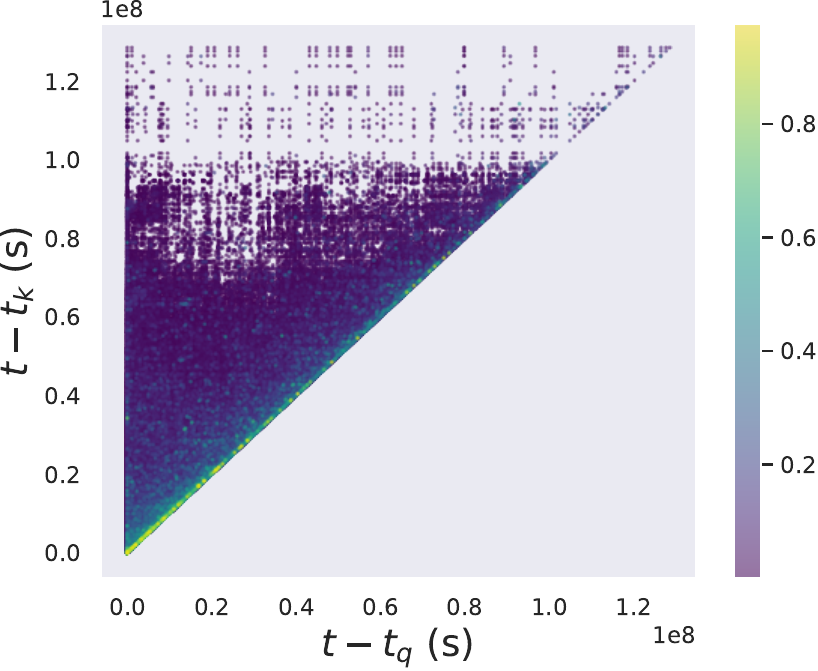}
        \caption{LastFM/source}
    \end{subfigure}
    \hfill
    \begin{subfigure}[b]{0.18\textwidth}
        \centering
        \includegraphics[width=\textwidth]{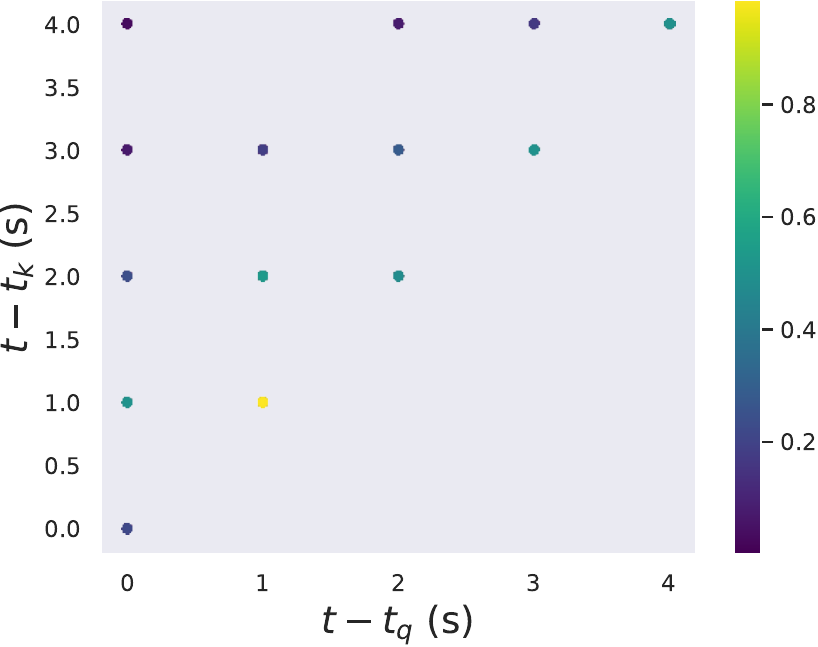}
        \caption{US Legis/source}
        \label{fig:attn_tq_tk_uslegis_src_dygdecoder_sinusoidal}
    \end{subfigure}

    \vskip \baselineskip

    \begin{subfigure}[b]{0.18\textwidth}
        \centering
        \includegraphics[width=\textwidth]{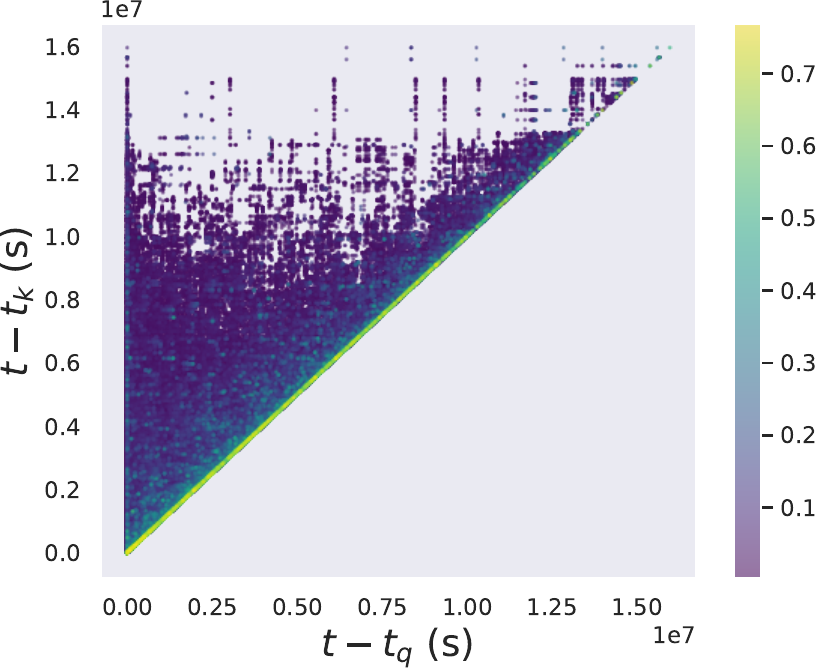}
        \caption{UCI/dest}
    \end{subfigure}
    \hfill
    \begin{subfigure}[b]{0.18\textwidth}
        \centering
        \includegraphics[width=\textwidth]{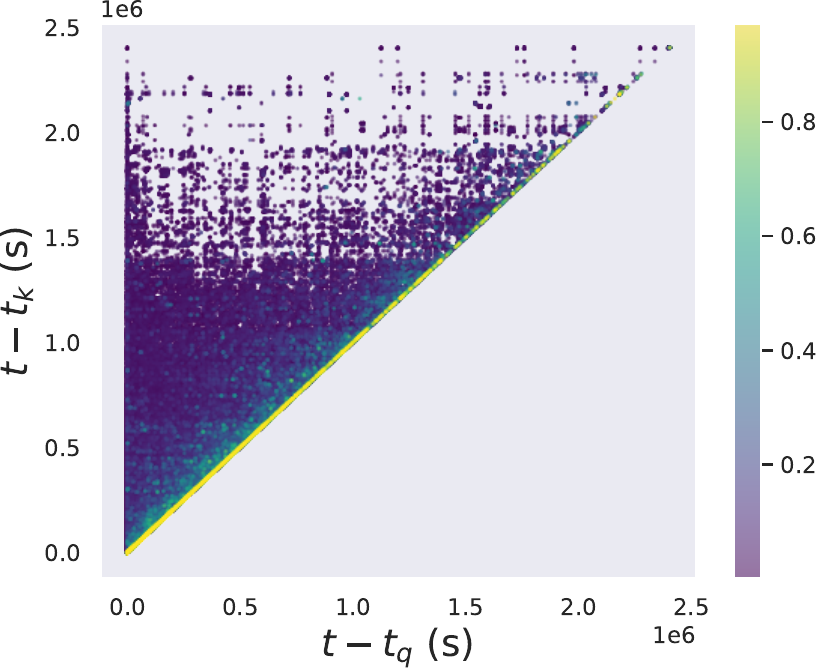}
        \caption{Wikipedia/dest}
    \end{subfigure}
    \hfill
    \begin{subfigure}[b]{0.18\textwidth}
        \centering
        \includegraphics[width=\textwidth]{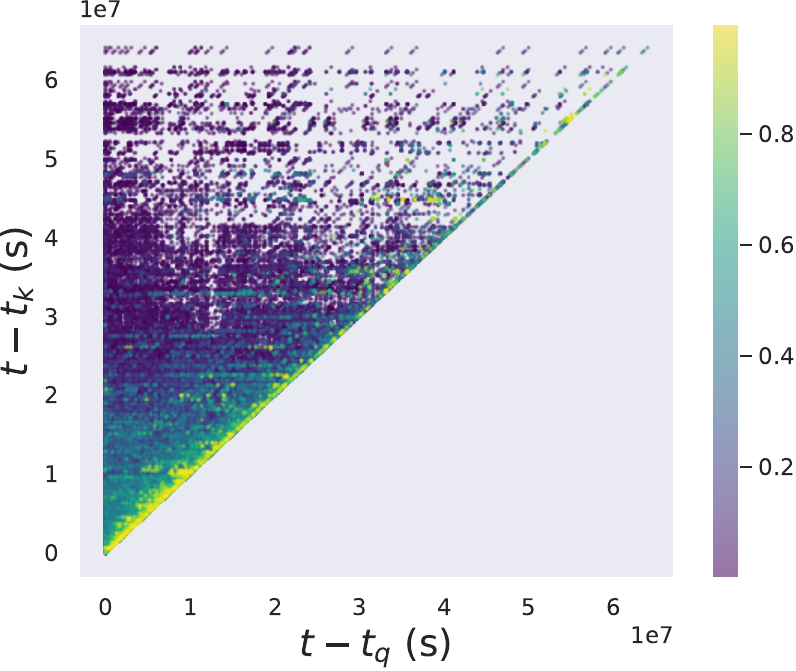}
        \caption{Enron/dest}
    \end{subfigure}
    \hfill
    \begin{subfigure}[b]{0.18\textwidth}
        \centering
        \includegraphics[width=\textwidth]{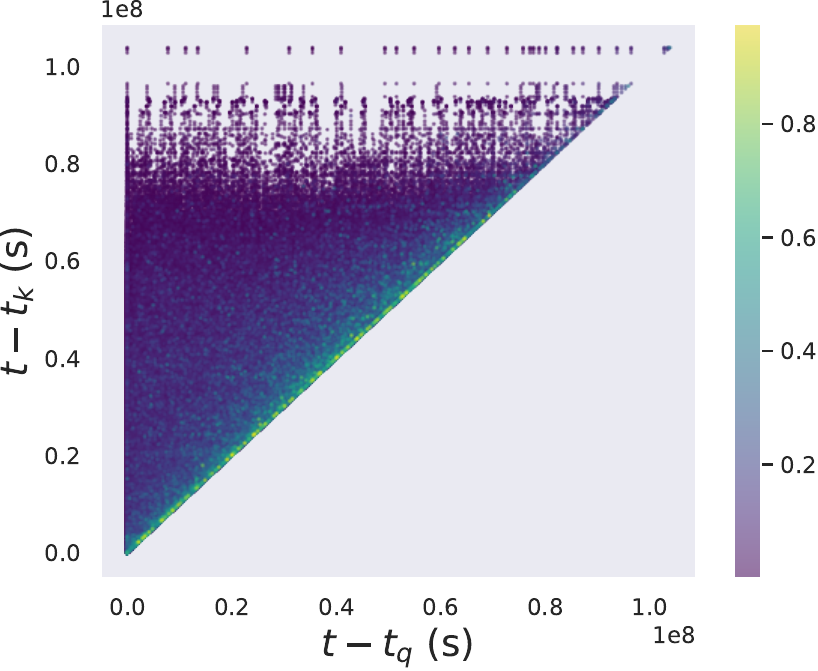}
        \caption{LastFM/dest}
    \end{subfigure}
    \hfill
    \begin{subfigure}[b]{0.18\textwidth}
        \centering
        \includegraphics[width=\textwidth]{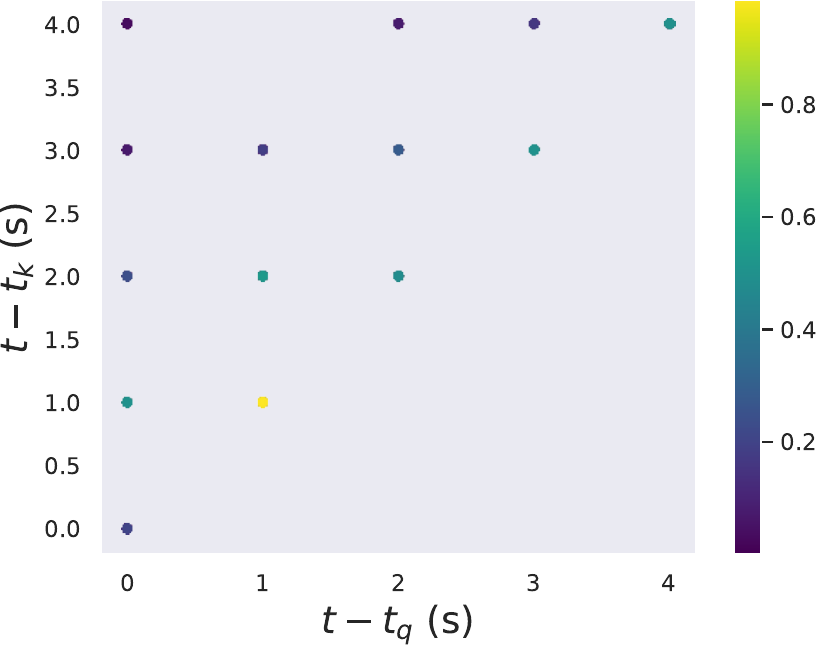}
        \caption{US Legis/dest}
        \label{fig:attn_tq_tk_uslegis_dst_dygdecoder_sinusoidal}
    \end{subfigure}
    
    \caption{Attention scores (expressed in colors) of \textbf{DyGDecoder} with the \textbf{sinusoidal time encoder} vs. $t-t_k$ vs. $t-t_q$ of the historical edge event sequences of nodes in UCI, Wikipedia, Enron, LastFM, and US Legis. The upper and bottom rows are the results of the source and destination nodes.}
    \label{fig:dygdecoder_sinusoidal_attn_pattern}
\end{figure*}

\begin{figure*}[t]
    \centering
    \captionsetup{font=small}
    \begin{subfigure}[b]{0.18\textwidth}
        \centering
        \includegraphics[width=\textwidth]{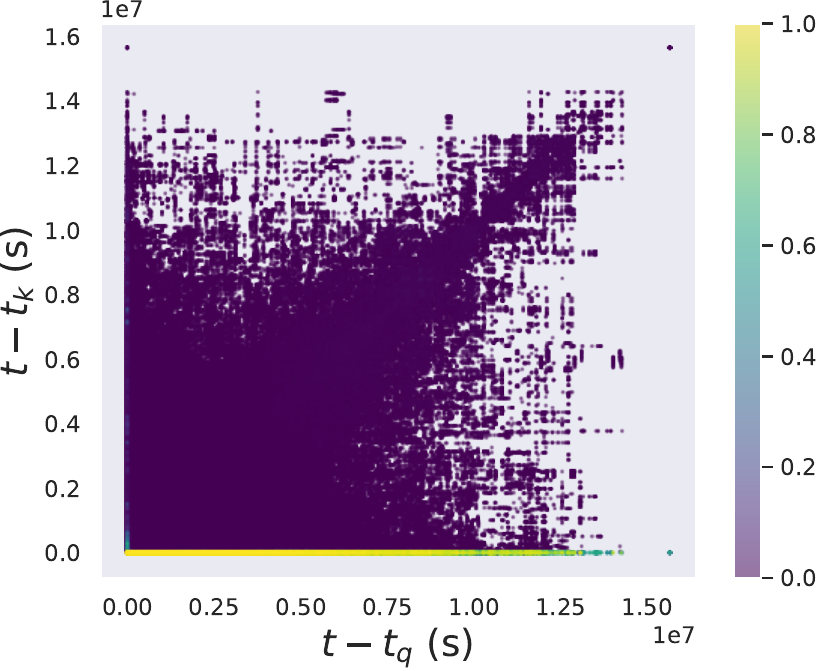}
        \caption{UCI/source}
    \end{subfigure}
    \hfill
    \begin{subfigure}[b]{0.18\textwidth}
        \centering
        \includegraphics[width=\textwidth]{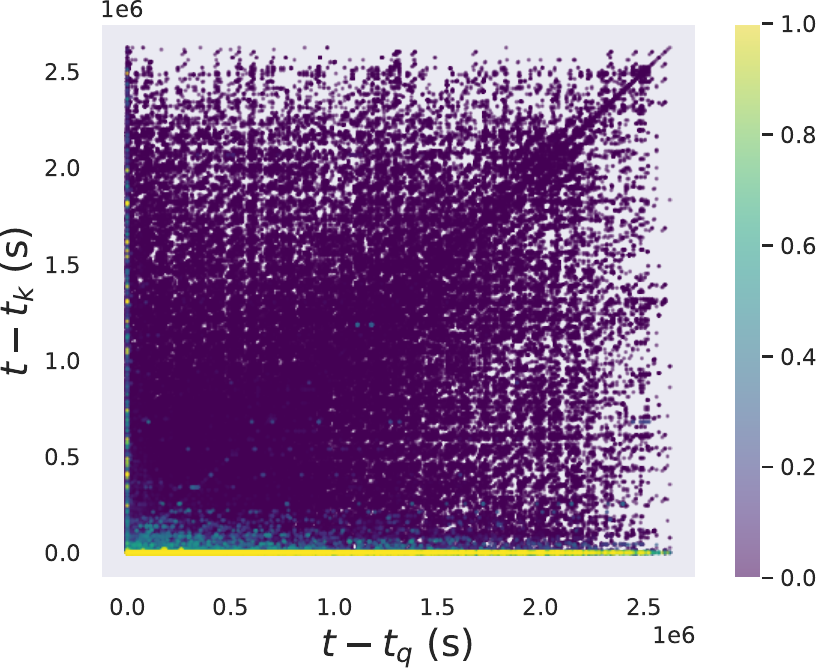}
        \caption{Wikipedia/source}
    \end{subfigure}
    \hfill
    \begin{subfigure}[b]{0.18\textwidth}
        \centering
        \includegraphics[width=\textwidth]{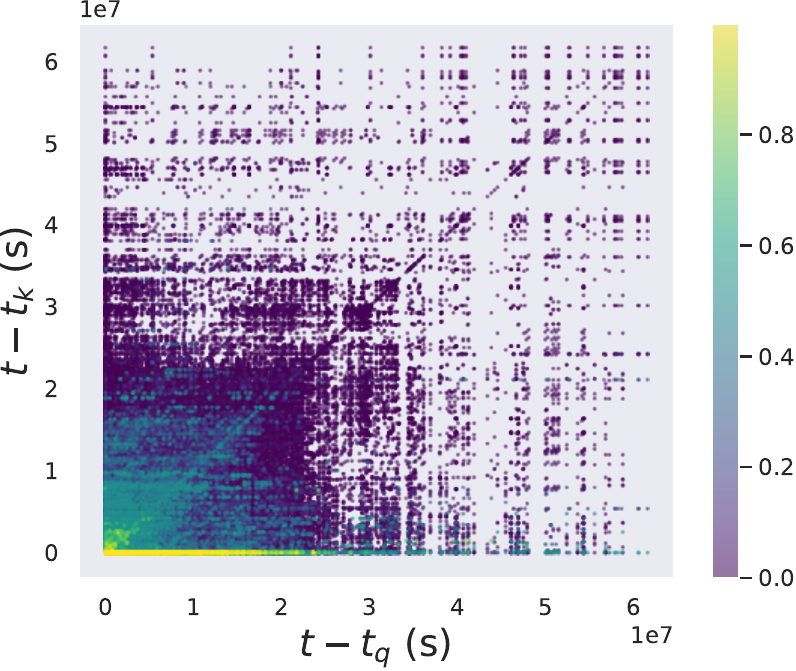}
        \caption{Enron/source}
    \end{subfigure}
    \hfill
    \begin{subfigure}[b]{0.18\textwidth}
        \centering
        \includegraphics[width=\textwidth]{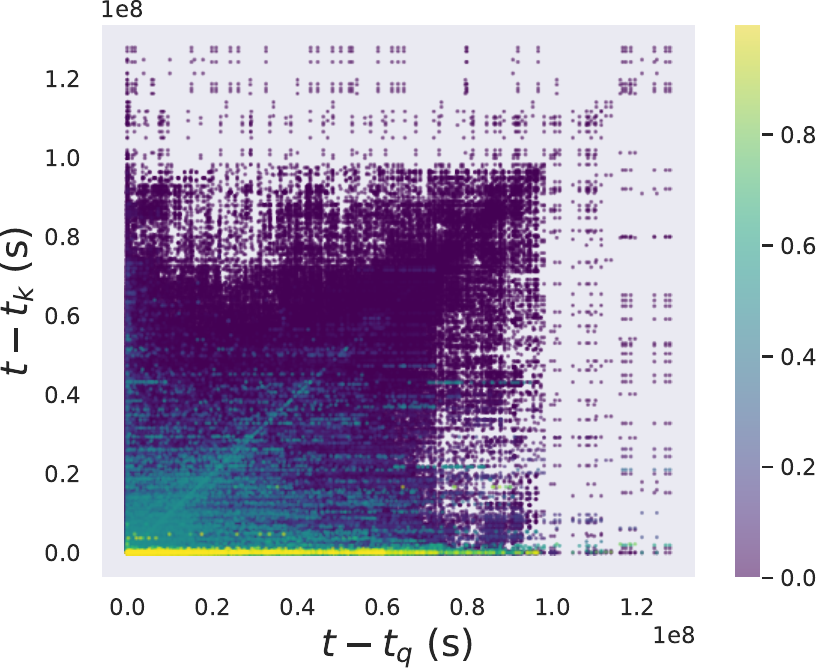}
        \caption{LastFM/source}
    \end{subfigure}
    \hfill
    \begin{subfigure}[b]{0.18\textwidth}
        \centering
        \includegraphics[width=\textwidth]{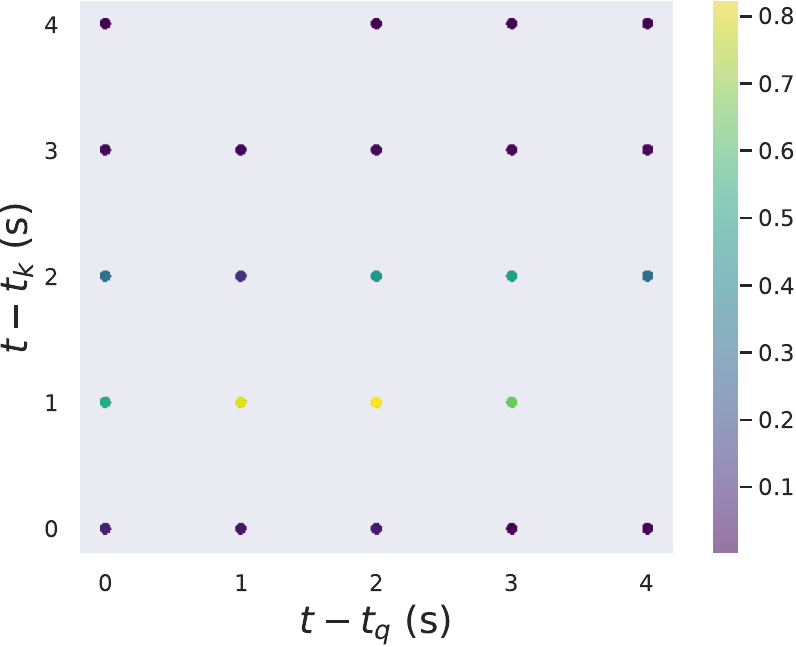}
        \caption{US Legis/source}
        \label{fig:attn_tq_tk_uslegis_src_dygformer_separate_linear}
    \end{subfigure}

    \vskip \baselineskip

    \begin{subfigure}[b]{0.18\textwidth}
        \centering
        \includegraphics[width=\textwidth]{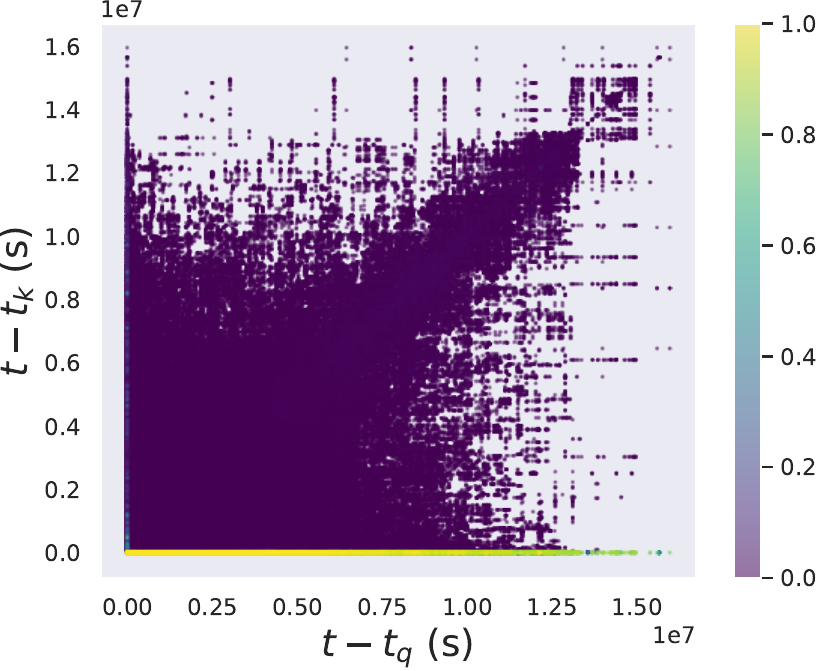} 
        \caption{UCI/dest}
    \end{subfigure}
    \hfill
    \begin{subfigure}[b]{0.18\textwidth}
        \centering
        \includegraphics[width=\textwidth]{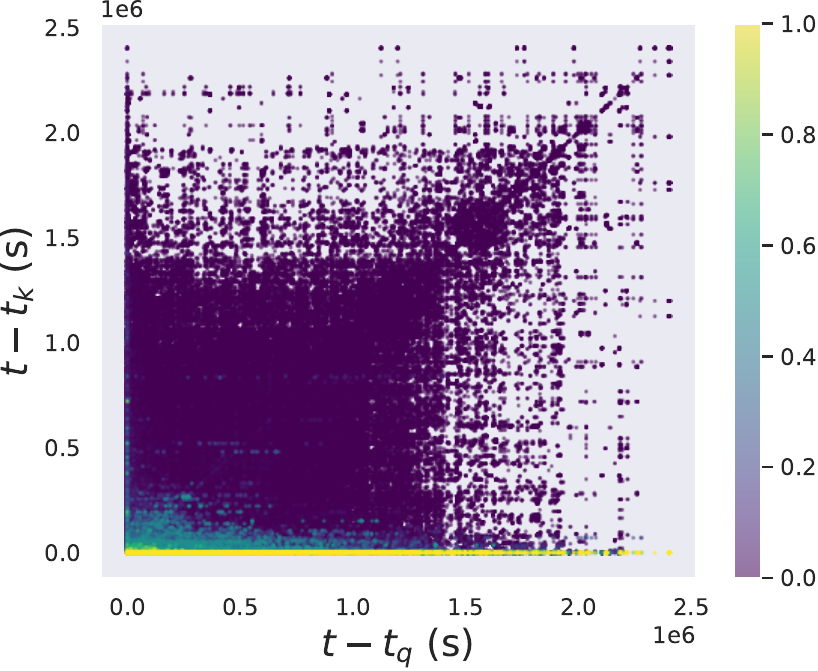}
        \caption{Wikipedia/dest}
    \end{subfigure}
    \hfill
    \begin{subfigure}[b]{0.18\textwidth}
        \centering
        \includegraphics[width=\textwidth]{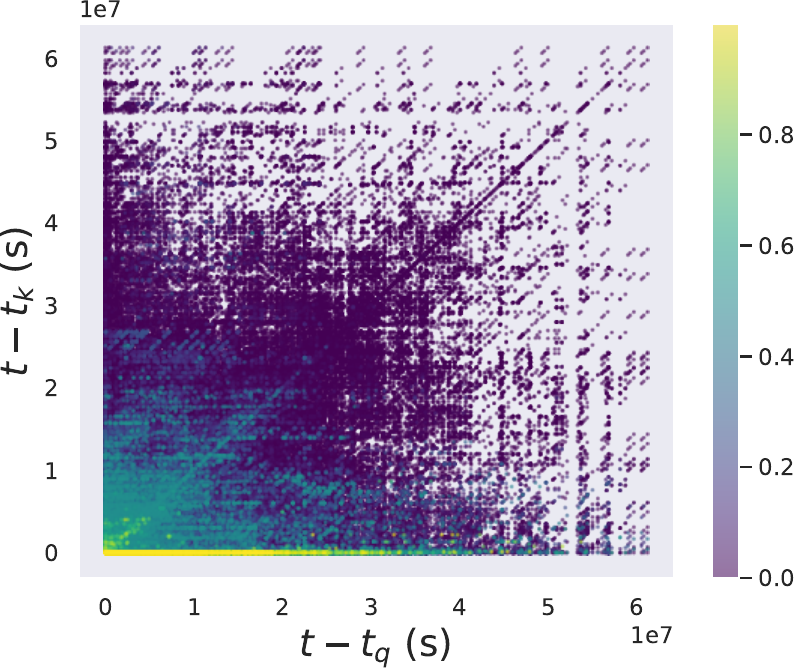}
        \caption{Enron/dest}
    \end{subfigure}
    \hfill
    \begin{subfigure}[b]{0.18\textwidth}
        \centering
        \includegraphics[width=\textwidth]{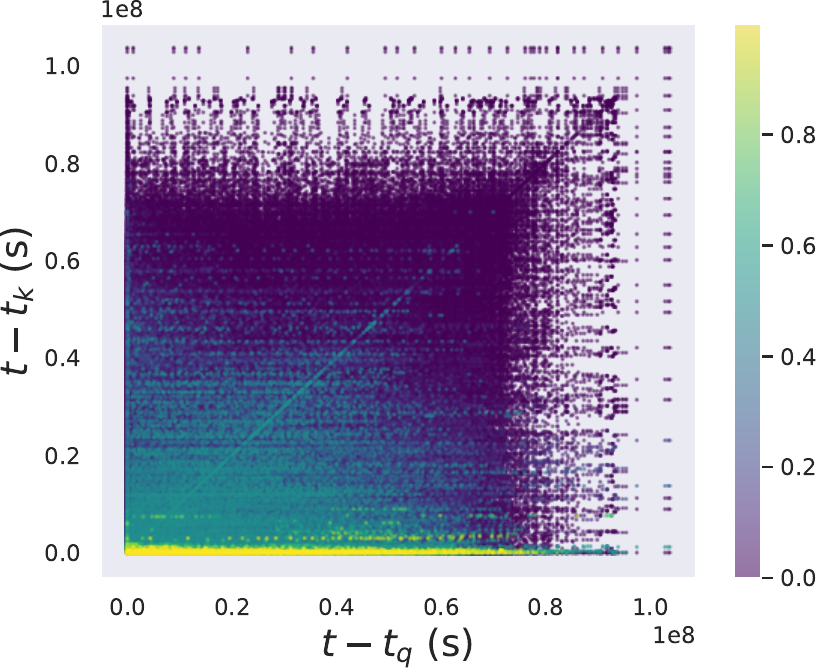}
        \caption{LastFM/dest}
    \end{subfigure}
    \hfill
    \begin{subfigure}[b]{0.18\textwidth}
        \centering
        \includegraphics[width=\textwidth]{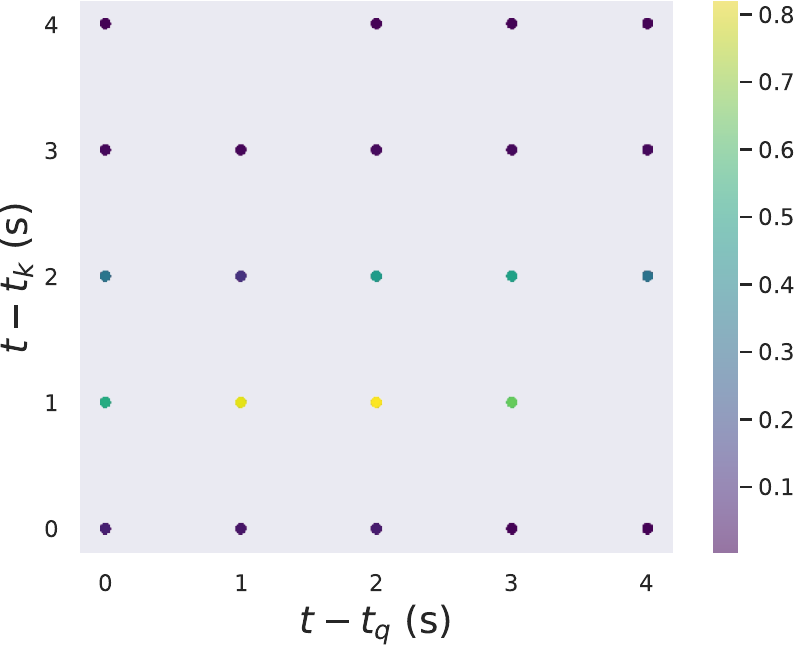} 
        \caption{US Legis/dest}
        \label{fig:attn_tq_tk_uslegis_dst_dygformer_separate_linear}
    \end{subfigure}
    
    \caption{Attention scores (expressed in colors) of \textbf{DyGFormer-separate} with the \textbf{linear time encoder} vs. $t-t_k$ vs. $t-t_q$ of the historical edge event sequences of nodes in UCI, Wikipedia, Enron, LastFM, and US Legis. The upper and bottom rows are the results of the source and destination nodes.}
    \label{fig:dygformer-separate_linear_attn_pattern}
\end{figure*}

\begin{figure*}[t]
    \centering
    \captionsetup{font=small}
    \begin{subfigure}[b]{0.18\textwidth}
        \centering
        \includegraphics[width=\textwidth]{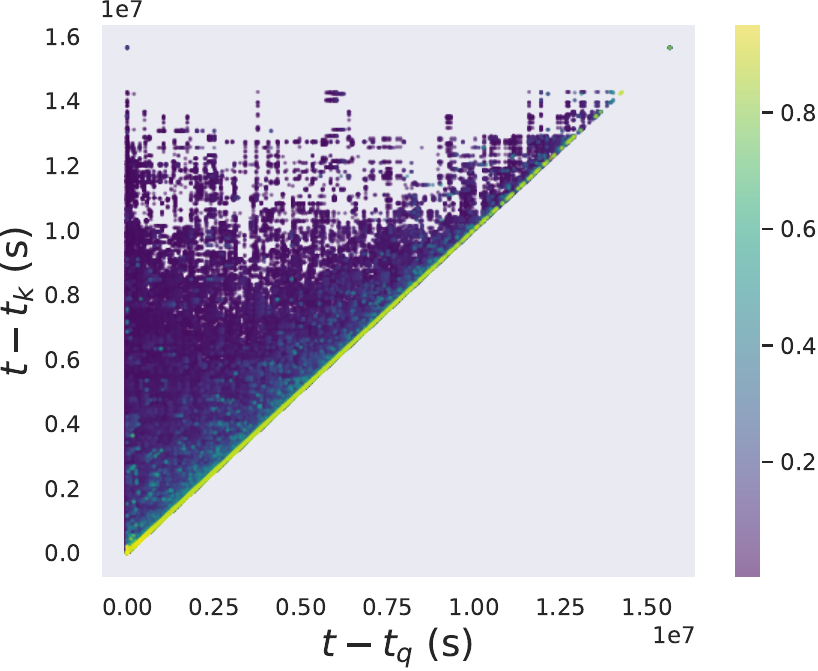} 
        \caption{UCI/source}
    \end{subfigure}
    \hfill
    \begin{subfigure}[b]{0.18\textwidth}
        \centering
        \includegraphics[width=\textwidth]{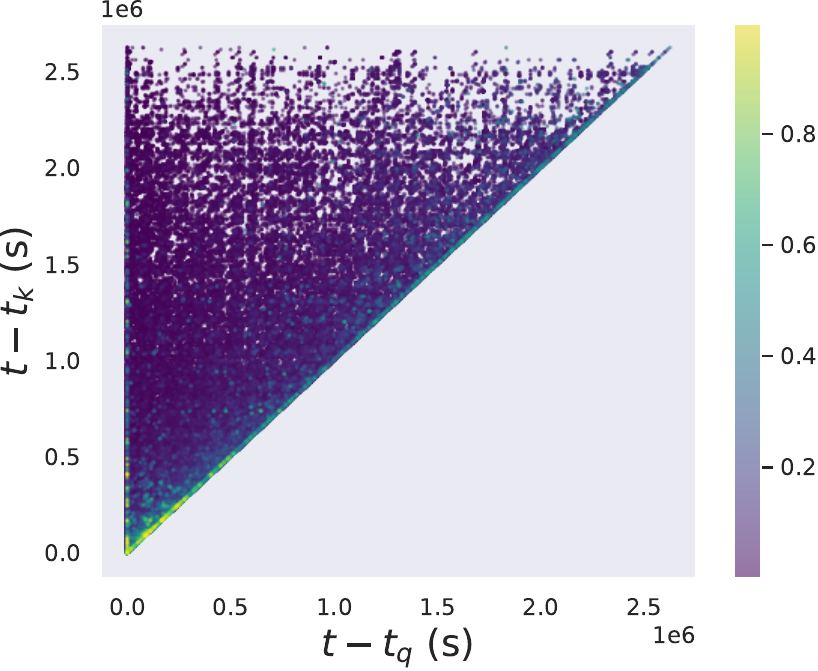}
        \caption{Wikipedia/source}
    \end{subfigure}
    \hfill
    \begin{subfigure}[b]{0.18\textwidth}
        \centering
        \includegraphics[width=\textwidth]{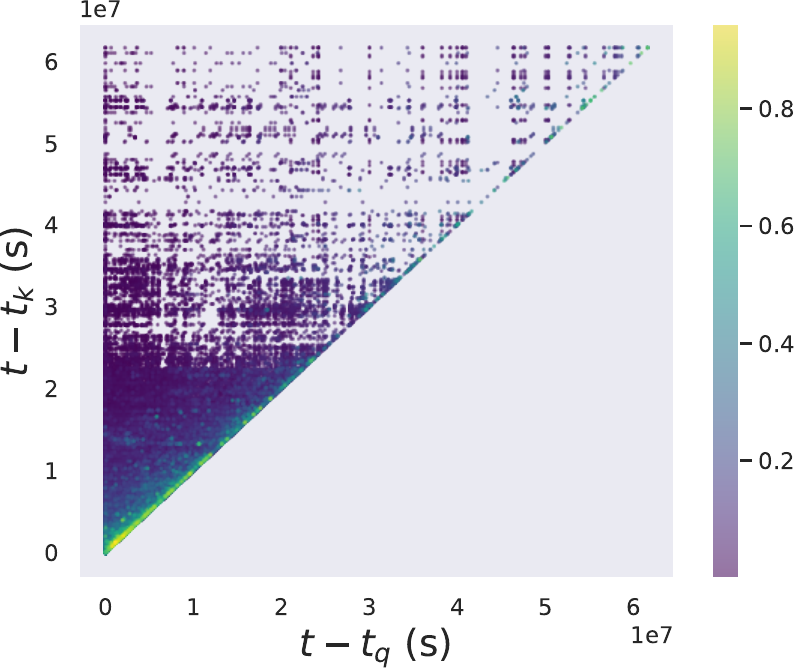}
        \caption{Enron/source}
    \end{subfigure}
    \hfill
    \begin{subfigure}[b]{0.18\textwidth}
        \centering
        \includegraphics[width=\textwidth]{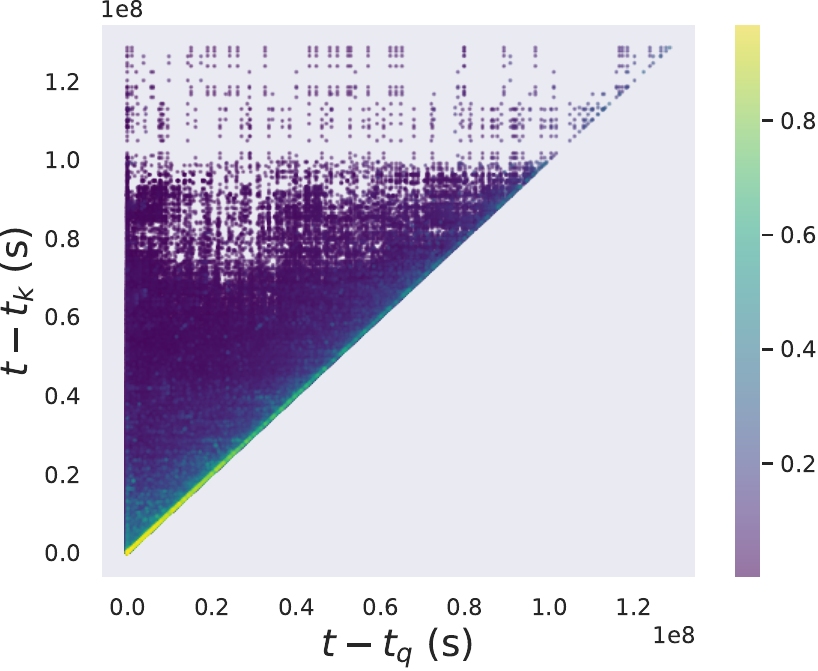}
        \caption{LastFM/source}
    \end{subfigure}
    \hfill
    \begin{subfigure}[b]{0.18\textwidth}
        \centering
        \includegraphics[width=\textwidth]{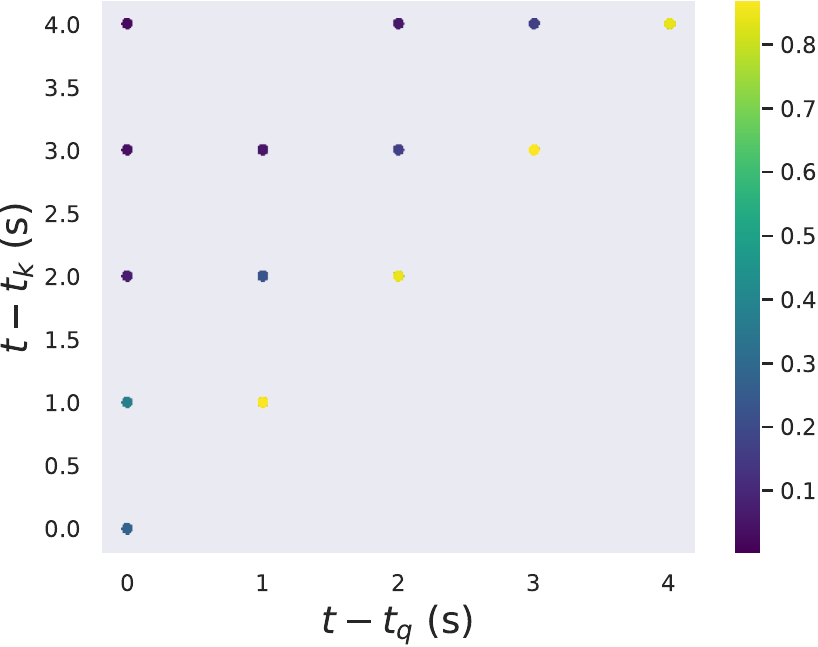}
        \caption{US Legis/source}
        \label{fig:attn_tq_tk_uslegis_src_dygdecoder_linear}
    \end{subfigure}

    \vskip \baselineskip

    \begin{subfigure}[b]{0.18\textwidth}
        \centering
        \includegraphics[width=\textwidth]{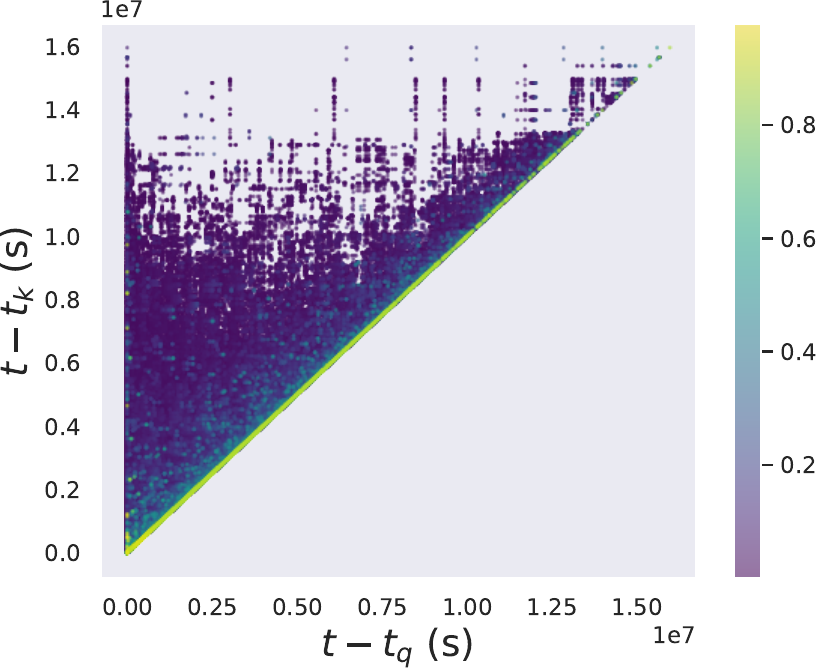}
        \caption{UCI/dest}
    \end{subfigure}
    \hfill
    \begin{subfigure}[b]{0.18\textwidth}
        \centering
        \includegraphics[width=\textwidth]{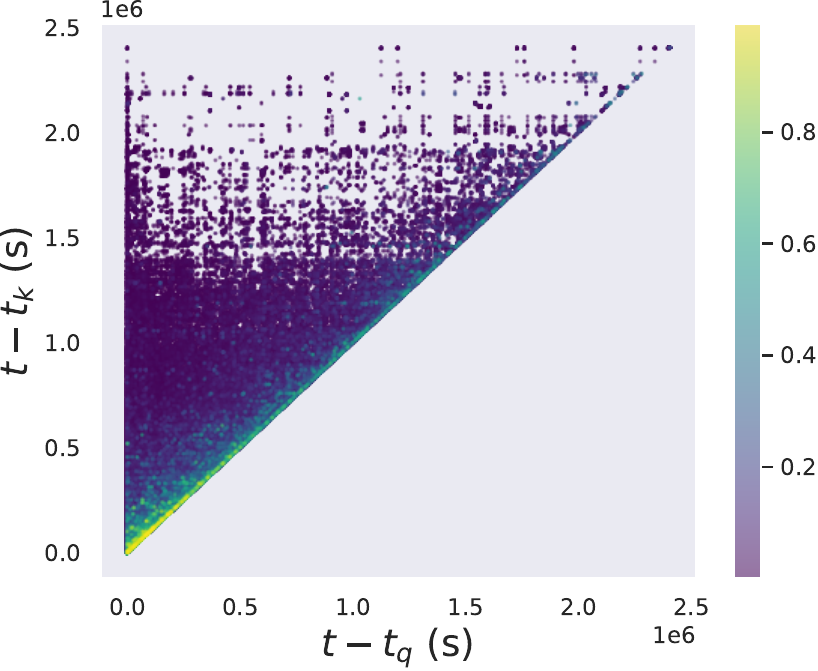}
        \caption{Wikipedia/dest}
    \end{subfigure}
    \hfill
    \begin{subfigure}[b]{0.18\textwidth}
        \centering
        \includegraphics[width=\textwidth]{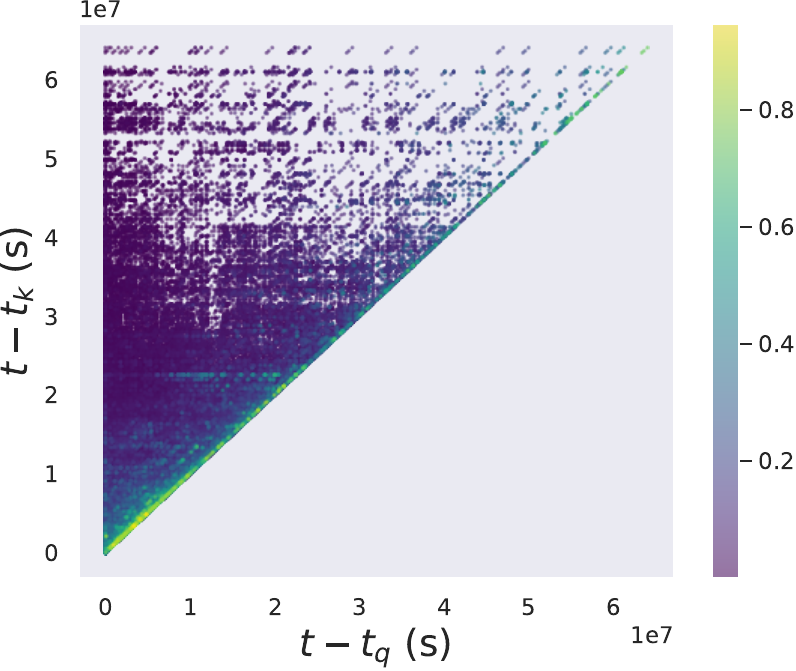}
        \caption{Enron/dest}
    \end{subfigure}
    \hfill
    \begin{subfigure}[b]{0.18\textwidth}
        \centering
        \includegraphics[width=\textwidth]{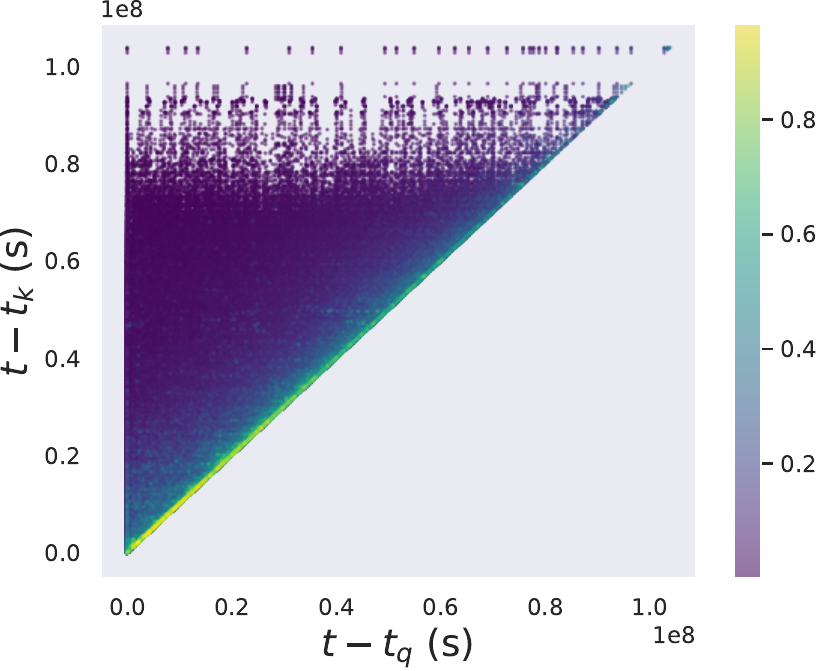}
        \caption{LastFM/dest}
    \end{subfigure}
    \hfill
    \begin{subfigure}[b]{0.18\textwidth}
        \centering
        \includegraphics[width=\textwidth]{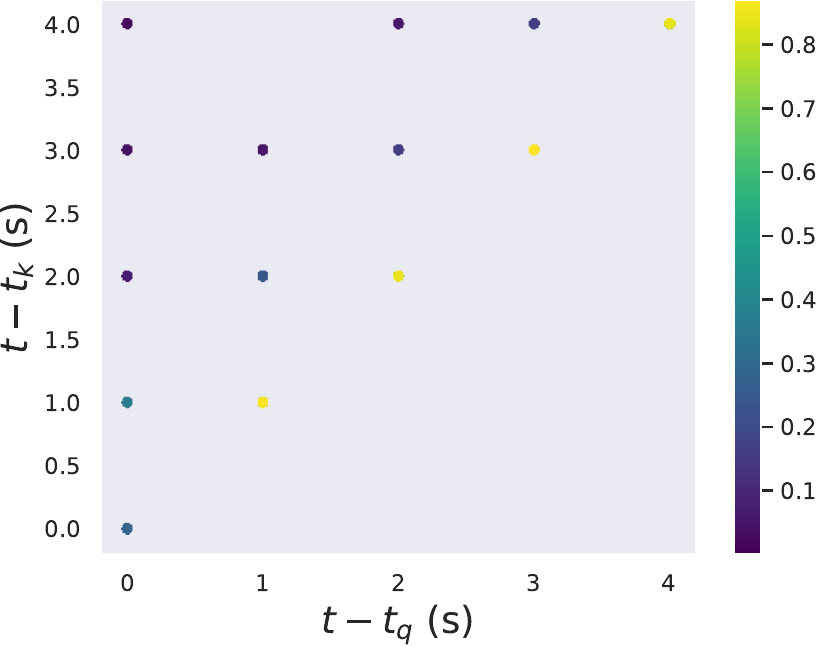}
        \caption{US Legis/dest}
        \label{fig:attn_tq_tk_uslegis_dst_dygdecoder_linear}
    \end{subfigure}
    
    \caption{Attention scores (expressed in colors) of \textbf{DyGDecoder} with the \textbf{linear time encoder} vs. $t-t_k$ vs. $t-t_q$ of the historical edge event sequences of nodes in UCI, Wikipedia, Enron, LastFM, and US Legis. The upper and bottom rows are the results of the source and destination nodes.}
    \label{fig:dygdecoder_linear_attn_pattern}
\end{figure*}

\end{document}